\newcommand{\eg}{\textit{e.g.}}
\newcommand{\ie}{\textit{i.e.}}
\newcommand{\surloss}{\smash{\widetilde{F}}}
\newcommand{\nworkers}{m}
\newcommand{\sg}{g}
\newcommand{\tg}{\nabla F}
\newcommand{\lr}{\eta}
\newcommand{\lip}{L}
\newcommand{\obj}{F}
\newcommand{\vbnd}{\sigma^2}
\newcommand{\x}{\bm{x}} 
\newcommand{\z}{\bm{z}}
\newcommand{\cpeff}{\tau_{\text{eff}}}
\newcommand{\cpmax}{\tau_{\text{max}}}
\newcommand{\cpavg}{\overline{\tau}}
\newcommand{\cp}{\tau}
\newcommand{\bnda}{\beta^2}
\newcommand{\bndb}{\kappa^2}
\newcommand{\csqdist}{\chi^2_{\bm{p}\|\bm{w}}}
\newcommand{\by}{\bm{y}}
\newcommand{\nsg}{\bm{d}}
\newcommand{\ntg}{\bm{h}}
\newcommand{\gradweight}{\bm{a}}
\newcommand{\samplenum}{q}
\newcommand{\matI}{\bm{I}}
\newcommand{\matH}{\bm{H}}
\newcommand{\matK}{\bm{K}}
\newcommand{\matG}{\bm{G}}
\newcommand{\fedavg}{\texttt{FedAvg}\xspace}
\newcommand{\fedprox}{\texttt{FedProx}\xspace}
\newcommand{\vrlsgd}{\texttt{VRLSGD}\xspace}
\newcommand{\scaffold}{\texttt{SCAFFOLD}\xspace}
\newcommand{\fednova}{\texttt{FedNova}\xspace}
\crefname{equation}{}{}
\Crefname{equation}{}{}
\crefname{thm}{theorem}{theorems}
\Crefname{thm}{Theorem}{Theorems}
\crefname{clm}{claim}{claims}
\Crefname{clm}{Claim}{Claims}
\Crefname{coro}{Corollary}{Corollaries}
\Crefname{lem}{Lemma}{Lemmas}
\Crefname{sec}{Section}{Sections}
\crefname{app}{appendix}{appendices}
\Crefname{app}{Appendix}{Appendices}
\Crefname{part}{Part}{Parts}
\crefname{prop}{proposition}{propositions}
\Crefname{prop}{Proposition}{Propositions}
\Crefname{propty}{Property}{Properties}
\crefname{figure}{fig.}{figures}
\Crefname{figure}{Figure}{Figures}
\crefname{defn}{definition}{definitions}
\Crefname{defn}{Definition}{Definitions}
\crefname{fact}{fact}{facts}
\Crefname{fact}{Fact}{Facts}
\crefname{appendix}{appendix}{appendices}
\Crefname{appendix}{Appendix}{Appendices}
\crefname{algo}{algorithm}{algorithms}
\Crefname{algo}{Algorithm}{Algorithms}
\crefname{algorithm}{algorithm}{algorithms}
\Crefname{algorithm}{Algorithm}{Algorithms}
\crefname{conj}{conjecture}{conjectures}
\Crefname{conj}{Conjecture}{Conjectures}
\crefname{obs}{observation}{observations}
\Crefname{obs}{Observation}{Observations}
\crefname{assump}{assumption}{assumptions}
\Crefname{assump}{Assumption}{Assumptions}
\crefname{rem}{remark}{remarks}
\Crefname{rem}{Remark}{Remarks}
\title{Tackling the Objective Inconsistency Problem \\
in Heterogeneous Federated Optimization}
\author{%
  Jianyu Wang \\
  Carnegie Mellon University\\
  Pittsburgh, PA 15213 \\
  \And
  Qinghua Liu \\
  Princeton University \\
  Princeton, NJ 08544 \\
  \AND
  Hao Liang \\
  Carnegie Mellon University \\
  Pittsburgh, PA 15213 \\
  \And
    Gauri Joshi \\
  Carnegie Mellon University \\
  Pittsburgh, PA 15213 \\
  \And
    H. Vincent Poor \\
  Princeton University \\
  Princeton, NJ 08544 \\
}
\begin{document}

\maketitle

\begin{abstract}
In federated optimization, heterogeneity in the clients' local datasets and computation speeds results in large variations in the number of local updates performed by each client in each communication round. Naive weighted aggregation of such models causes objective inconsistency, that is, the global model converges to a stationary point of a mismatched objective function which can be arbitrarily different from the true objective. This paper provides a general framework to analyze the convergence of federated heterogeneous optimization algorithms. It subsumes previously proposed methods such as FedAvg and FedProx and provides the first principled understanding of the solution bias and the convergence slowdown due to objective inconsistency. Using insights from this analysis, we propose FedNova, a normalized averaging method that eliminates objective inconsistency while preserving fast error convergence.
\end{abstract}

\section{Introduction}

Federated learning \cite{mcmahan2016communication,kairouz2019advances,konevcny2016federated,konevcny2015federated,lim2020federated} is an emerging sub-area of distributed optimization where both data collection and model training is pushed to a large number of edge clients that have limited communication and computation capabilities. Unlike traditional distributed optimization \cite{li2014scaling,nedic2018network} where consensus (either through a central server or peer-to-peer communication) is performed after every local gradient computation, in federated learning, the subset of clients selected in each communication round perform multiple local updates before these models are aggregated in order to update a global model.

\textbf{Heterogeneity in the Number of Local Updates in Federated Learning. } The clients participating in federated learning are typically highly heterogeneous, both in the size of their local datasets as well as their computation speeds. The original paper on federated learning \cite{mcmahan2016communication} proposed that each client performs $E$ \emph{epochs} (traversals of their local dataset) of local-update stochastic gradient descent (SGD) with a mini-batch size $B$. Thus, if a client has $n_i$ local data samples, the number of local SGD iterations is $\tau_i = \lfloor E n_i/B \rfloor$, which can vary widely across clients. The heterogeneity in the number of local SGD iterations is exacerbated by relative variations in the clients' computing speeds. Within a given wall-clock time interval, faster clients can perform more local updates than slower clients. The number of local updates made by a client can also vary across communication rounds due to unpredictable straggling or slowdown caused by background processes, outages, memory limitations etc. Finally, clients may use different learning rates and local solvers (instead of vanilla SGD, they may use proximal gradient methods or adaptive learning rate schedules) which may result in heterogeneity in the model progress at each client.

\textbf{Heterogeneity in Local Updates Causes Objective Inconsistency.} Most recent works that analyze the convergence of federated optimization algorithms \cite{wang2018cooperative,stich2018local,zhou2017convergence,yu2018parallel,Li2020On,haddadpour2019convergence,haddadpour2019trading,haddadpour2019local,khaled2020tighter,stich2019error,wang2019adaptive,Wang2018Adaptive,karimireddy2019scaffold,liang2019variance,woodworth2020local,koloskova2020unified,yu2019linear,Wang2020SlowMo,huo2020faster,zhou2019distributed,zhang2020fedpd,pathak2020fedsplit,khaled2019first,woodworth2018graph,zhao2018federated,xie2019local,lin2018don,malinovsky2020local,wang2020overlap,dieuleveut2019communication} assume that number of local updates is the same across all clients (that is, $\cp_i = \cp$ for all clients $i$). These works show that periodic consensus between the locally trained client models attains a stationary point of the global objective function $F(\x) = \sum_{i=1}^{m} n_i F_i(\x)/n$, which is a sum of local objectives weighted by the dataset size $n_i$. However, none of these prior works provides insight into the convergence of local-update or federated optimization algorithms in the practical setting when the number of local updates $\tau_i$ varies across clients $1, \dots, m$. In fact, as we show in \Cref{sec:quadratic}, \emph{standard averaging of client models after heterogeneous local updates results in convergence to a stationary point -- not of the original objective function $F(\x)$, but of an inconsistent objective $\widetilde{F}(\x)$, which can be arbitrarily different from $F(\x)$ depending upon the relative values of $\cp_i$.} To gain intuition into this phenomenon, observe in \Cref{fig:illustration} that if client $1$ performs more local updates, then the updated $\x^{(t+1,0)}$ strays towards the local minimum $\x_1^*$, away from the true global minimum $\x^*$.

\begin{wrapfigure}{r}{0.3\textwidth}
    \begin{center}
        \includegraphics[width=.3\textwidth]{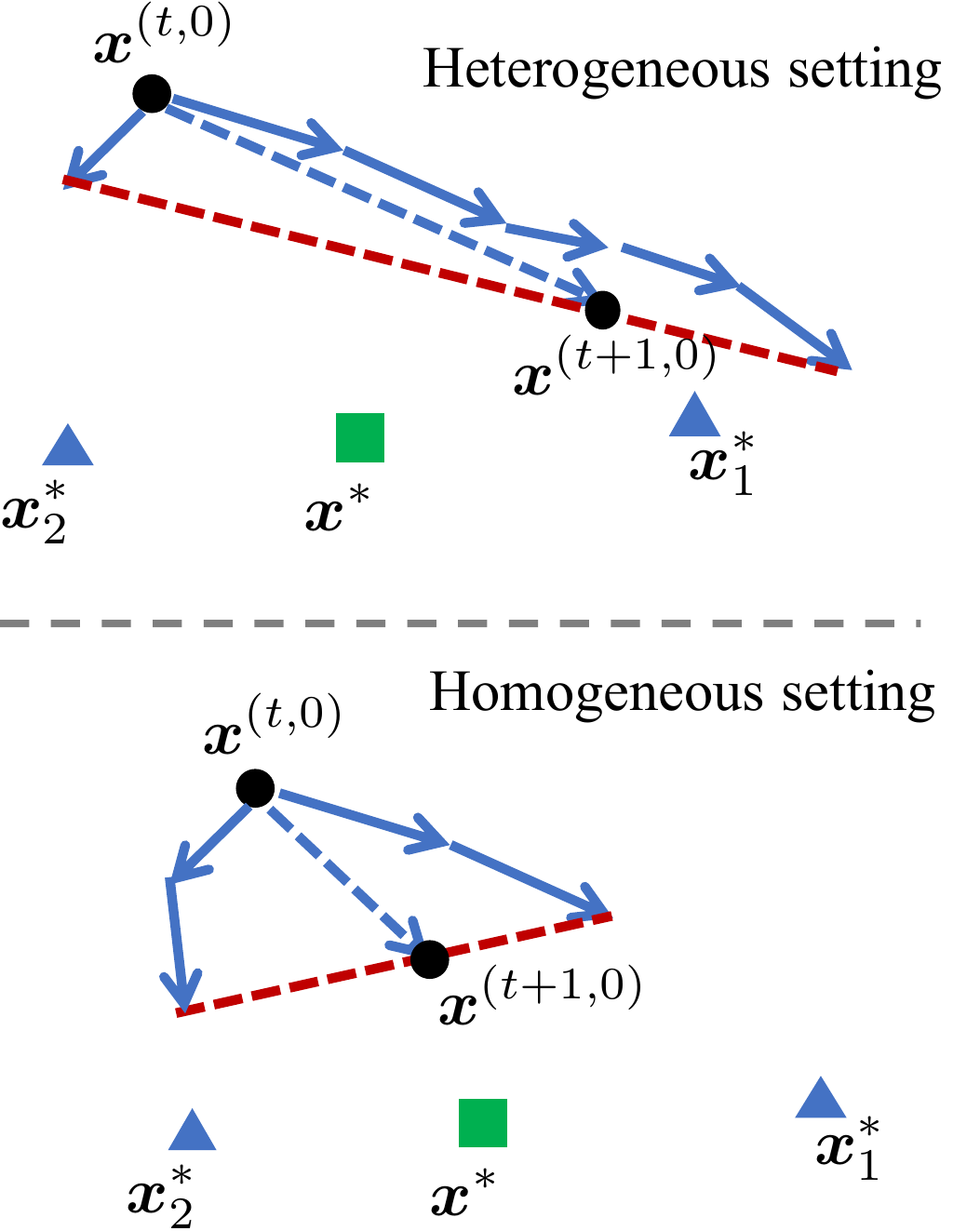}
    \end{center}
    \caption{Model updates in the parameter space. Green squares and blue triangles denote the minima of global and local objectives, respectively.}
    \label{fig:illustration}
    \vspace{-1em}
\end{wrapfigure}

\textbf{The Need for a General Analysis Framework.} A naive approach to overcome heterogeneity is to fix a target number of local updates $\tau$ that each client must finish within a communication round and keep fast nodes idle while the slow clients finish their updates. This method will ensure objective consistency (that is, the surrogate objective $\widetilde{F}(\x)$ equals to the true objective $F(\x)$), nonetheless, waiting for the slowest one can significantly increase the total training time. More sophisticated approaches such as \fedprox \cite{li2018federated}, \vrlsgd \cite{liang2019variance} and \scaffold \cite{karimireddy2019scaffold}, designed to handle non-IID local datasets, can be used to reduce (not eliminate) objective inconsistency to some extent, but these methods either result in slower convergence or require additional communication and memory. So far, there is no rigorous understanding of the objective inconsistency and the speed of convergence for this challenging setting of federated learning with heterogeneous local updates. It is also unclear how to best combine models trained with heterogeneous levels of local progress.

%
%

\textbf{Proposed Analysis Framework to Understand Bias Due to Objective Inconsistency.} To the best of our knowledge, this work provides the first fundamental understanding of the bias in the solution (caused by objective inconsistency) and how the convergence rate is influenced by heterogeneity in clients' local progress. In \Cref{sec:analysis} we propose a general theoretical framework that allows heterogeneous number of local updates, non-IID local datasets as well as different local solvers such as GD, SGD, SGD with proximal gradients, gradient tracking, adaptive learning rates, momentum, etc. It subsumes existing methods such as \fedavg and \fedprox and provides novel insights on their convergence behaviors. 

\textbf{Proposed Normalized Averaging Method FedNova.}
In \Cref{sec:algorithm} we propose \fednova, a method that correctly normalizes local model updates when averaging. The main idea of \fednova is that instead of averaging the cumulative local gradient $\x_i^{(t,\tau_i)} - \x^{(t,0)}$ returned by client $i$ (which performs $\tau_i$ local updates) in $t$-th training round, the aggregator averages the normalized local gradients $(\x_i^{(t,\tau_i)} - \x^{(t,0)})/\tau_i$. \fednova ensures objective consistency while preserving fast error convergence and outperforms existing methods as shown in \Cref{sec:exp}. It works with any local solver and server optimizer and is therefore complementary to existing approaches such as \cite{li2018federated, li2019feddane, karimireddy2019scaffold, reddi2020adaptive}. By enabling aggregation of models with heterogeneous local progress, \fednova gives the bonus benefit of overcoming the problem of stragglers, or unpredictably slow nodes by allowing fast clients to perform more local updates than slow clients within each communication round.
\section{System Model and Prior Work}
\textbf{The Federated Heterogeneous Optimization Setting.} In federated learning, a total of $m$ clients aim to jointly solve the following optimization problem:
\begin{align}
    \min_{\x\in\mathbb{R}^d} \brackets{\obj(\x) := \sum_{i=1}^{\nworkers}  p_i \obj_i(\x)} \label{eqn:global_obj}
\end{align}
where $p_i=n_i/n$ denotes the relative sample size, and $\obj_i(\x)=\frac{1}{n_i}\sum_{\xi \in \mathcal{D}_i}f_i(\x;\xi)$ is the local objective function at the $i$-th client. Here, $f_i$ is the loss function (possibly non-convex) defined by the learning model and $\xi$ represents a data sample from local dataset $\mathcal{D}_i$. In the $t$-th communication round, each client independently runs $\cp_i$ iterations of local solver (\eg, SGD) starting from the current global model $\x^{(t,0)}$ to optimize its own local objective. 

In our theoretical framework, we treat $\cp_i$ as an arbitrary scalar which can also vary across rounds. In practice, if clients run for the same local epochs $E$, then $\cp_i = \lfloor En_i/B\rfloor$, where $B$ is the mini-batch size. Alternately, if each communication round has a fixed length in terms of wall-clock time, then $\cp_i$ represents the local iterations completed by client $i$ within the time window and may change across clients (depending on their computation speeds and availability) and across communication rounds.

\textbf{The Fedavg Baseline Algorithm.} \emph{Federated Averaging} (\fedavg) \cite{mcmahan2016communication} is the first and most common algorithm used to aggregate these locally trained models at the central server at the end of each communication round. The shared global model is updated as follows:
\begin{align}
    \text{\fedavg:} \quad \x^{(t+1,0)} - \x^{(t,0)} = \sum_{i=1}^{\nworkers} p_i \Delta_i^{(t)} = - \sum_{i=1}^{\nworkers} p_i \cdot \lr \sum_{k=0}^{\cp_i-1}\sg_i(\x_i^{(t,k)}) \label{eqn:fedavg_update}
\end{align}
where $\x_i^{(t,k)}$ denotes client $i$'s model after the $k$-th local update in the $t$-th communication round and $\smash{\Delta_i^{(t)} = \x_i^{(t,\tau_i)} - \x_i^{(t,0)}}$ denotes the cumulative local progress made by client $i$ at round $t$. Also, $\lr$ is the client learning rate and $\sg_i$ represents the stochastic gradient over a mini-batch of $B$ samples. When the number of clients $\nworkers$ is large, then the central server may only randomly select a subset of clients to perform computation at each round. 

\textbf{Convergence Analysis of FedAvg.} 
\cite{wang2018cooperative,stich2018local,zhou2017convergence} first analyze \fedavg by assuming the local objectives are identical and show that \fedavg is guaranteed to converge to a stationary point of $\obj(\x)$. This analysis was further expanded to the non-IID data partition and client sampling cases by \cite{yu2018parallel,Li2020On,haddadpour2019convergence,haddadpour2019trading,haddadpour2019local,khaled2020tighter,stich2019error,wang2019adaptive,koloskova2020unified,yu2019linear}. 
However, in all these works, they assume that the number of local steps and the client optimizer are the same across all clients. Besides, asynchronous federated optimization algorithms proposed in \cite{xie2019asynchronous,stich2018local} take a different approach of allowing clients make updates to stale versions of the global model, and their analyses are limited to IID local datasets and convex local functions.

\textbf{FedProx: Improving FedAvg by Adding a Proximal Term.} 
To alleviate inconsistency due to non-IID data and heterogeneous local updates, \cite{li2018federated} proposes adding a proximal term $\frac{\mu}{2}\|\x - \x^{(t,0)}\|^2$ to each local objective, where $\mu\geq 0$ is a tunable parameter. This proximal term pulls each local model backward closer to the global model $\x^{(t,0)}$. Although \cite{li2018federated} empirically shows that \fedprox improves \fedavg, its convergence analysis is limited by assumptions that are stronger than previous \fedavg analysis and only works for sufficiently large $\mu$. Since \fedprox is a special case of our general framework, our convergence analysis provides sharp insights into the effect of $\mu$. We show that a larger $\mu$ mitigates (but does not eliminate) objective inconsistency, albeit at an expense of slower convergence. Our proposed \fednova method can improve \fedprox by guaranteeing consistency without slowing down convergence.

\textbf{Improving FedAvg via Momentum and Cross-client Variance Reduction.} The performance of \fedavg has been improved in recent literature by applying momentum on the server side \cite{Wang2020SlowMo,hsu2019measuring,reddi2020adaptive}, or using cross-client variance reduction such as \vrlsgd and \scaffold \cite{liang2019variance,karimireddy2019scaffold}. Again, these works do not consider heterogeneous local progress. Our proposed normalized averaging method \fednova is orthogonal to and can be easily combined with these acceleration or variance-reduction techniques. Moreover, \fednova is also compatible with and complementary to gradient compression/quantization \cite{reisizadeh2019fedpaq,basu2019qsparse,wang2018atomo,sattler2019robust,li2020acceleration,wu2020convergence} and fair aggregation techniques \cite{li2019fair,mohri2019agnostic}.

\section{A Case Study to Demonstrate the Objective Inconsistency Problem}
\label{sec:quadratic}

In this section, we use a simple quadratic model to illustrate the convergence problem. 
Suppose that the local objective functions are $\obj_i(\x) = \frac{1}{2}\|\x-\bm{e}_i\|^2$, where $\bm{e}_i\in \smash{\mathbb{R}^d}$ is an arbitrary vector and it is the minimum $\x_i^*$ of the local objective. Consider that the global objective function is defined as
\begin{align}
\obj(\x)= 
\frac{1}{\nworkers}\sum_{i=1}^\nworkers \obj_i(\x) = \sum_{i=1}^{\nworkers} \frac{1}{2\nworkers}\|\x-\bm{e}_i\|^2, \quad \text{which is minimized by } \x^* = \frac{1}{\nworkers}\sum_{i=1}^\nworkers \bm{e}_i. \label{eqn:quadratic_obj}
\end{align}
Below, we show that the convergence point of \fedavg can be arbitrarily away from $\x^*$.

\begin{lem}[\textbf{Objective Inconsistency in FedAvg}]
\label{lem:quadratic} 
For the objective function in \Cref{eqn:quadratic_obj}, if client $i$ performs $\cp_i$ local steps per round, then \fedavg (with sufficiently small learning rate $\lr$, deterministic gradients and full client participation) will converge to
\begin{align}
 \tilde{\x}^*_{\fedavg} = \lim_{T\rightarrow \infty} \x^{(T,0)}=  \frac{\sum_{i=1}^\nworkers\cp_i \bm{e}_i}{\sum_{i=1}^\nworkers \cp_i}, \text{which minimizes the surrogate obj.:} \surloss(\x)=\frac{\sum_{i=1}^\nworkers  \cp_i \obj_i(\x)}{\sum_{i=1}^\nworkers \cp_i}.\nonumber
\end{align}
\end{lem}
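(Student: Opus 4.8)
The plan is to use the fact that with quadratic local losses the entire \fedavg dynamics is affine, so it can be solved in closed form. First I would expand one local gradient step at client $i$: since $\nabla \obj_i(\x) = \x - \bm{e}_i$, the update $\x_i^{(t,k+1)} = \x_i^{(t,k)} - \lr(\x_i^{(t,k)} - \bm{e}_i)$ gives $\x_i^{(t,k+1)} - \bm{e}_i = (1-\lr)\bigl(\x_i^{(t,k)} - \bm{e}_i\bigr)$. Iterating $\cp_i$ times from the shared initialization $\x^{(t,0)}$ yields $\x_i^{(t,\cp_i)} = \bm{e}_i + (1-\lr)^{\cp_i}\bigl(\x^{(t,0)} - \bm{e}_i\bigr)$, so the local progress is $\Delta_i^{(t)} = -\,w_i\bigl(\x^{(t,0)} - \bm{e}_i\bigr)$ with $w_i := 1 - (1-\lr)^{\cp_i}$.

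Next I would substitute this into the \fedavg aggregation rule \cref{eqn:fedavg_update} with weights $p_i = 1/\nworkers$. Setting $\bar w := \tfrac{1}{\nworkers}\sum_{i} w_i$, the server iterate satisfies the affine recursion $\x^{(t+1,0)} = (1-\bar w)\,\x^{(t,0)} + \tfrac{1}{\nworkers}\sum_i w_i \bm{e}_i$. For $0 < \lr < 1$ each $w_i \in (0,1)$, so $\bar w \in (0,1)$ and the map is a contraction; subtracting its unique fixed point $\x^{\infty} := \bigl(\sum_i w_i \bm{e}_i\bigr)\big/\bigl(\sum_i w_i\bigr)$ gives $\x^{(t+1,0)} - \x^{\infty} = (1-\bar w)\bigl(\x^{(t,0)} - \x^{\infty}\bigr)$, which proves $\x^{(t,0)} \to \x^{\infty}$ geometrically.

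To finish, I would take the small-step expansion $w_i = \lr\cp_i + O(\lr^2)$, so that $\x^{\infty} = \bigl(\sum_i \cp_i \bm{e}_i + O(\lr)\bigr)\big/\bigl(\sum_i \cp_i + O(\lr)\bigr) \to \bigl(\sum_i \cp_i \bm{e}_i\bigr)\big/\bigl(\sum_i \cp_i\bigr)$ as $\lr \to 0$, which is the claimed limit point $\tilde{\x}^*_{\fedavg}$. That this point minimizes $\surloss$ then follows because $\surloss$ is strongly convex with gradient $\nabla \surloss(\x) = \bigl(\sum_i \cp_i(\x - \bm{e}_i)\bigr)\big/\bigl(\sum_i \cp_i\bigr) = \x - \tilde{\x}^*_{\fedavg}$, which vanishes exactly at $\tilde{\x}^*_{\fedavg}$.

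The one subtlety worth flagging is the order of the two limits: for a fixed $\lr$ the recursion converges exactly to the $w_i$-weighted barycenter of the $\bm{e}_i$, which coincides with the $\cp_i$-weighted barycenter only in the $\lr \to 0$ limit, so the statement should be read as $\lim_{\lr \to 0}\lim_{T \to \infty}\x^{(T,0)}$, and ``sufficiently small $\lr$'' plays the double role of ensuring the contraction and of making the linear term in $w_i$ dominate. No concentration or nonconvexity machinery is needed here --- deterministic gradients and quadratic objectives reduce everything to linear-recursion bookkeeping --- which is precisely why this case study isolates the objective-inconsistency phenomenon so cleanly.
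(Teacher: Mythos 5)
Your proposal is correct and follows essentially the same route as the paper's proof: the paper likewise solves the affine local recursion to get per-round progress proportional to $1-(1-\lr)^{\cp_i}$, identifies the fixed point of the resulting affine server recursion as the $[1-(1-\lr)^{\cp_i}]$-weighted barycenter of the $\bm{e}_i$, and then takes $\lr\to 0$ (via L'Hospital, equivalent to your Taylor expansion) to obtain the $\cp_i$-weighted barycenter; the only difference is that the paper first derives this for a more general FedProx setting with arbitrary Hessians $\matH_i$ and then specializes. Your remark on the order of the limits $\lim_{\lr\to 0}\lim_{T\to\infty}$ matches how the paper's appendix actually states the result.
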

The proof (of a more general version of \Cref{lem:quadratic}) is deferred to the Appendix. 
While \fedavg aims at optimizing $\obj(\x)$, it actually converges to the optimum of a surrogate objective $\surloss(\x)$. 
As illustrated in \Cref{fig:quadratic}, there can be an arbitrarily large gap between $\tilde{\x}^*_{\fedavg}$ and $\x^*$ depending on the relative values of $\cp_i$ and $\obj_i(\x)$. This non-vanishing gap also occurs when the local steps $\tau_i$ are IID random variables across clients and communication rounds (see the right panel in \Cref{fig:quadratic}).

\textbf{Convergence Problem in Other Federated Algorithms.}  We can generalize \Cref{lem:quadratic} to the case of \fedprox to demonstrate its convergence gap, as given in \Cref{sec:proof_quadratic}. From the simulations shown in \Cref{fig:quadratic}, observe that \fedprox can slightly improve on the optimality gap of \fedavg, but it converges slower. Besides, previous cross-client variance reduction methods such as variance-reduced local SGD (\vrlsgd) \cite{liang2019variance} and \scaffold \cite{karimireddy2019scaffold} are only designed for homogeneous local steps case. In the considered heterogeneous setting, if we replace the same local steps $\cp$ in \vrlsgd by different $\cp_i$'s, then we observe that it has drastically different convergence under different settings and even diverge when clients perform random local steps (see the right panel in \Cref{fig:quadratic}). These observations emphasize the critical need for a deeper understanding of objective inconsistency and new federated heterogeneous optimization algorithms. 

\begin{figure}[t]
    \centering
    \begin{subfigure}{.33\textwidth}
    \centering
    \includegraphics[width=\textwidth]{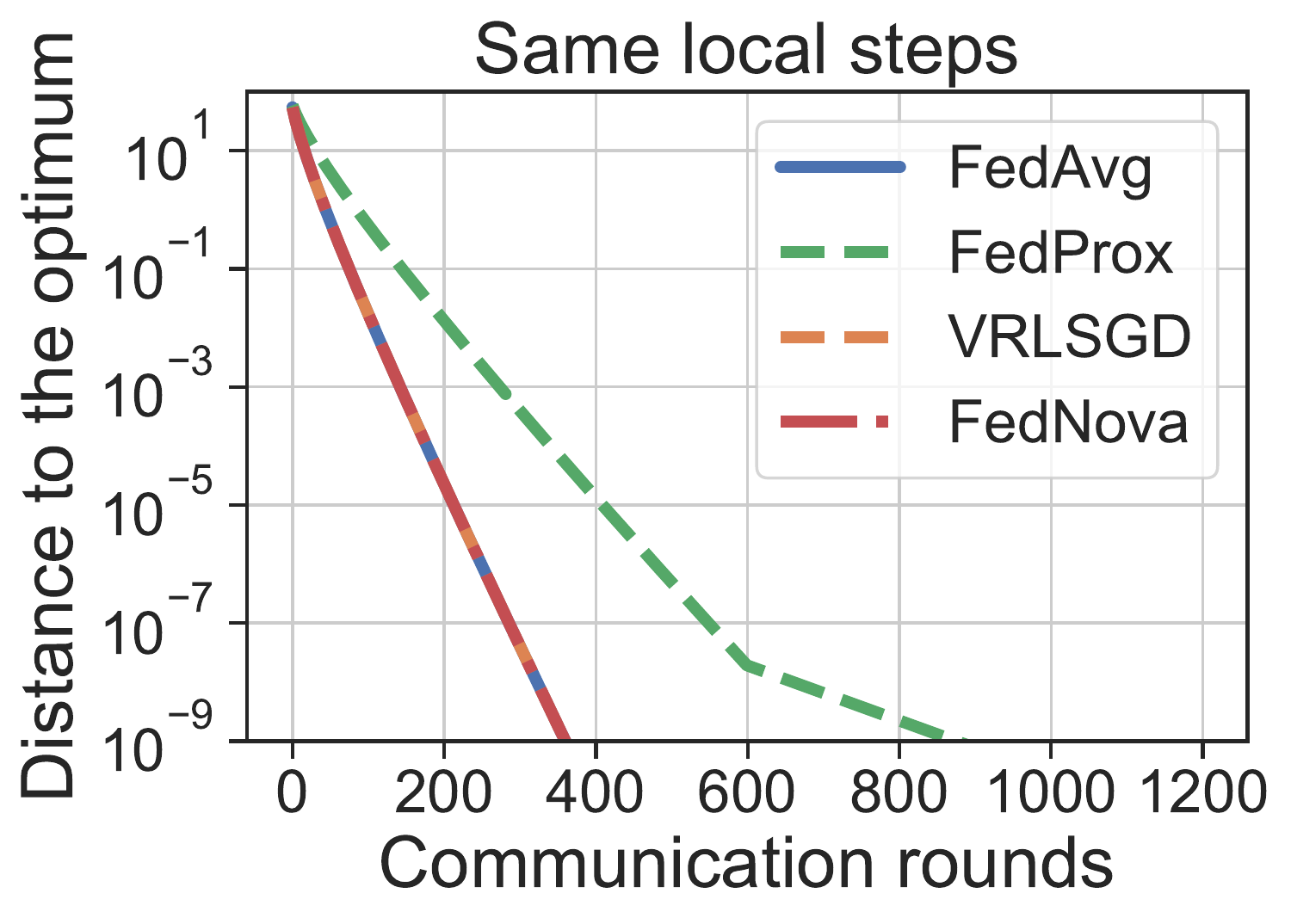}
    \end{subfigure}%
    ~
    \begin{subfigure}{.33\textwidth}
    \centering
    \includegraphics[width=\textwidth]{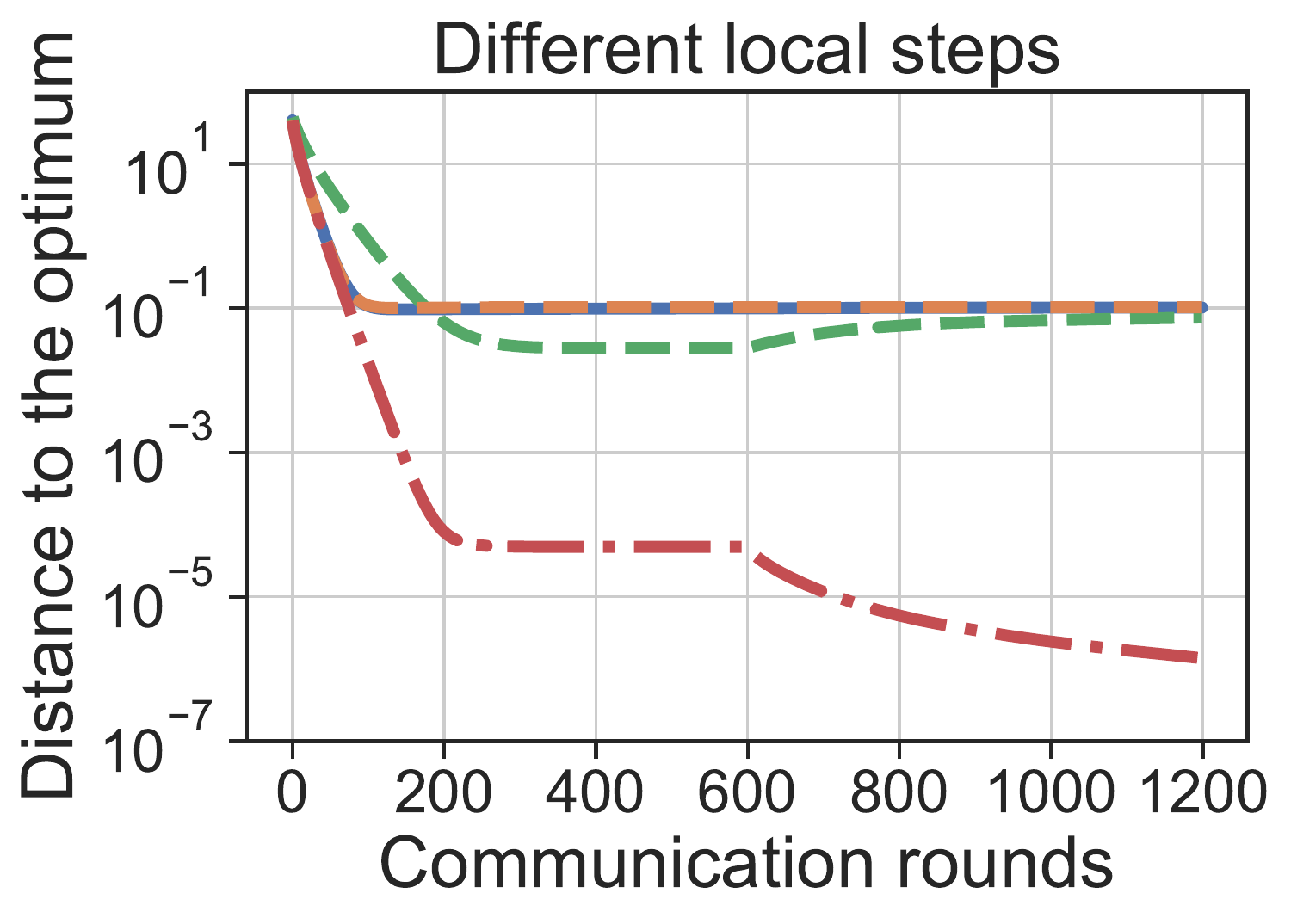}
    \end{subfigure}%
    ~
    \begin{subfigure}{.33\textwidth}
    \centering
    \includegraphics[width=\textwidth]{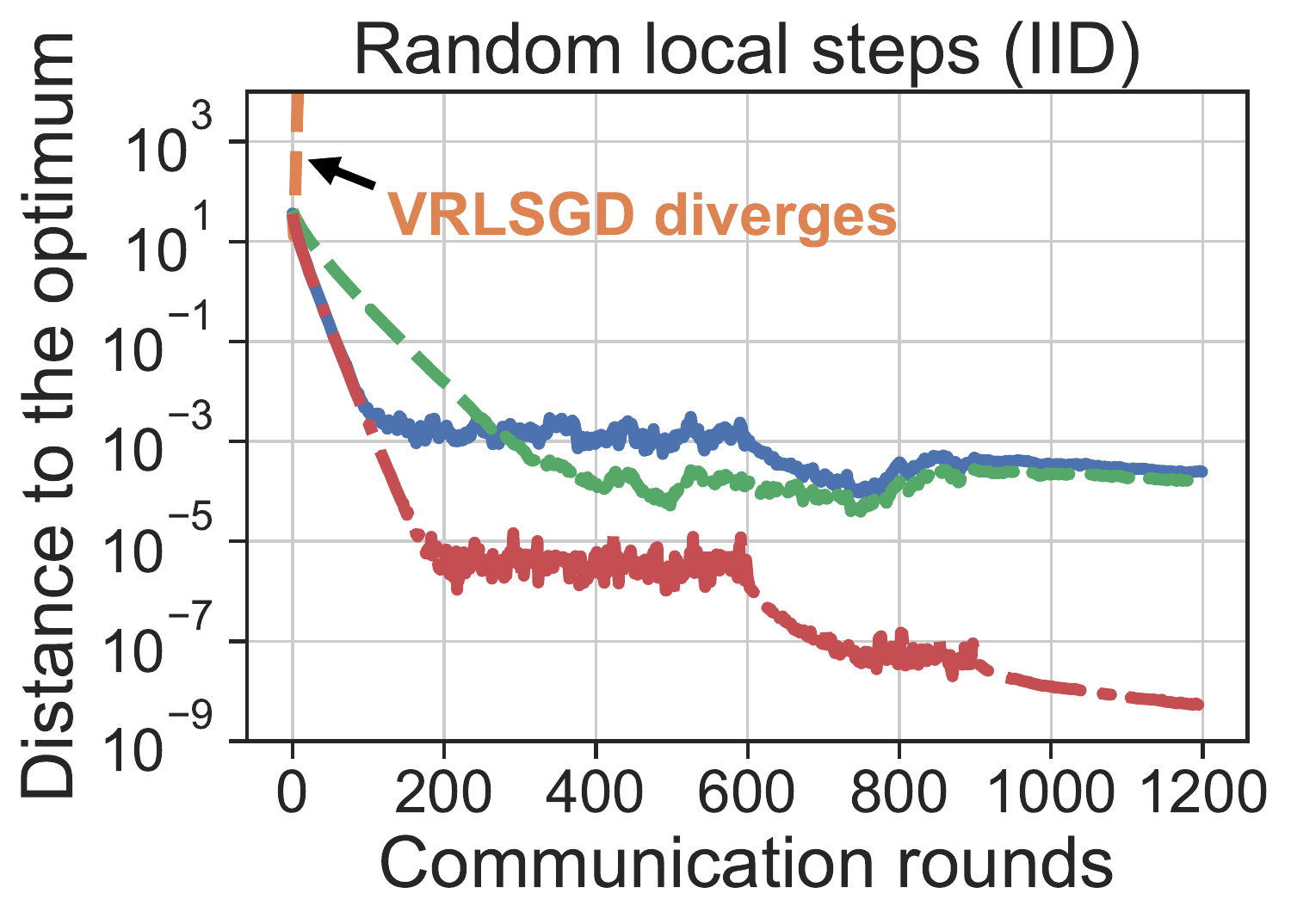}
    \end{subfigure}
    \caption{Simulations comparing the \fedavg, \fedprox ($\mu=1$), \vrlsgd and our proposed \fednova algorithms for $30$ clients with the quadratic objectives defined in \eqref{eqn:quadratic_obj}, where $\bm{e}_i \sim \mathcal{N}(0, 0.01\matI), i\in[1,30]$. Clients perform GD with $\eta = 0.05$, which is decayed by a factor of $5$ at rounds $600$ and $900$. \textbf{\emph{Left}}: Clients perform the same number of local steps $\cp_i=30$ -- \fednova is equivalent to \fedavg in this case; \textbf{\emph{Middle}}: Clients take different local steps $\cp_i\in[1,96]$ with mean $30$ but fixed across rounds; \textbf{\emph{Right}}: local steps are IID, and time-varying Gaussians with mean $30$, \ie, $\cp_i(t)\in[1,96]$. \fednova significantly outperforms others in the heterogeneous $\cp_i$ setting.}
    \label{fig:quadratic}
    \vspace{-1em}
\end{figure}
\section{New Theoretical Framework For Heterogeneous Federated Optimization}
\label{sec:analysis}

We now present a general theoretical framework that subsumes a suite of federated optimization algorithms and helps analyze the effect of objective inconsistency on their error convergence. Although the results are presented for the full client participation setting, it is fairly easy to extend them to the case where a subset of clients are randomly sampled in each round \footnote{In the case of client sampling, the update rule of \fedavg \Cref{eqn:fedavg_update} should hold in expectation in order to guarantee convergence \cite{Li2020On,haddadpour2019convergence,li2018federated,reddi2020adaptive}. One can achieve this by either (i) sampling $\samplenum$ clients with replacement with respect to probability $p_i$, and then averaging the cumulative local changes with equal weights, or (ii) sampling $\samplenum$ clients without replacement uniformly at random, and then weighted averaging local changes, where the weight of client $i$ is re-scaled to $p_i \nworkers/ \samplenum$. Our convergence analysis can be easily extended to these two cases.}. 


\subsection{A Generalized Update Rule for Heterogeneous Federated Optimization}\label{sec:formulation}

Recall from \eqref{eqn:fedavg_update} that the update rule of federated optimization algorithms can be written as $\smash{\x^{(t+1,0)} - \x^{(t,0)} = \sum_{i=1}^\nworkers p_i \Delta_i^{(t)}}$, where $\smash{\Delta_i^{(t)}:=\x^{(t,\cp_i)} - \x^{(t,0)}}$ denote the local parameter changes of client $i$ at round $t$ and $p_i = n_i/n$, the fraction of data at client $i$. We re-write this update rule in a more general form as follows:
\begin{align}
    \x^{(t+1,0)} - \x^{(t,0)} = - \cpeff \sum_{i=1}^\nworkers w_i \cdot \lr \nsg_i^{(t)}, \quad \text{which optimizes } \surloss(\x) = \sum_{i=1}^{\nworkers}  w_i F_i(\x). \label{eqn:new_update}
\end{align}
The three key elements $\cpeff, w_i$ and $\nsg_i^{(t)}$ of this update rule take different forms for different algorithms. Below, we provide detailed descriptions of these key elements.
\begin{enumerate}
    \item \textbf{Normalized gradient} $\nsg_i^{(t)}$: The normalized gradient is defined as $\smash{\nsg_i^{(t)} = \matG_i^{(t)}\gradweight_i/\|\gradweight_i\|_1}$, where $\matG_i^{(t)} = [\sg_i(\x_i^{(t,0)}), \sg_i(\x_i^{(t,1)}),\dots, \sg_i(\x_i^{(t,\cp_i)})] \in \mathbb{R}^{d \times \cp_i}$ stacks all stochastic gradients in the $t$-th round, and $\gradweight_i\in \mathbb{R}^{\cp_i}$ is a non-negative vector and defines how stochastic gradients are locally accumulated. The normalizing factor $\|\gradweight_i\|_1$ in the denominator is the $\ell_1$ norm of the vector $\gradweight_i$. By setting different $\gradweight_i$, \Cref{eqn:new_update} works for most common client optimizers such as SGD with proximal updates, local momentum, and variable learning rate, and more generally, any solver whose accumulated gradient $\smash{\Delta_i^{(t)} = -\lr \matG_i^{(t)}\gradweight_i}$, a linear combination of local gradients. 
    
    If the client optimizer is vanilla SGD (\ie, the case of \fedavg), then $\gradweight_i=[1,1,\dots,1]\in\mathbb{R}^{\cp_i}$ and $\|\gradweight_i\|_1=\cp_i$. As a result, the normalized gradient is just a simple average of all stochastic gradients within current round: $\nsg_i^{(t)} = \matG_i^{(t)}\gradweight_i/\cp_i = \sum_{k=0}^{\cp_i-1}\sg_i(\x_i^{(t,k)})/\cp_i$. Later in this section, we will present more specific examples on how to set $\gradweight_i$ in other algorithms.
    
    \item \textbf{Aggregation weights} $w_i$: Each client's normalized  gradient $\nsg_i$ is multiplied with weight $w_i$ when computing the aggregated gradient $\sum_{i=1}^{m} w_i \nsg_i$. By definition, these weights satisfy $\sum_{i=1}^{\nworkers} w_i = 1$. Observe that these weights determine the surrogate objective $\surloss(\x) = \sum_{i=1}^{m} w_i F_i(\x)$, which is optimized by the general algorithm in \eqref{eqn:new_update} instead of the true global objective $F(\x) = \sum_{i=1}^{m} p_i F_i(\x)$ -- we will prove this formally in \Cref{thm:general}. 
    
    \item \textbf{Effective number of steps} $\cpeff$: Since client $i$ makes $\tau_i$ local updates, the average number of local SGD steps per communication round is $\bar{\cp} = \sum_{i=1}^{m} \cp_i/m$. However, the server can scale up or scale down the effect of the aggregated updates by setting the parameter $\cpeff$ larger or smaller than $\bar{\cp}$ (analogous to choosing a global learning rate \cite{Wang2020SlowMo,reddi2020adaptive}). We refer to the ratio $\bar{\cp}/\cpeff$ as the slowdown, and it features prominently in the convergence analysis presented in \Cref{sec:conv_analysis}. 
\end{enumerate}
The general rule \eqref{eqn:new_update} enables us to freely choose $\cpeff$ and $w_i$ for a given local solver $\gradweight_i$, which helps design fast and consistent algorithms such as \fednova, the normalized averaging method proposed in \Cref{sec:algorithm}. In \Cref{fig:generalized_updates}, we further illustrate how the above key elements influence the algorithm and compare the novel generalized update rule and \fedavg in the model parameter space. Besides, in terms of the implementation of the generalized update rule, each client can send the normalized update $-\eta \smash{\nsg_i^{(t)}}$ to the central server, which is just a re-scaled version of  $\smash{\Delta_i^{(t)}}$, the accumulated local parameter update sent by clients in the vanilla update rule \eqref{eqn:fedavg_update}. The server is not necessary to know the specific form of local accumulation vector $\gradweight_i$.

\textbf{Previous Algorithms as Special Cases.}   
Any previous algorithm whose accumulated local changes $\smash{\Delta_i^{(t)} = -\lr \matG_i^{(t)}\gradweight_i}$, a linear combination of local gradients is subsumed by the above formulation. One can validate this as follows:
\begin{align}
    \x^{(t+1,0)} - \x^{(t,0)}
    =& \sum_{i=1}^\nworkers p_i \Delta_i^{(t)} = -\sum_{i=1}^\nworkers p_i \|\gradweight_i\|_1  \cdot \frac{\lr \matG_i^{(t)}\gradweight_i}{\|\gradweight_i\|_1}\\
    =& - \underbrace{\parenth{\sum_{i=1}^\nworkers p_i \|\gradweight_i\|_1}}_{\cpeff:\ \text{effective local steps}}\sum_{i=1}^\nworkers \lr \underbrace{\parenth{\frac{p_i\|\gradweight_i\|_1}{\sum_{i=1}^\nworkers p_i \|\gradweight_i\|_1}}}_{w_i:\ \text{weight}}\underbrace{\parenth{\frac{\matG_i^{(t)}\gradweight_i}{\|\gradweight_i\|_1}}}_{\nsg_i:\ \text{normalized gradient}}. \label{eqn:new_form1}
\end{align}
Unlike the more general form \Cref{eqn:new_update}, in \eqref{eqn:new_form1}, which subsumes the following previous methods, $\cpeff$ and $w_i$ are implicitly fixed by the choice of the local solver (\ie, the choice of $\gradweight_i$). Due to space limitations, the derivations of following examples are relegated to \Cref{sec:proof_detailed_gradweight}.
\begin{itemize}
    \item \textbf{Vanilla SGD as Local Solver (FedAvg).} In \fedavg, the local solver is SGD such that $\gradweight_i = [1,1,\dots,1]\in\mathbb{R}^{\cp_i}$ and $\|\gradweight_i\|_1 =\cp_i$. As a consequence, the normalized gradient $\nsg_i$ is a simple average over $\cp_i$ iterations, $\cpeff=\sum_{i=1}^\nworkers p_i \cp_i$, and $w_i = p_i \cp_i/\sum_{i=1}^\nworkers p_i \cp_i$. That is, the normalized gradients with more local steps will be implicitly assigned higher weights.
    \item \textbf{Proximal SGD as Local Solver (FedProx).} In \fedprox, local SGD steps are corrected by a proximal term. It can be shown that $\gradweight_i = [(1-\alpha)^{\cp_i-1},(1-\alpha)^{\cp_i-2},\dots,(1-\alpha),1]\in\mathbb{R}^{\cp_i}$, where $\alpha=\lr\mu$ and $\mu\geq 0$ is a tunable parameter. In this case, we have $\|\gradweight_i\|_1=[1-(1-\alpha)^{\cp_i}]/\alpha$ and hence,
    \begin{align}
        \cpeff = \frac{1}{\alpha}\sum_{i=1}^\nworkers p_i [1-(1-\alpha)^{\cp_i}], \quad w_i = \frac{p_i [1-(1-\alpha)^{\cp_i}]}{\sum_{i=1}^\nworkers p_i [1-(1-\alpha)^{\cp_i}]}. \label{eqn:fedprox_w}
    \end{align}
    When $\alpha = 0$, \fedprox is equivalent to \fedavg. As $\alpha = \eta \mu$ increases, the $w_i$ in \fedprox is more similar to $p_i$, thus making the surrogate objective $\widetilde{F}(\x)$ more consistent. However, a larger $\alpha$ corresponds to smaller $\cpeff$, which slows down convergence, as we discuss more in the next subsection.
    \item \textbf{SGD with Decayed Learning Rate as Local Solver.} Suppose the clients' local learning rates are exponentially decayed, then we have $\gradweight_i=[1,\gamma_i, \dots,\gamma_i^{\cp_i-1}]$ where $\gamma_i \geq 0$ can vary across clients. As a result, we have $\|\gradweight_i\|_1 = (1-\gamma_i^{\cp_i})/(1-\gamma_i)$ and $w_i \propto p_i(1-\gamma_i^{\cp_i})/(1-\gamma_i)$. Comparing with the case of \fedprox \Cref{eqn:fedprox_w}, changing the values of $\gamma_i$ has a similar effect as changing $(1-\alpha)$.
    \item \textbf{Momentum SGD as Local Solver.} If we use momentum SGD where the local momentum buffers of active clients are reset to zero at the beginning of each round \cite{Wang2020SlowMo} due to the stateless nature of FL \cite{kairouz2019advances}, then we have $\gradweight_i =[1-\rho^{\cp_i},1-\rho^{\cp_i-1},\dots,1-\rho]/(1-\rho)$, where $\rho$ is the momentum factor, and $\|\gradweight_i\|_1 = [\cp_i - \rho(1-\rho^{\cp_i})/(1-\rho)]/(1-\rho)$.
\end{itemize}
More generally, the new formulation \Cref{eqn:new_form1} suggests that $w_i \neq p_i$ whenever clients have different $\|\gradweight_i\|_1$, which may be caused by imbalanced local updates (\ie, $\gradweight_i$'s have different dimensions), or various local learning rate/momentum schedules (\ie, $\gradweight_i$'s have different scales). 

\begin{figure}[t]
    \centering
    \includegraphics[width=.9\textwidth]{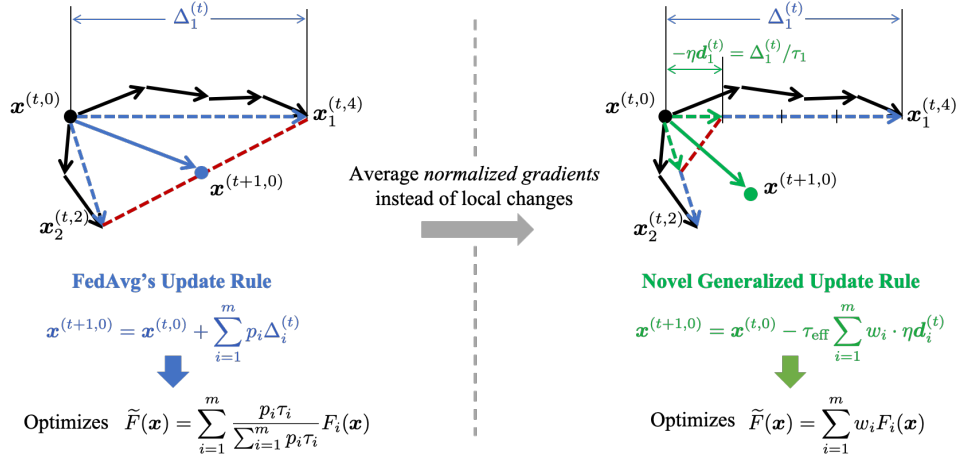}
    \caption{Comparison between the novel framework and \fedavg in the model parameter space. Solid black arrows denote local updates at clients. Solid green and blue arrows denote the global updates made by the novel generalized update rule and \fedavg respectively. While $w_i$ controls the direction of the solid green arrow, effective steps $\cpeff$ determines how far the global model moves along with this direction. In \fedavg, local changes are averaged based on the sizes of local datasets. However, this strategy implicitly assigns too higher weights for clients with more local steps, resulting in a biased global direction.} 
    \label{fig:generalized_updates}
    \vspace{-1em}
\end{figure}

\subsection{Convergence Analysis for Smooth Non-Convex Functions}
\label{sec:conv_analysis}

In \Cref{thm:general} and  \Cref{thm:bias} below we provide a convergence analysis for the general update rule \Cref{eqn:new_update} and quantify the solution bias due to objective inconsistency. The analysis relies on  \Cref{assump:smooth,assump:var} used in the standard analysis of SGD \cite{bottou2016optimization} and \Cref{assump:dissimilarity} commonly used in the federated optimization literature \cite{li2018federated,Li2020On,haddadpour2019convergence,karimireddy2019scaffold,reddi2020adaptive,wang2019matcha,kairouz2019advances} to capture the dissimilarities of local objectives.

\begin{assump}[Smoothness]\label{assump:smooth}
Each local objective function is Lipschitz smooth, that is, $\vecnorm{\tg_i(\x) - \tg_i(\by)} \leq \lip \vecnorm{\x - \by}, \forall i \in \{1, 2, \dots, \nworkers\}$.
\end{assump}
\begin{assump}[Unbiased Gradient and Bounded Variance] \label{assump:var}
The stochastic gradient at each client is an unbiased estimator of the local gradient: $\Exs_{\xi}[\sg_i(\x|\xi)] = \tg_i(\x)$, and has bounded variance $\Exs_\xi[\vecnorm{\sg_i(\x|\xi) - \tg_i(\x)}^2] \leq \vbnd, \forall i \in \{1,2,\dots,\nworkers\}, \vbnd \geq 0$. 
\end{assump}
\begin{assump}[Bounded Dissimilarity]\label{assump:dissimilarity}
For any sets of weights $\{w_i\geq 0\}_{i=1}^\nworkers, \sum_{i=1}^\nworkers w_i =1$, there exist constants $\bnda \geq 1, \bndb \geq 0$ such that $\sum_{i=1}^\nworkers w_i \vecnorm{\tg_i(\x)}^2 \leq \bnda\vecnorm{\sum_{i=1}^\nworkers w_i\tg_i(\x)}^2 + \bndb$. If local functions are identical to each other, then we have $\bnda = 1, \bndb = 0$. 
\end{assump} 
Our main theorem is stated as follows.
\begin{thm}[\textbf{Convergence to the Surrogate Objective $\widetilde{F}(\x)$'s Stationary Point}]\label{thm:general}
Under \Cref{assump:smooth,assump:var,assump:dissimilarity}, any federated optimization algorithm that follows the update rule \Cref{eqn:new_update}, will converge to a stationary point of a surrogate objective $\surloss(\x)=\sum_{i=1}^\nworkers w_i \obj_i(\x)$. More specifically, if the total communication rounds $T$ is pre-determined and the learning rate $\eta$ is small enough $\lr = \sqrt{\nworkers/\cpavg T}$ where $\cpavg=\frac{1}{\nworkers}\sum_{i=1}^\nworkers \cp_i$, then the optimization error will be bounded as follows:
\begin{align}
    \min_{t\in[T]} \Exs\| \nabla \surloss(\x^{(t,0)}) \|^2
    \leq& \underbrace{\mathcal{O}\parenth{\frac{\cpavg/\cpeff}{\sqrt{\nworkers \cpavg T}}} + \mathcal{O}\parenth{\frac{A\vbnd}{\sqrt{\nworkers \cpavg T}} } + \mathcal{O}\parenth{\frac{\nworkers B\vbnd}{\cpavg T}} + \mathcal{O}\parenth{\frac{\nworkers C \bndb}{\cpavg T}}}_{\text{denoted by }\epsilon_{\text{opt}} \text{ in } \eqref{eqn:error_decomp}} \label{eqn:general_error}
\end{align}
where $\mathcal{O}$ swallows all constants (including $\lip$), and quantities $A, B, C$ are defined as follows: 
\begin{align}
    A = \nworkers\cpeff\sum_{i=1}^\nworkers \frac{w_i^2 \vecnorm{\bm{a}_i}_2^2}{\vecnorm{\bm{a}_i}_1^2}, \
    B = \sum_{i=1}^\nworkers w_i (\vecnorm{\bm{a}_i}_2^2 - a_{i,-1}^2), \
    C = \max_i\{\vecnorm{\bm{a}_i}_1^2-\vecnorm{\bm{a}_i}_1 a_{i,-1} \}
\end{align}
where $a_{i,-1}$ is the last element in the vector $\gradweight_i$.
\end{thm}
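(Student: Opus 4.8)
The plan is to run the standard one-step SGD descent analysis, but applied to the \emph{surrogate} objective $\surloss=\sum_i w_i\obj_i$ rather than to $\obj$, and then carefully control the two sources of extra error: (i) the per-client drift of the local iterates $\x_i^{(t,k)}$ away from the shared starting point $\x^{(t,0)}$, and (ii) the fact that the aggregated direction $\sum_i w_i\nsg_i^{(t)}$ is a $\gradweight_i$-weighted average of \emph{stale} stochastic gradients, not a single stochastic gradient of $\surloss$ at $\x^{(t,0)}$. First I would note that $\surloss$ is $\lip$-smooth (immediate from \Cref{assump:smooth} and $w_i\ge 0$, $\sum_i w_i=1$), write $\x^{(t)}:=\x^{(t,0)}$, and, taking conditional expectation over the round-$t$ randomness, combine smoothness with the update rule \eqref{eqn:new_update} to get
\begin{align}
\Exs[\surloss(\x^{(t+1)})] \le \surloss(\x^{(t)}) - \cpeff\lr\,\big\langle \tg(\x^{(t)}),\, \textstyle\sum_i w_i\ntg_i^{(t)}\big\rangle + \tfrac{\lip\cpeff^2\lr^2}{2}\,\Exs\big\|\textstyle\sum_i w_i\nsg_i^{(t)}\big\|^2, \nonumber
\end{align}
where I abuse $\tg$ for $\nabla\surloss$ and set $\ntg_i^{(t)}:=\matH_i^{(t)}\gradweight_i/\|\gradweight_i\|_1$ with $\matH_i^{(t)}$ stacking the true local gradients $\nabla F_i(\x_i^{(t,k)})$, so that $\Exs[\nsg_i^{(t)}]=\ntg_i^{(t)}$ by \Cref{assump:var}.

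Next I would dispose of the two remaining terms. For the inner product I would use $-\langle a,b\rangle=\tfrac12\|a-b\|^2-\tfrac12\|a\|^2-\tfrac12\|b\|^2$ with $a=\nabla\surloss(\x^{(t)})$, $b=\sum_i w_i\ntg_i^{(t)}$; the mismatch term is bounded by convexity and \Cref{assump:smooth} as $\tfrac12\sum_i w_i\|\nabla F_i(\x^{(t)})-\ntg_i^{(t)}\|^2 \le \tfrac{\lip^2}{2}\sum_i w_i\sum_k \tfrac{a_{i,k}}{\|\gradweight_i\|_1}\|\x_i^{(t,k)}-\x^{(t)}\|^2$, a weighted client-drift quantity. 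For the squared-norm term I would split $\Exs\|\sum_i w_i\nsg_i^{(t)}\|^2 \le 2\Exs\|\sum_i w_i(\nsg_i^{(t)}-\ntg_i^{(t)})\|^2 + 2\|\sum_i w_i\ntg_i^{(t)}\|^2$; the first piece, using independence across clients and the fact that the per-local-step noise is a martingale-difference sequence (cross terms across steps $j<k$ vanish after conditioning on the iterate at step $k$), equals $2\sum_i w_i^2\Exs\|\nsg_i^{(t)}-\ntg_i^{(t)}\|^2 \le 2\vbnd\sum_i w_i^2\|\gradweight_i\|_2^2/\|\gradweight_i\|_1^2$ — this is exactly the origin of the constant $A$ once we later multiply by $\lip\cpeff^2\lr^2$, divide the recursion by $\cpeff\lr$, and substitute $\lr=\sqrt{\nworkers/\cpavg T}$.

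The last ingredient is the client-drift bound. Writing $\x_i^{(t,k)}-\x^{(t)}$ as an $\lr$-scaled partial linear combination of the local stochastic gradients $\sg_i(\x_i^{(t,j)})$, $j<k$, I would apply Jensen/Cauchy--Schwarz together with \Cref{assump:var,assump:dissimilarity} to get, for each $i$,
\begin{align}
\textstyle\sum_k \tfrac{a_{i,k}}{\|\gradweight_i\|_1}\Exs\|\x_i^{(t,k)}-\x^{(t)}\|^2 \;\lesssim\; \lr^2\big((\|\gradweight_i\|_2^2-a_{i,-1}^2)\,\vbnd + (\|\gradweight_i\|_1^2-\|\gradweight_i\|_1 a_{i,-1})\,(\bnda\|\nabla\surloss(\x^{(t)})\|^2+\bndb)\big), \nonumber
\end{align}
where the $-a_{i,-1}^2$ and $-\|\gradweight_i\|_1 a_{i,-1}$ corrections reflect that the \emph{last} local gradient never enters any drift term; these are the sources of $B$ and $C$. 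Plugging this back, the $\bnda\|\nabla\surloss(\x^{(t)})\|^2$ contribution from the drift and the $\|\sum_i w_i\ntg_i^{(t)}\|^2$ term are absorbed into the $-\tfrac{\cpeff\lr}{2}\|\nabla\surloss(\x^{(t)})\|^2$ term, which is precisely what forces ``$\lr$ small enough'' (the $\mathcal{O}(\cdot)$ constants absorb $\lip$ and $\bnda$). Finally I would telescope over $t=0,\dots,T-1$, divide by $\cpeff\lr T/2$, bound $\surloss(\x^{(0)})-\inf_{\x}\surloss(\x)$ by a constant, and substitute $\lr=\sqrt{\nworkers/\cpavg T}$ to recover \eqref{eqn:general_error}; the assertion that convergence is to a stationary point of $\surloss$ (not $\obj$) is then just the fact that the left-hand side is $\min_t\Exs\|\nabla\surloss(\x^{(t,0)})\|^2$.

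I expect the main obstacle to be the client-drift bound. Since the framework allows an arbitrary nonnegative accumulation vector $\gradweight_i$ (proximal, decayed learning rate, local momentum, \dots), one must express every \emph{partial} iterate $\x_i^{(t,k)}-\x^{(t)}$ in terms of $\gradweight_i$ and track the per-step noise and dissimilarity contributions tightly enough that the $\vbnd$-part collapses to exactly $\|\gradweight_i\|_2^2-a_{i,-1}^2$ and the dissimilarity part to $\|\gradweight_i\|_1^2-\|\gradweight_i\|_1 a_{i,-1}$ uniformly over all admissible $\gradweight_i$; a cruder bound would replace $B,C$ by looser quantities and break the sharpness of the stated rate. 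A secondary, routine difficulty is checking that the learning-rate threshold used for the absorption step is compatible with $\lr=\sqrt{\nworkers/\cpavg T}$ for all large $T$, so that the final bound holds with only $\mathcal{O}(\cdot)$ constants depending on $\lip$ and $\bnda$.
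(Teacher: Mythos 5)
Your proposal is correct and follows essentially the same route as the paper's proof: the same smoothness-based one-step descent on $\surloss$, the same inner-product identity and Jensen reduction of the mismatch term to a $w_i$-weighted client-drift quantity, the same martingale/independence argument giving the $A$-type variance term, and the same drift bound whose $\|\gradweight_i\|_2^2-a_{i,-1}^2$ and $\|\gradweight_i\|_1(\|\gradweight_i\|_1-a_{i,-1})$ corrections produce $B$ and $C$, with the $\bnda$-gradient contribution absorbed under a small-learning-rate condition before telescoping. The only cosmetic differences are that the paper applies \Cref{assump:dissimilarity} after the $w_i$-weighted average over clients (keeping $\|\tg_i(\x^{(t,0)})\|^2$ per client until then) and resolves the self-referential drift recursion explicitly via the uniform factor $D=4\lr^2\lip^2\max_i\{\vecnorm{\gradweight_i}_1(\vecnorm{\gradweight_i}_1-a_{i,-1})\}$, which is exactly the tightening you anticipate as the main obstacle.
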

In \Cref{sec:proof_thm1}, we also provide another version of this theorem that explicitly contains the local learning rate $\lr$. Moreover, since the surrogate objective $\surloss(\x)$ and the original objective $\obj(\x)$ are just different linear combinations of the local functions, once the algorithm converges to a stationary point of $\surloss(\x)$, one can also obtain some guarantees in terms of $\obj(\x)$, as given by \Cref{thm:bias} below. 
\begin{thm}[\textbf{Convergence in Terms of the True Objective $F(\x)$}]\label{thm:bias}
Under the same conditions as \Cref{thm:general}, the minimal gradient norm of the true global objective function $\obj(\x)=\sum_{i=1}^\nworkers p_i \obj_i(\x)$ will be bounded as follows:
\begin{align}
    \min_{t\in[T]} \|\nabla \obj(\x^{(t,0)})\|^2
    \leq \underbrace{2\brackets{\csqdist(\bnda-1)+1}\epsilon_{\text{opt}}}_{\text{vanishing error term } } + \underbrace{2\csqdist \bndb}_{\text{non-vanishing error due to obj. inconsistency}} \label{eqn:error_decomp}
\end{align}
where $\epsilon_{\text{opt}}$ denotes the vanishing optimization error given by \Cref{eqn:general_error} and $\smash{\csqdist}=\sum_{i=1}^\nworkers (p_i-w_i)^2/w_i$ represents the chi-square divergence between vectors $\bm{p}=[p_1,\dots,p_\nworkers]$ and $\bm{w}=[w_1,\dots,w_\nworkers]$.
\end{thm}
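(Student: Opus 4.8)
The plan is to convert the convergence guarantee for the surrogate objective $\surloss$ established in \Cref{thm:general} into a guarantee for the true objective $\obj$, by controlling pointwise the discrepancy between $\tg(\x)=\sum_{i=1}^{\nworkers}p_i\nabla F_i(\x)$ and $\nabla\surloss(\x)=\sum_{i=1}^{\nworkers}w_i\nabla F_i(\x)$. Writing $p_i=w_i+(p_i-w_i)$ gives the exact identity $\tg(\x)=\nabla\surloss(\x)+\sum_{i=1}^{\nworkers}(p_i-w_i)\nabla F_i(\x)$, so that
\begin{align}
\|\tg(\x)\|^2\ \le\ 2\,\|\nabla\surloss(\x)\|^2+2\,\Big\|\sum_{i=1}^{\nworkers}(p_i-w_i)\nabla F_i(\x)\Big\|^2 .\nonumber
\end{align}
All that remains is to bound the ``inconsistency drift'' $\|\sum_i(p_i-w_i)\nabla F_i(\x)\|^2$ by $\csqdist\big[(\bnda-1)\|\nabla\surloss(\x)\|^2+\bndb\big]$, after which substitution yields the claimed pointwise inequality $\|\tg(\x)\|^2\le 2[\csqdist(\bnda-1)+1]\|\nabla\surloss(\x)\|^2+2\csqdist\bndb$.

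The crucial observation for the drift bound is that $\sum_i(p_i-w_i)=0$, so the client gradients may be recentered around $\nabla\surloss(\x)$ at no cost: $\sum_i(p_i-w_i)\nabla F_i(\x)=\sum_i(p_i-w_i)\big(\nabla F_i(\x)-\nabla\surloss(\x)\big)$. Splitting $p_i-w_i=\sqrt{w_i}\cdot(p_i-w_i)/\sqrt{w_i}$ over the indices with $w_i>0$ (if some $w_i=0$ then necessarily $p_i=0$, otherwise the statement is vacuous, and such terms drop out) and applying Cauchy--Schwarz gives
\begin{align}
\Big\|\sum_{i=1}^{\nworkers}(p_i-w_i)\nabla F_i(\x)\Big\|^2
&\le \Big(\sum_{i=1}^{\nworkers}\frac{(p_i-w_i)^2}{w_i}\Big)\Big(\sum_{i=1}^{\nworkers}w_i\big\|\nabla F_i(\x)-\nabla\surloss(\x)\big\|^2\Big)\nonumber\\
&=\csqdist\Big(\sum_{i=1}^{\nworkers}w_i\|\nabla F_i(\x)\|^2-\|\nabla\surloss(\x)\|^2\Big),\nonumber
\end{align}
where the last equality is the bias--variance identity $\sum_i w_i\|\nabla F_i(\x)-\nabla\surloss(\x)\|^2=\sum_i w_i\|\nabla F_i(\x)\|^2-\|\sum_i w_i\nabla F_i(\x)\|^2$. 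Then I apply \Cref{assump:dissimilarity} with exactly the weights $\{w_i\}$ that define $\surloss$ (legitimate since $w_i\ge 0$, $\sum_i w_i=1$): $\sum_i w_i\|\nabla F_i(\x)\|^2\le \bnda\|\nabla\surloss(\x)\|^2+\bndb$. Using $\bnda\ge 1$, the drift bound becomes $\csqdist[(\bnda-1)\|\nabla\surloss(\x)\|^2+\bndb]$, which completes the pointwise estimate.

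To finish, I evaluate the pointwise inequality at each iterate $\x^{(t,0)}$, take expectations, and minimize over $t\in[T]$; since the coefficient $2[\csqdist(\bnda-1)+1]$ does not depend on $t$, it passes outside the minimum, so $\min_{t\in[T]}\mathbb{E}\|\tg(\x^{(t,0)})\|^2\le 2[\csqdist(\bnda-1)+1]\min_{t\in[T]}\mathbb{E}\|\nabla\surloss(\x^{(t,0)})\|^2+2\csqdist\bndb$, and the remaining minimum is at most $\epsilon_{\text{opt}}$ by \Cref{thm:general} (\Cref{eqn:general_error}). I do not anticipate a genuine obstacle: the argument is a chain of Cauchy--Schwarz and elementary identities. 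The only step requiring insight rather than bookkeeping is the recentering $\sum_i(p_i-w_i)\nabla F_i=\sum_i(p_i-w_i)(\nabla F_i-\nabla\surloss)$ before Cauchy--Schwarz, since this is exactly what sharpens the factor from $\bnda$ to $\bnda-1$; the one bookkeeping subtlety is to invoke \Cref{assump:dissimilarity} with the same weights $\{w_i\}$ as in $\surloss$ (not with $\{p_i\}$ or a renormalized vector), and to note that $\csqdist$ is finite precisely when $w_i>0$ whenever $p_i>0$.
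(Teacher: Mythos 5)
Your proposal is correct and follows essentially the same route as the paper: the same decomposition $\|\nabla F\|^2\le 2\|\nabla\surloss\|^2+2\|\nabla F-\nabla\surloss\|^2$, the same recentering of $\sum_i(p_i-w_i)\nabla F_i$ around $\nabla\surloss$ followed by Cauchy--Schwarz to produce $\csqdist$, and the same application of \Cref{assump:dissimilarity} with the weights $\{w_i\}$ to sharpen $\bnda$ to $\bnda-1$. The only cosmetic differences are that you make the bias--variance identity explicit and pass to $\min_{t}$ directly rather than through the average over rounds as the paper does; both are valid.
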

\textbf{Discussion:}
\Cref{thm:general,thm:bias} describe the convergence behavior of a broad class of federated heterogeneous optimization algorithms. Observe that when all clients take the same number of local steps using the same local solver, we have $\bm{p} = \bm{w}$ such that $\chi^2 =0$. Also, when all local functions are identical to each other, we have $\bnda=1, \bndb=0$. Only in these two special cases, is there no objective inconsistency. For most other algorithms subsumed by the general update rule in \eqref{eqn:new_update}, both $w_i$ and $\cpeff$ are influenced by the choice of $\gradweight_i$. When clients have different local progress (\ie, different $\gradweight_i$ vectors), previous algorithms will end up with a non-zero error floor $\chi^2\bndb$, which does not vanish to $0$ even with sufficiently small learning rate. In \Cref{sec:lower_bound}, we further construct a lower bound and show that $\lim_{T\rightarrow\infty}\min_{t\in[T]} \|\nabla \obj(\x^{(t,0)})\|^2 = \Omega(\chi^2\bndb)$, suggesting \Cref{eqn:error_decomp} is tight.

\textbf{Consistency with Previous Results.}
When the client optimizer is fixed as vanilla SGD (\ie, $\gradweight_i=[1,1,\dots,1]\in\mathbb{R}^{\cp_i}$), then \cref{thm:general,thm:bias} gives the convergence guarantee for \fedavg. In this case, $\|\gradweight_i\|_1 = \cp_i$. As a consequence, the quantities $A,B,C$ can be written as follows:
\begin{align}
    A_{\fedavg}= \frac{\nworkers\sum_{i=1}^\nworkers p_i^2 \cp_i}{\Exs_{\bm{p}}[\bm{\cp}]}, \
    B_{\fedavg}= \Exs_{\bm{p}}[\bm{\cp}] - 1 + \frac{\var_{\bm{p}}[\bm{\cp}]}{\Exs_{\bm{p}}[\bm{\cp}]}, \
    C_{\fedavg}=\cpmax(\cpmax-1)
\end{align}
where $\Exs_{\bm{p}}[\bm{\cp}]=\sum_{i=1}^\nworkers p_i \cp_i$ and $\var_{\bm{p}}[\bm{\cp}]=\sum_{i=1}^\nworkers p_i \cp_i^2 - [\sum_{i=1}^\nworkers p_i \cp_i]^2$. If all clients perform the same local steps $\cp_i=\cp$ and have the same amount of data $p_i=1/\nworkers$, then $w_i=p_i=1/\nworkers, \cpeff = \cp, A_\fedavg=1, B_\fedavg = \cp-1, C_\fedavg=\cp(\cp-1)$. Substituting these values back into \Cref{thm:general}, it recovers previous results of \fedavg \cite{yu2019linear,wang2018cooperative,karimireddy2019scaffold}. If one further sets $\cp=1$, then $B_\fedavg=C_\fedavg=0$, \Cref{thm:general} recovers the result of synchronous mini-batch SGD \cite{bottou2016optimization}.


\textbf{Novel Insights Into the Convergence of FedProx and the Effect of $\mu$.}
Recall that in \fedprox $\gradweight_i=[(1-\alpha)^{\cp_i-1},\dots,(1-\alpha),1]$, where $\alpha =\lr\mu$. Accordingly, substituting the effective steps and aggregated weight, given by \Cref{eqn:fedprox_w}, into \Cref{eqn:general_error,eqn:error_decomp}, we get the convergence guarantee for \fedprox. Again, it has objective inconsistency because $w_i\neq p_i$. As we increase $\alpha$, the weights $w_i$ come closer to $p_i$ and thus, the non-vanishing error $\chi^2\bndb$ in \Cref{eqn:error_decomp} decreases (see blue curve in \Cref{fig:tradeoff}). However increasing $\alpha$ worsens the slowdown $\cpavg/\cpeff$, which appears in the first error term in \Cref{eqn:general_error} (see the red curve in \Cref{fig:tradeoff}). In the extreme case when $\alpha=1$, although \fedprox achieves objective consistency, it has a significantly slower convergence because $\cpeff = 1$ and the first term in \Cref{eqn:general_error} is $\cpavg$ times larger than that with \fedavg (eq. to $\alpha=0$).
\begin{figure}[!ht]
    \centering
    \includegraphics[width=.45\textwidth]{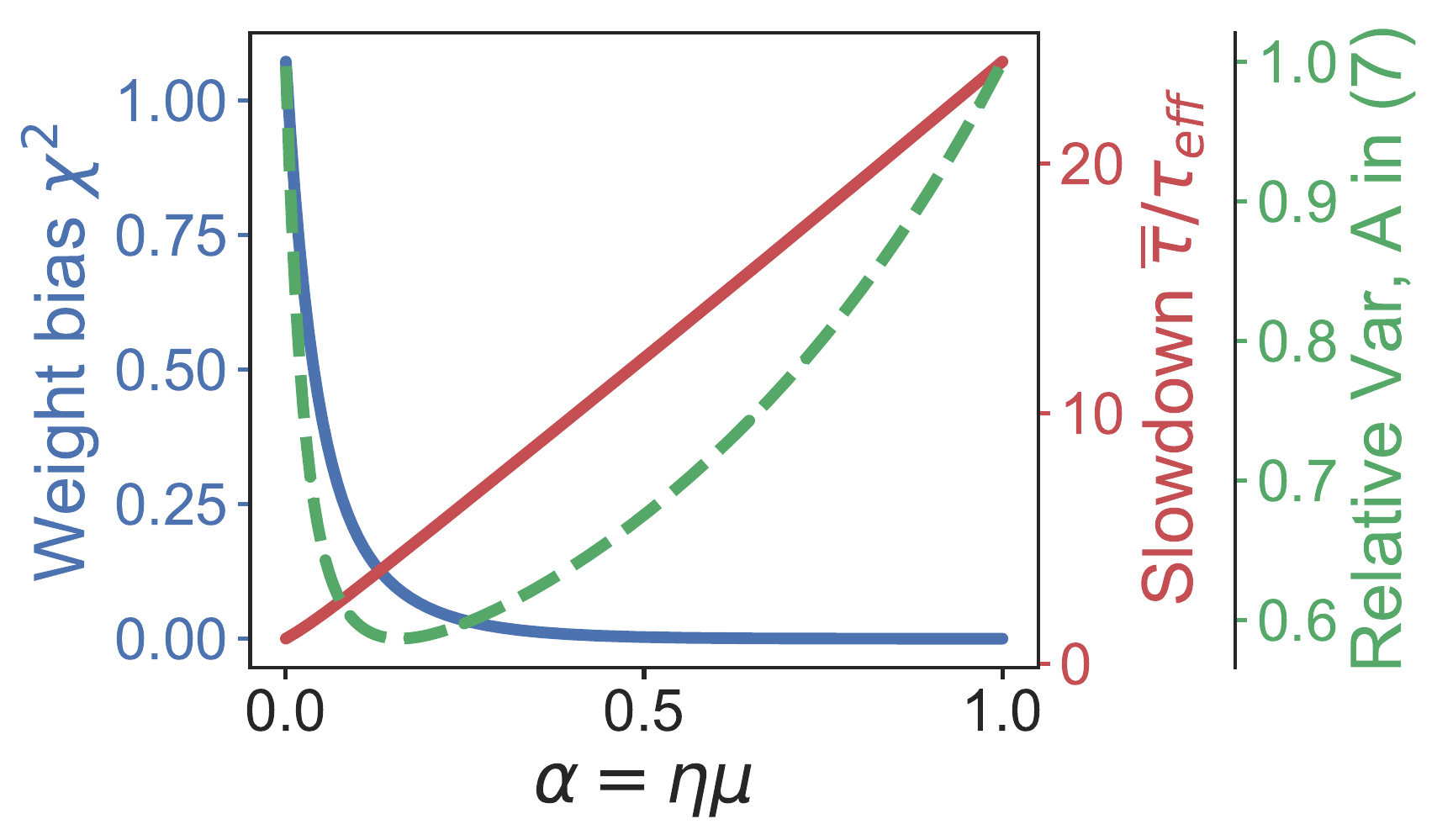}
    \caption{Illustration on how the parameter $\alpha=\lr\mu$ influences the convergence of \fedprox. We set $\nworkers=30, p_i=1/\nworkers, \cp_i\sim\mathcal{N}(20,20)$. `\emph{Weight bias}' denotes the chi-square distance between $\bm{p}$ and $\bm{w}$. `\emph{Slowdown}' and '\emph{Relative Variance}' quantify how the first and the second terms in \Cref{eqn:general_error} change.}
    \label{fig:tradeoff}
\end{figure}

\Cref{thm:general} also reveals that, in \fedprox, there should exist a best value of $\alpha$ that balances all terms in \Cref{eqn:general_error}. In Appendix, we provide a corollary showing that $\alpha=\mathcal{O}(\nicefrac{\nworkers^{\frac{1}{2}}}{\cpavg^{\frac{1}{2}}T^{\frac{1}{6}}})$ optimizes the error bound \Cref{eqn:general_error} of \fedprox and yields a convergence rate of $\mathcal{O}(\nicefrac{1}{\sqrt{\nworkers\cpavg T}} + \nicefrac{1}{T^{\frac{2}{3}}})$ on the surrogate objective. This can serve as a guideline on setting $\alpha$ in practice.

\textbf{Linear Speedup Analysis.} Another implication of \Cref{thm:general} is that when the communication rounds $T$ is sufficiently large, then the convergence of the surrogate objective will be dominated by the first two terms in \Cref{eqn:general_error}, which is $\nicefrac{1}{\sqrt{\nworkers \cpavg T}}$. This suggests that the algorithm only uses $T/\gamma$ total rounds when using $\gamma$ times more clients (\ie, achieving linear speedup) to reach the same error level. 

\section{FedNova: Proposed Federated Normalized Averaging Algorithm}\label{sec:algorithm}
\Cref{thm:general,thm:bias} suggest an extremely simple solution to overcome the problem of objective inconsistency. When we set $w_i=p_i$ in \Cref{eqn:new_update}, then the second non-vanishing term $\smash{\csqdist}\bndb$ in \Cref{eqn:error_decomp} will just become zero. This simple intuition yields the following new algorithm:
\begin{align}
    \fednova \quad 
    \x^{(t+1,0)} - \x^{(t,0)}
    = -\cpeff^{(t)} \sum_{i=1}^\nworkers p_i \cdot \lr\nsg_i^{(t)} \quad \text{where} \ \nsg_i^{(t)} = \frac{\matG_i^{(t)}\gradweight_i^{(t)}}{\|\gradweight_i^{(t)}\|_1} \label{eqn:nova_updates}
\end{align}
The proposed algorithm is named \emph{federated normalized averaging} (\fednova), because the normalized stochastic gradients $\nsg_i$ are averaged/aggregated instead of the local changes $\Delta_i=-\lr\matG_i\gradweight_i$. When the local solver is vanilla SGD, then $\gradweight_i = [1,1,\dots,1]\in\mathbb{R}^{\cp_i}$ and $\smash{\nsg_i^{(t)}}$ is a simple average over current round's gradients. In order to be consistent with \fedavg whose update rule is \Cref{eqn:new_form1}, one can simply set $\cpeff^{(t)}=\sum_{i=1}^\nworkers p_i \cp_i^{(t)}$. Then, in this case, the update rule of \fednova is equivalent to $\x^{(t+1,0)} - \x^{(t,0)} = (\sum_{i=1}^\nworkers p_i \cp_i^{(t)}) \sum_{i=1}^\nworkers p_i\Delta_i^{(t)}/\cp_i^{(t)}$. Comparing to previous algorithm $\x^{(t+1,0)} - \x^{(t,0)} = \sum_{i=1}^\nworkers p_i \Delta_i^{(t)}$, each accumulative local change $\Delta_i$ in \fednova is re-scaled by $(\sum_{i=1}^\nworkers p_i \cp_i^{(t)})/\cp_i^{(t)}$. This simple tweak in the aggregation weights eliminates inconsistency in the solution and gives better convergence than previous methods.

\textbf{Flexibility in Choosing Hyper-parameters and Local Solvers.} Besides vanilla SGD, the new formulation of \fednova naturally allows clients to choose various local solvers (\ie, client-side optimizer). As discussed in \Cref{sec:formulation}, the local solver can also be GD/SGD with decayed local learning rate, GD/SGD with proximal updates, GD/SGD with local momentum, etc. Furthermore, the value of $\cpeff$ is not necessarily to be controlled by the local solver as previous algorithms. For example, when using SGD with proximal updates, one can simply set $\cpeff = \sum_{i=1}^\nworkers p_i \cp_i$ instead of its default value $\sum_{i=1}^\nworkers p_i [1-(1-\alpha)^{\cp_i}]/\alpha$. This can help alleviate the slowdown problem discussed in \Cref{sec:conv_analysis}.

\textbf{Combination with Acceleration Techniques.} If clients have additional communication bandwidth, they can use cross-client variance reduction techniques to further accelerate the training \cite{liang2019variance,karimireddy2019scaffold,li2019feddane}. In this case, each local gradient step at the $t$-round will be corrected by $\sum_{i=1}^\nworkers p_i \nsg_i^{(t-1)} - \nsg_i^{(t-1)}$. That is, the local gradient at the $k$-th local step becomes $\sg_i(\x^{(t,k)}) + \sum_{i=1}^\nworkers p_i \nsg_i^{(t-1)} - \nsg_i^{(t-1)}$. Besides, on the server side, one can also implement server momentum or adaptive server optimizers \cite{Wang2020SlowMo,hsu2019measuring,reddi2020adaptive}, in which the aggregated normalized gradient $-\cpeff\sum_{i=1}^\nworkers \lr p_i\nsg_i$ is used to update the server momentum buffer instead of directly updating the server model.

\textbf{Convergence Analysis.} In \fednova, the local solvers at clients do not necessarily need to be the same or fixed across rounds. In the following theorem, we obtain strong convergence guarantee for \fednova, even with \emph{arbitrarily time-varying} local updates and client optimizers.
\begin{thm}[\textbf{Convergence of \fednova to a Consistent Solution}]\label{thm:nova}
Suppose that each client performs arbitrary number of local updates $\cp_i(t)$ using arbitrary gradient accumulation method $\gradweight_i(t), t\in[T]$ per round. Under \Cref{assump:smooth,assump:var,assump:dissimilarity}, and local learning rate as $\lr=\sqrt{\nworkers/(\widetilde{\cp}T)}$, where $\widetilde{\cp}= \sum_{t=0}^{T-1}\cpavg(t)/T$ denotes the average local steps over all rounds at clients, then \fednova converges to a stationary point of $\obj(\x)$ in a rate of $\mathcal{O}(1/\sqrt{\nworkers \widetilde{\cp} T})$. The detailed bound is the same as the right hand side of \Cref{eqn:general_error}, except that $\cpavg,A,B,C$ are replaced by their average values over all rounds.
\end{thm}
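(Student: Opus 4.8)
The plan is to derive \Cref{thm:nova} by re-running the argument behind \Cref{thm:general,thm:bias} with the single substitution $w_i = p_i$, threading a round index through every per-round quantity. The key observation is that \fednova in \Cref{eqn:nova_updates} is exactly the instance of the generalized update rule \Cref{eqn:new_update} in which the aggregation weights equal the data fractions, $w_i = p_i$. With this choice the surrogate objective equals the true objective, $\surloss(\x) = \sum_i w_i \obj_i(\x) = \obj(\x)$, and the chi-square divergence $\csqdist = \sum_i (p_i - w_i)^2/w_i$ vanishes identically. Substituting $\csqdist = 0$ into \Cref{eqn:error_decomp} kills the non-vanishing term $2\csqdist\bndb$ and reduces the prefactor $2[\csqdist(\bnda - 1) + 1]$ to $2$, so $\min_{t\in[T]}\Exs\vecnorm{\nabla\obj(\x^{(t,0)})}^2 \le 2\,\epsilon_{\text{opt}}$; it then remains only to establish the optimization bound $\epsilon_{\text{opt}}$, i.e. the right-hand side of \Cref{eqn:general_error}, in the time-varying setting, with $\cpavg, A, B, C$ replaced by their averages over the $T$ rounds.

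To get that bound I would follow the standard non-convex descent scheme, round by round. First apply $\lip$-smoothness (\Cref{assump:smooth}) to consecutive global iterates, $\obj(\x^{(t+1,0)}) \le \obj(\x^{(t,0)}) + \langle \nabla\obj(\x^{(t,0)}),\, \x^{(t+1,0)} - \x^{(t,0)}\rangle + \tfrac{\lip}{2}\vecnorm{\x^{(t+1,0)} - \x^{(t,0)}}^2$, and substitute the \fednova update $\x^{(t+1,0)} - \x^{(t,0)} = -\cpeff^{(t)}\lr\sum_i p_i\nsg_i^{(t)}$. Taking the conditional expectation over the round's mini-batches and writing $\ntg_i^{(t)} := \Exs[\nsg_i^{(t)}\mid\x^{(t,0)}]$ (the $\gradweight_i^{(t)}$-weighted average of the true local gradients along client $i$'s local trajectory), the cross term becomes $-\cpeff^{(t)}\lr\langle \nabla\obj(\x^{(t,0)}),\, \sum_i p_i\ntg_i^{(t)}\rangle$; since $\sum_i p_i\nabla\obj_i(\x^{(t,0)}) = \nabla\obj(\x^{(t,0)})$ exactly — this is the one place $w_i = p_i$ is used — completing the square produces a negative term $-\Theta(\cpeff^{(t)}\lr)\vecnorm{\nabla\obj(\x^{(t,0)})}^2$ plus a residual $\sum_i p_i\vecnorm{\ntg_i^{(t)} - \nabla\obj_i(\x^{(t,0)})}^2$ that measures how far the local iterates have drifted from $\x^{(t,0)}$. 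The second-order term is controlled by the variance decomposition $\Exs\vecnorm{\sum_i p_i\nsg_i^{(t)}}^2 \le \vecnorm{\sum_i p_i\ntg_i^{(t)}}^2 + \vbnd\sum_i p_i^2\,\vecnorm{\gradweight_i^{(t)}}_2^2/\vecnorm{\gradweight_i^{(t)}}_1^2$, the last sum being precisely $A(t)/(\nworkers\cpeff^{(t)})$.

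The remaining ingredient is the per-round local-drift lemma: bound $\Exs\vecnorm{\x_i^{(t,k)} - \x^{(t,0)}}^2$ for $0 \le k \le \cp_i(t)$ by unrolling the local recursion (each step is a scaled stochastic gradient), separating mean from noise, and invoking \Cref{assump:var,assump:dissimilarity} with $\bm{w} = \bm{p}$ so that $\sum_i p_i\vecnorm{\nabla\obj_i(\x^{(t,0)})}^2 \le \bnda\vecnorm{\nabla\obj(\x^{(t,0)})}^2 + \bndb$. Collecting the telescoped coefficients is exactly where the combinations $\vecnorm{\gradweight_i^{(t)}}_2^2 - (a_{i,-1}^{(t)})^2$ and $\vecnorm{\gradweight_i^{(t)}}_1^2 - \vecnorm{\gradweight_i^{(t)}}_1\, a_{i,-1}^{(t)}$ — that is, $B(t)$ and $C(t)$ — arise, the subtraction of the last element reflecting that the final local gradient never enters any earlier drift; this step is verbatim the fixed-$\gradweight_i$ computation with a time index attached. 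Substituting the drift bound and summing the smoothness inequality over $t = 0,\dots,T-1$, the left side telescopes to $\obj(\x^{(0,0)}) - \inf_{\x}\obj(\x)$; dividing by $\lr\sum_t\cpeff^{(t)}$ turns each $\sum_t(\cdot)^{(t)}$ on the right into $T$ times a round-average, and using $\min_{t\in[T]}\Exs\vecnorm{\nabla\obj(\x^{(t,0)})}^2 \le \sum_t \lambda^{(t)}\Exs\vecnorm{\nabla\obj(\x^{(t,0)})}^2$ with $\lambda^{(t)} = \cpeff^{(t)}/\sum_s\cpeff^{(s)}$ followed by the choice $\lr = \sqrt{\nworkers/(\widetilde{\cp}T)}$ yields the claimed $\mathcal{O}(1/\sqrt{\nworkers\widetilde{\cp}T})$ rate with $\cpavg, A, B, C$ replaced by their round-averages.

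I expect the main obstacle to be the bookkeeping forced by the arbitrary, time-varying $\cp_i(t)$ and $\gradweight_i(t)$. In the fixed case the ``learning rate small enough'' hypothesis of \Cref{thm:general} is what closes the drift recursion (inequalities of the type $\lr\lip\cpmax = \mathcal{O}(1)$); here those must hold uniformly across rounds, so one needs either a uniform cap on $\cp_i(t)$ or a mild condition ensuring $A(t),B(t),C(t)$ are dominated on average, and verifying that $\lr = \sqrt{\nworkers/(\widetilde{\cp}T)}$ with the round-averaged constants is still admissible is the delicate step. The other points to check are that $\lambda^{(t)} = \cpeff^{(t)}/\sum_s\cpeff^{(s)}$ is a genuine probability vector (so that the $\min$-over-$T$ step is valid), and that \Cref{assump:dissimilarity}, being quantified over all weight vectors, is indeed invoked at $\bm{w} = \bm{p}$, so that $\bnda,\bndb$ in the final bound are the same constants as in \Cref{thm:general}. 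Everything else is a routine re-indexing of the \Cref{thm:general} proof.
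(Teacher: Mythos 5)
Your proposal is correct and follows essentially the same route as the paper: the paper's proof simply reuses the per-round descent inequality from the proof of \Cref{thm:general} with $w_i=p_i$ (so the surrogate objective coincides with $\obj$), attaches a round index to $A,B,C$, averages over the $T$ rounds, and sets $\lr=\sqrt{\nworkers/(\widetilde{\cp}T)}$, while recording exactly the uniform-over-rounds learning-rate constraint $\lr\lip \leq \tfrac{1}{2}\min\{1/(\max_{i,t}\vecnorm{\gradweight_i^{(t)}}_1\sqrt{2\bnda+1}),\,1/\cpeff\}$ that you flag as the delicate point. The only cosmetic difference is your $\cpeff^{(t)}$-weighted treatment of the min-over-$t$ step (which would yield $\cpeff^{(t)}$-weighted rather than plain round-averages of $A,B,C$), whereas the paper divides each round's inequality by $\lr\cpeff$ and averages uniformly.
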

Using the techniques developed in \cite{Li2020On,karimireddy2019scaffold,haddadpour2019convergence}, \Cref{thm:nova}  can be further generalized to incorporate client sampling schemes. We provide a corresponding corollary in \Cref{sec:proof_sampling}. When $\samplenum$ clients are selected per round, then the convergence rate of \fednova is $\mathcal{O}(1/\sqrt{\samplenum \widetilde{\cp} T})$, where $\widetilde{\cp} = \sum_{t=0}^{T-1}\sum_{i \in \mathcal{S}^{(t)}} \cp_i^{(t)}/(q T)$ and $\mathcal{S}^{(t)}$ is the set of selected client indices at the $t$-th round.

\section{Experimental Results}\label{sec:exp}
\textbf{Experimental Setup.} We evaluate all algorithms on two setups with non-IID data partitioning: (1) \emph{Logistic Regression on a Synthetic Federated Dataset}: The dataset \texttt{Synthetic}$(1,1)$ is originally constructed in \cite{li2018federated}. The local dataset sizes $n_i, i\in[1,30]$ follows a power law. (2) \emph{DNN trained on a Non-IID partitioned CIFAR-10 dataset}: We train a VGG-11 \cite{simonyan2014very} network on the CIFAR-10 dataset \cite{krizhevsky2009learning}, which is partitioned across $16$ clients using a Dirichlet distribution $\text{Dir}_{16}(0.1)$, as done in \cite{wang2020federated}. The original CIFAR-10 test set (without partitioning) is used to evaluate the generalization performance of the trained global model. The local learning rate $\lr$ is decayed by a constant factor after finishing $50\%$ and $75\%$ of the communication rounds. The initial value of $\lr$ is tuned separately for \fedavg with different local solvers. When using the same solver, \fednova uses the same $\lr$ as \fedavg to guarantee a fair comparison. On CIFAR-10, we run each experiment with $3$ random seeds and report the average and standard deviation. More details are provided in \Cref{sec:more_exps}.
\begin{figure}[!ht]
    \centering
    \begin{subfigure}{.33\textwidth}
    \centering
    \includegraphics[width=\textwidth]{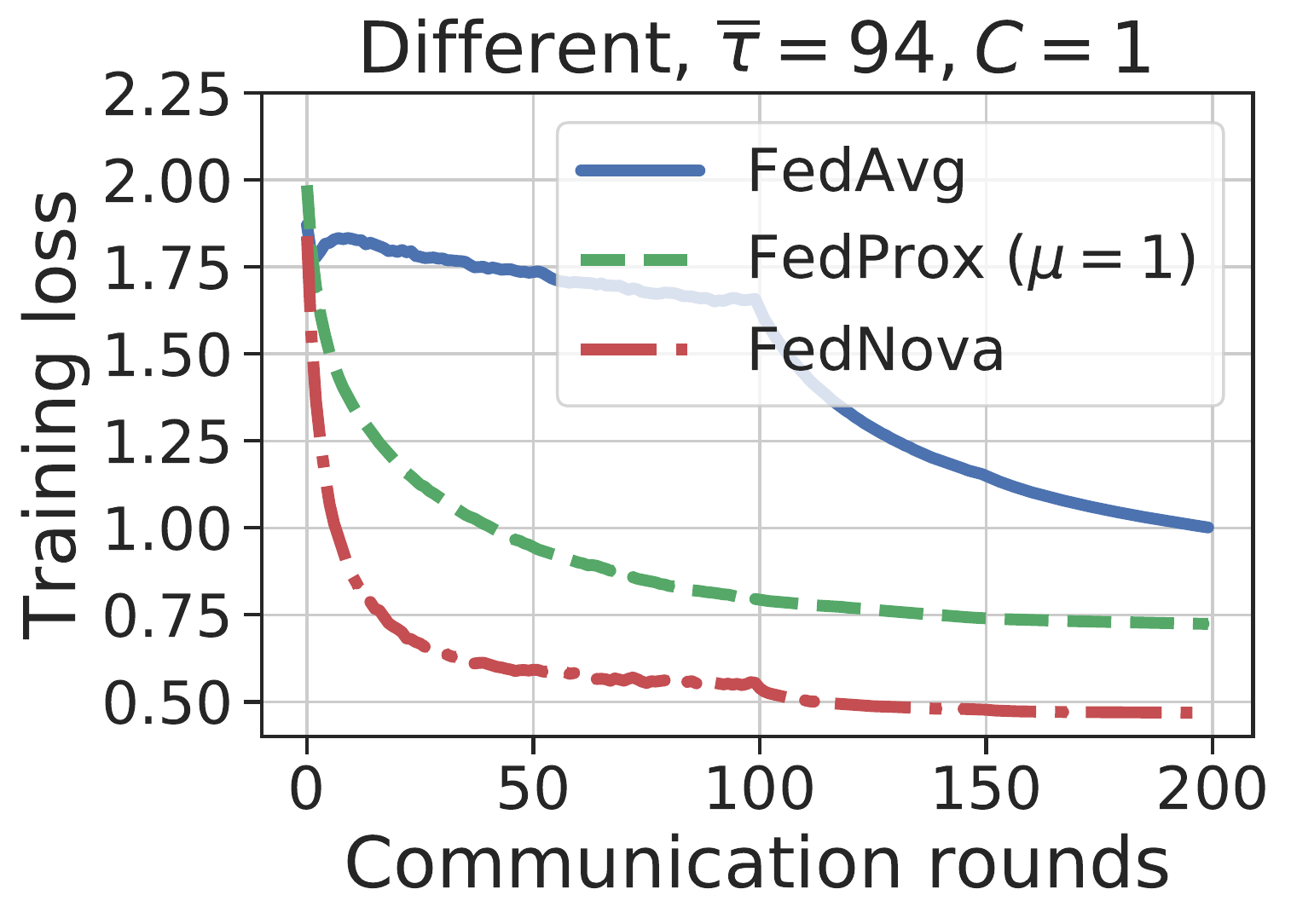}
    \end{subfigure}%
    ~
    \begin{subfigure}{.33\textwidth}
    \centering
    \includegraphics[width=\textwidth]{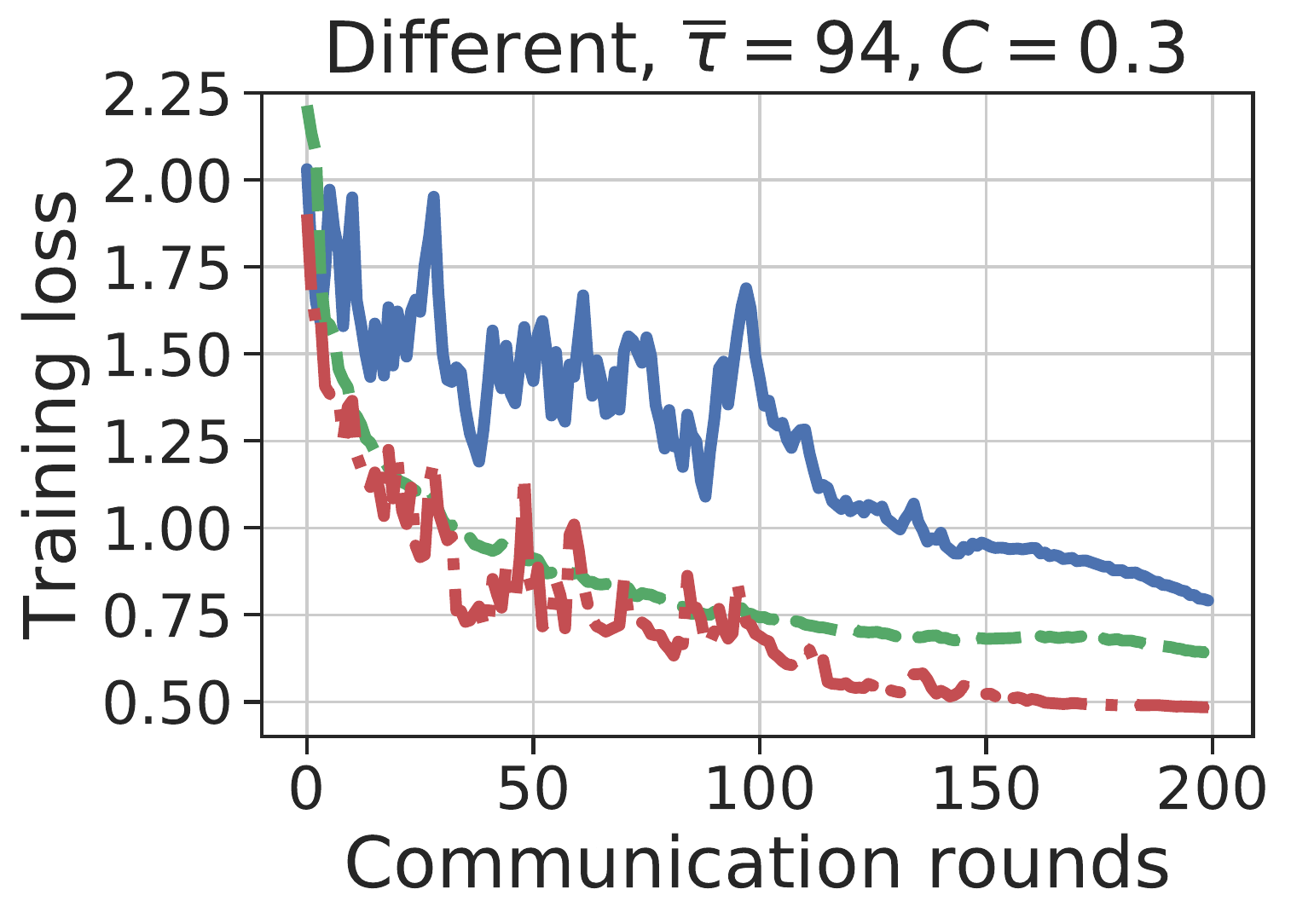}
    \end{subfigure}%
    ~
    \begin{subfigure}{.33\textwidth}
    \centering
    \includegraphics[width=\textwidth]{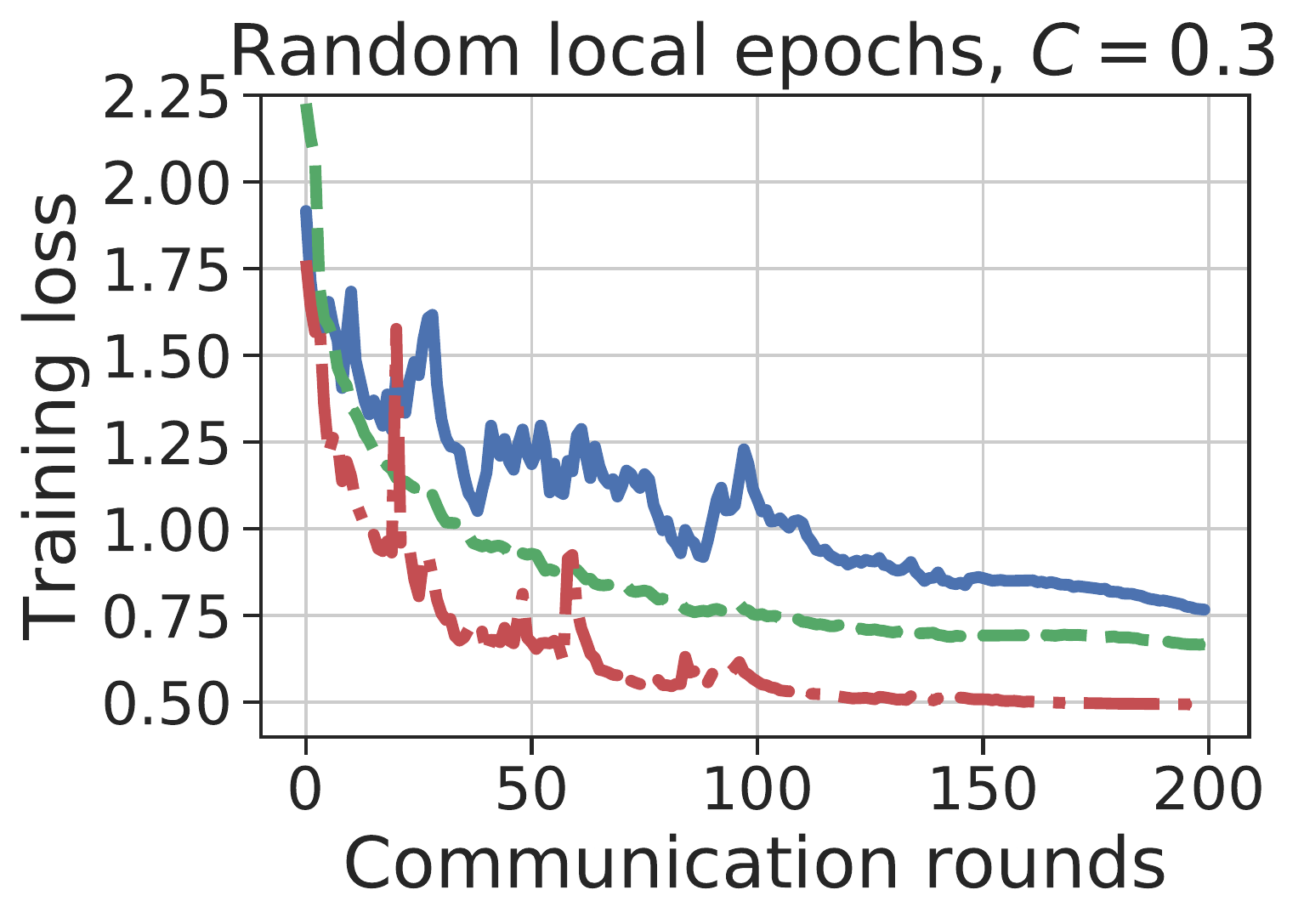}
    \end{subfigure}
    \caption{Results on the synthetic dataset constructed in \cite{li2018federated} under three different settings. \textbf{\emph{Left}}: All clients perform $E_i=5$ local epochs; \textbf{\emph{Middle}}: Only $C=0.3$ fraction of clients are randomly selected per round to perform $E_i=5$ local epochs; \textbf{\emph{Right}}: Only $C=0.3$ fraction of clients are randomly selected per round to perform random and time-varying local epochs $E_i(t) \sim \mathcal{U}(1,5)$.}
    \label{fig:synthetic}
    \vspace{-1em}
\end{figure}

\textbf{Synthetic Dataset Simulations.} In \Cref{fig:synthetic}, we observe that by simply changing $w_i$ to $p_i$, \fednova not only converges significantly faster than \fedavg but also achieves consistently the best performance under three different settings. Note that the only difference between \fednova and \fedavg is the aggregated weights when averaging the normalized gradients.

\begin{table}
    \caption{Results comparing \fedavg and \fednova with various client optimizers (\ie, local solvers) trained on non-IID CIFAR-10 dataset. \fedprox and \scaffold correspond to \fedavg with proximal SGD updates and cross-client variance-reduction (VR), respectively.}
    \label{tab:cifar}
    
    \centering
    \begin{tabular}{*4c}\toprule
    \multirow{2}{*}{Local Epochs} & \multirow{2}{*}{Client Opt.} & \multicolumn{2}{c}{Test Accuracy $\%$} \\\cmidrule(lr){3-4}
        & & \fedavg  & \fednova \\ \midrule
    \multirow{3}{*}{\makecell{$E_i=2$ \\ {\tiny$(16\leq\cp_i \leq 408)$}}} & Vanilla & $60.68${\color{gray}$\pm 1.05$} & $\textbf{66.31}${\color{gray}$\pm 0.86$}\\
    & Momentum &$65.26${\color{gray}$\pm 2.42$} & $\textbf{73.32}${\color{gray}$\pm 0.29$} \\
    & Proximal \cite{li2018federated} & $60.44${\color{gray}$\pm 1.21$} & $\textbf{69.92}${\color{gray}$\pm 0.34$} \\\midrule
    \multirow{5}{*}{\makecell{$E_i^{(t)}\sim \mathcal{U}(2,5)$ \\ {\tiny$(16\leq\cp_i^{(t)}\leq 1020)$}}} & Vanilla & $64.22${\color{gray}$\pm 1.06$}  & $\textbf{73.22}${\color{gray}$\pm 0.32$} \\
    & Momentum & $70.44${\color{gray}$\pm 2.99$}  & $\textbf{77.07}${\color{gray}$\pm 0.12$} \\
    & Proximal \cite{li2018federated} & $63.74${\color{gray}$\pm 1.44$}  & $\textbf{73.41}${\color{gray}$\pm 0.45$} \\
    & VR \cite{karimireddy2019scaffold} & $74.72${\color{gray}$\pm 0.34$}  & $\textbf{74.72}${\color{gray}$\pm 0.19$} \\
    & Momen.+VR & Not Defined  & $\textbf{79.19}${\color{gray}$\pm 0.17$} \\
    \bottomrule
    \end{tabular}
\end{table}

\textbf{Non-IID CIFAR-10 Experiments.} In \Cref{tab:cifar} we compare the performance of \fednova and \fedavg on non-IID CIFAR-10 with various client optimizers run for $100$ communication rounds. When the client optimizer is SGD or SGD with momentum, simply changing the weights yields a $6$-$9\%$ improvement on the test accuracy; When the client optimizer is proximal SGD, \fedavg is equivalent to \fedprox. By setting $\cpeff=\sum_{i=1}^\nworkers p_i \cp_i$ and correcting the weights $w_i=p_i$ while keeping $\gradweight_i$ same as \fedprox, \texttt{FedNova-Prox} achieves about $10\%$ higher test accuracy than \fedprox. In \Cref{fig:cifar_test_curves}, we further compare the training curves. It turns out that \fednova consistently converges faster than \fedavg. When using variance-reduction methods such as \scaffold (that requires doubled communication), \fednova-based method preserves the same test accuracy. Furthermore, combining local momentum and variance-reduction can be easily achieved in \fednova. It yields the highest test accuracy among all other local solvers. This kind of combination is non-trivial and has not appeared yet in the literature. We provide its pseudocode in \Cref{sec:pseudocode}.

\begin{figure}[t]
    \centering
    \begin{subfigure}{.33\textwidth}
    \centering
    \includegraphics[width=\textwidth]{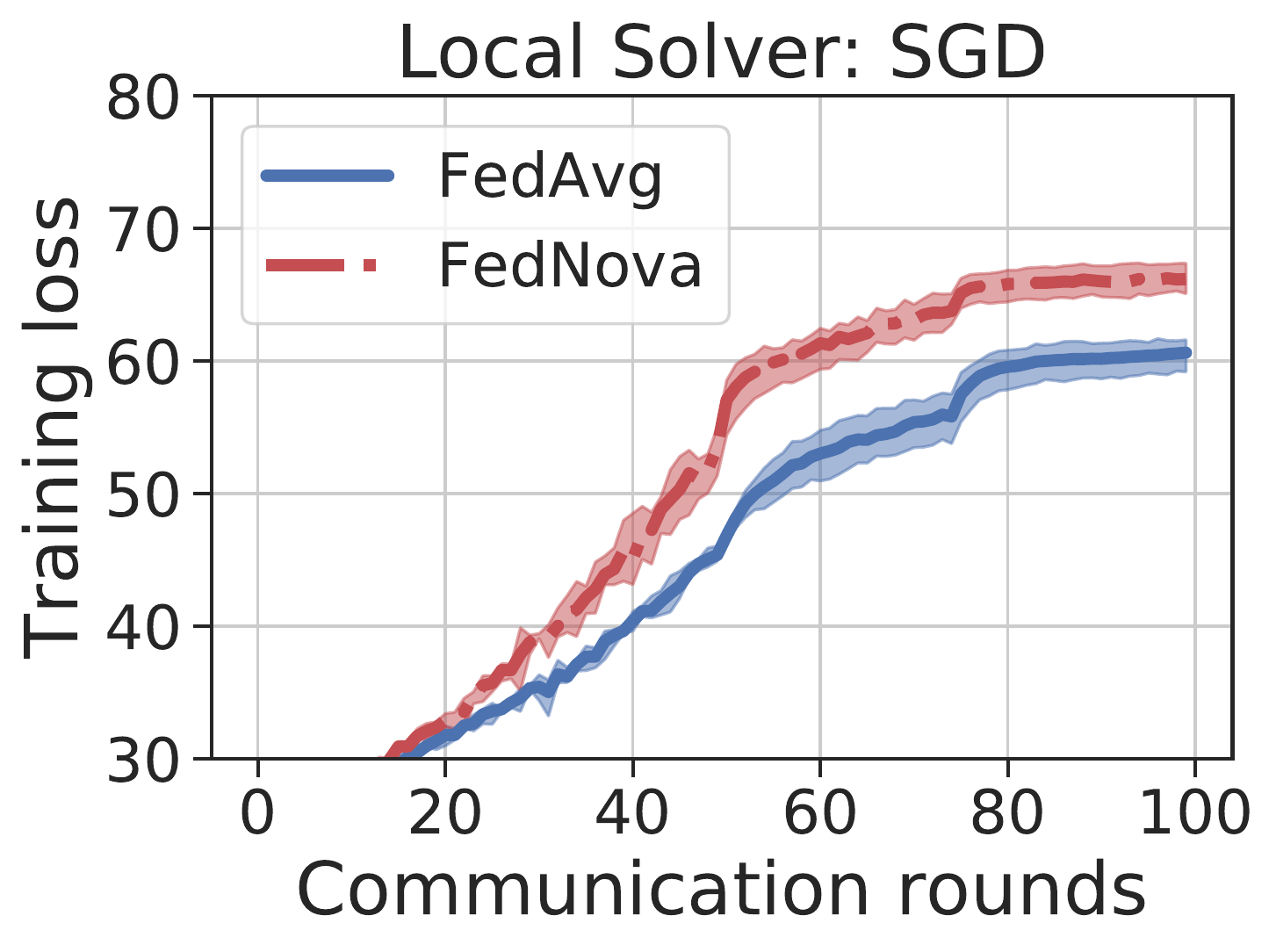}
    \end{subfigure}%
    ~
    \begin{subfigure}{.33\textwidth}
    \centering
    \includegraphics[width=\textwidth]{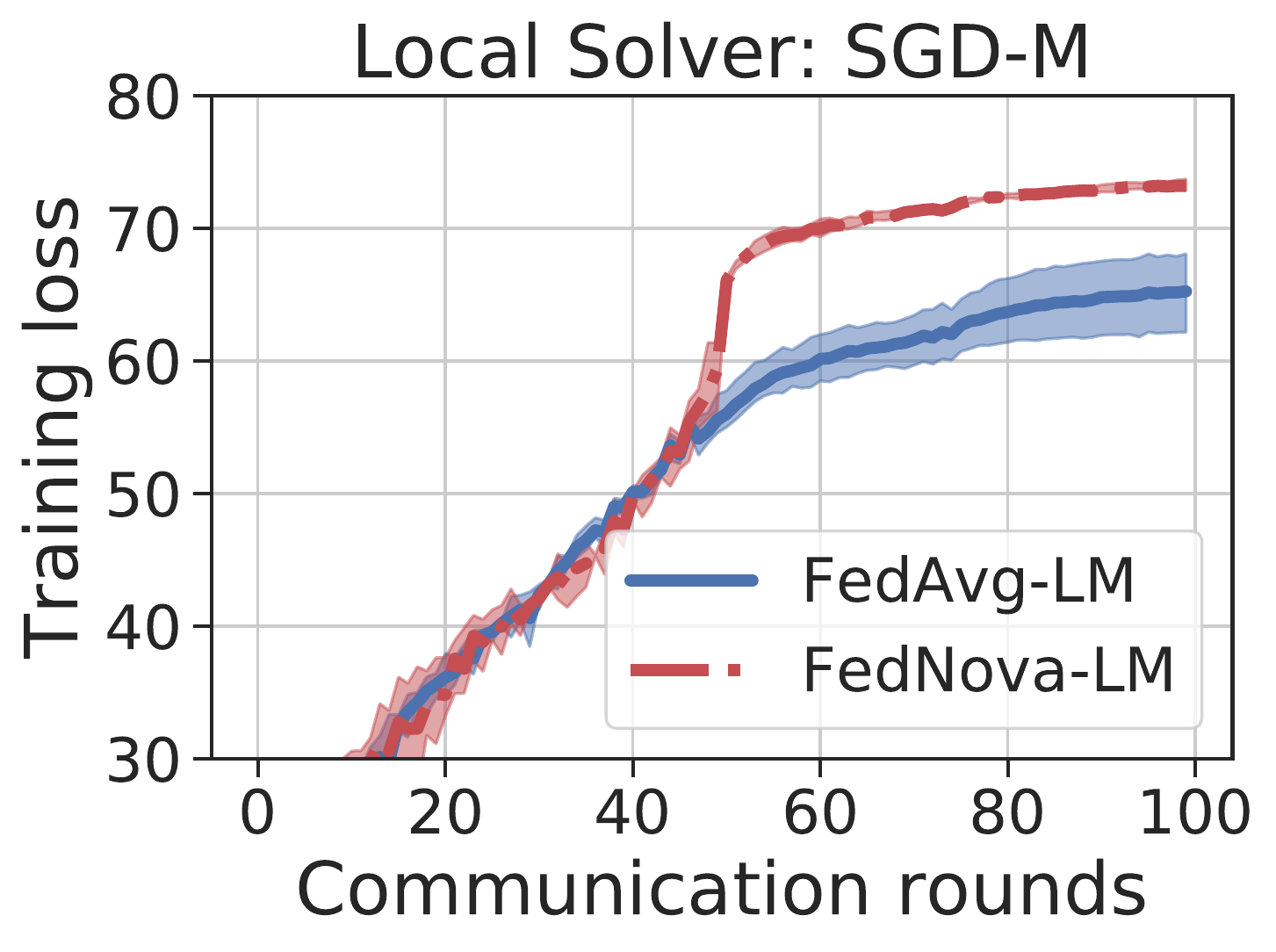}
    \end{subfigure}%
    ~
    \begin{subfigure}{.33\textwidth}
    \centering
    \includegraphics[width=\textwidth]{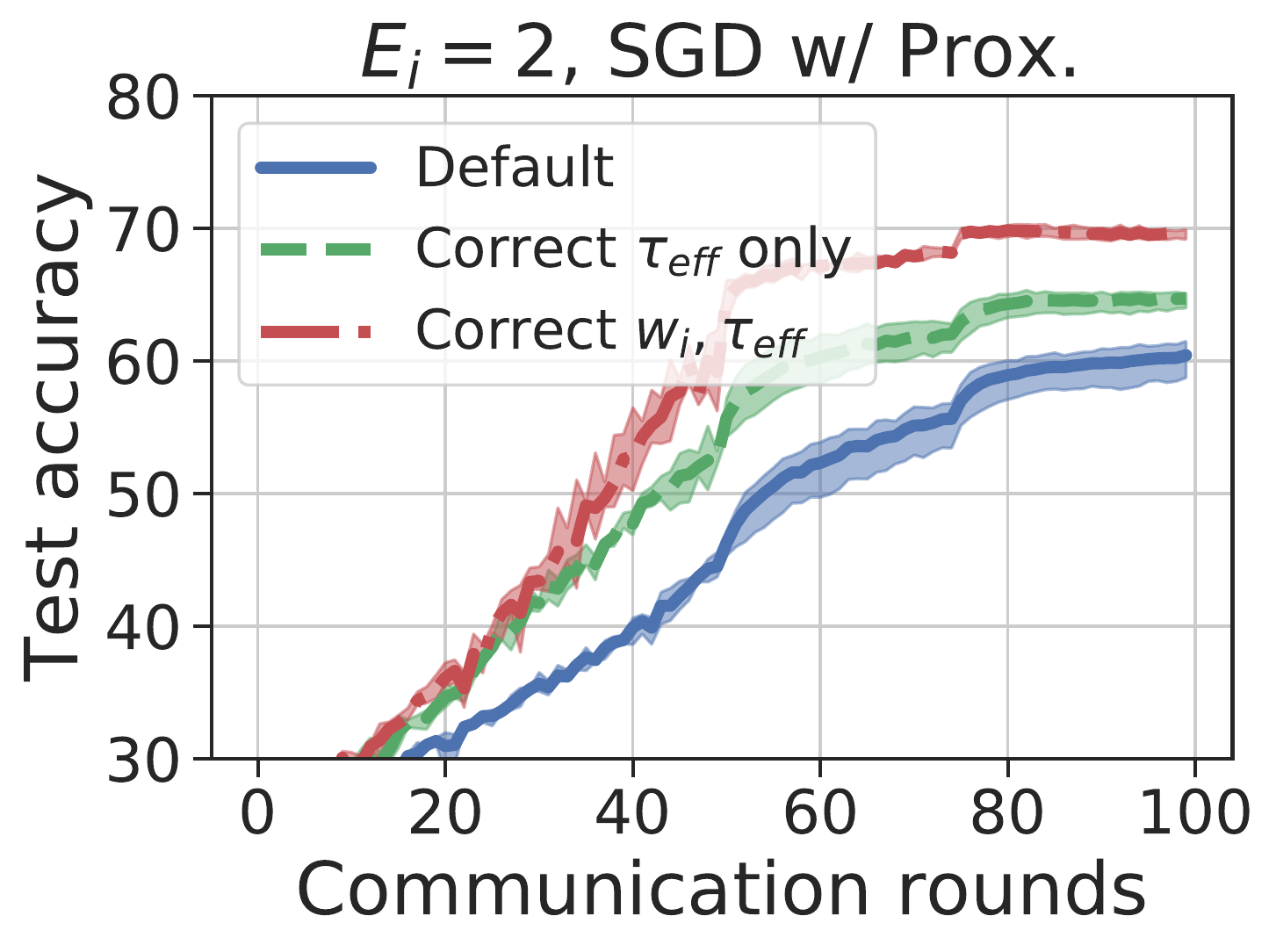}
    \end{subfigure}
    \caption{Training curves on non-IID partitioned CIFAR10 dataset. All clients perform $2$ local epochs of training and the number of local steps varies from $16$ to $408$. \textbf{\emph{Left}}: Client optimizer is vanilla SGD; \textbf{\emph{Middle}}: Client optimizer is SGD with momentum. `LM' represents for local momentum (\ie, using momentum locally); \textbf{\emph{Right}}: Client Optimizer is SGD with proximal updates. `Default' (blue curve) corresponds to \fedprox algorithm. In the green curve, we set $\cpeff$ to be $\sum_{i=1}^\nworkers p_i \cp_i$ instead of its default value $\sum_{i=1}^\nworkers p_i [1-(1-\alpha)^{\cp_i}]/\alpha$. In the red curve, we use \fednova with proximal updates and it gives both higher accuracy and faster convergence than the original \fedprox.}
    \label{fig:cifar_test_curves}
\end{figure}

\textbf{Effectiveness of Local Momentum.} From \Cref{tab:cifar}, it is worth noting that using momentum SGD as the local solver is an effective way to improve the performance. It generally achieves $3$-$7\%$ higher test accuracy than vanilla SGD. This local momentum scheme can be further combined with server momentum \cite{Wang2020SlowMo,hsu2019measuring,reddi2020adaptive}. When $E_i(t)\sim\mathcal{U}(2,5)$, the hybrid momentum scheme achieves test accuracy $81.15\pm 0.38\%$  As a reference, using server momentum alone achieves $77.49\pm 0.25\%$. 

\section{Concluding Remarks}
In federated learning, the participated clients (\eg, IoT sensors, mobile devices) are typically highly heterogeneous, both in the size of their local datasets as well as their computation speeds. Clients can also join and leave the training at any time according to their availabilities. Therefore, it is common that clients perform different amount of works within one round of local computation. However, previous analyses on federated optimization algorithms are limited to the homogeneous case where all clients have the same local steps, hyper-parameters, and client optimizers. In this paper, we develop a novel theoretical framework to analyze the challenging heterogeneous setting. We show that original \fedavg algorithm will converge to stationary points of a mismatched objective function which can be arbitrarily different from the true objective. To the best of our knowledge, we provide the first fundamental understanding of how the convergence rate and bias in the final solution of federated optimization algorithms are influenced by the heterogeneity in clients' local progress. The new framework naturally allows clients to have different local steps and local solvers, such as GD, SGD, SGD with momentum, proximal updates, etc. Inspired by the theoretical analysis, we propose \fednova, which can automatically adjust the aggregated weight and effective local steps according to the local progress. We validate the effectiveness of \fednova both theoretically and empirically. On a non-IID version of CIFAR-10 dataset, \fednova generally achieves $6$-$9\%$ higher test accuracy than \fedavg. Future directions include extending the theoretical framework to adaptive optimization methods or gossip-based training methods.

\section{Acknowledgements}
This research was generously supported in part by NSF grants CCF-1850029, the 2018 IBM Faculty Research Award, and the Qualcomm Innovation fellowship (Jianyu Wang). We thank Anit Kumar Sahu, Tian Li, Zachary Charles, Zachary Garrett, and Virginia Smith for helpful discussions.


\bibliographystyle{unsrt}
{\bibliography{sample.bib}}

\newpage
\appendix
\section{Proof of \Cref{lem:quadratic}: Objective Inconsistency in Quadratic Model}\label{sec:proof_quadratic}
\paragraph{Formulation.}
Consider a simple setting where each local objective function is strongly convex and defined as follows:
\begin{align}
F_i(\x) = \frac{1}{2}\x\tp\matH_i\x - \bm{e}_i\tp \x + \frac{1}{2}\bm{e}_i\tp\matH_i^{-1}\bm{e}_i \geq 0
\end{align}
where $\matH_i \in \mathbb{R}^{d \times d}$ is an invertible matrix and $\bm{e}_i\in \mathbb{R}^d$ is an arbitrary vector. It is easy to show that the optimum of the $i$-th local function is $\x_i^* = \matH_i^{-1}\bm{e}_i$. Without loss of generality, we assume the global objective function to be a weighted average across all local functions, that is:
\begin{align}
\obj(\x) = \sum_{i=1}^\nworkers p_i\obj_i(\x)
= \frac{1}{2}\x\tp\overline{\matH}\x - \overline{\bm{e}}\tp \x + \frac{1}{2}\sum_{i=1}^\nworkers p_i\bm{e}_i\tp\ntg_i^{-1}\bm{e}_i
\end{align}
where $\overline{\matH} = \sum_{i=1}^\nworkers p_i\matH_i$ and $\overline{\bm{e}} = \sum_{i=1}^\nworkers p_i\bm{e}_i$. As a result, the global minimum is $\x^* = \overline{\matH}^{-1}\overline{\bm{e}}$. Now, let us study whether previous federated optimization algorithms can converge to this global minimum.

\paragraph{Local Update Rule.}
The local update rule of FedProx for the $i$-th device can be written as follows:
\begin{align}
\x_i^{(t,k+1)} 
&= \x_i^{(t,k)} - \lr \brackets{\matH_i\x_i^{(t,k)} - \bm{e}_i + \mu (\x_i^{(t,k)} - \x^{(t,0)})} \\
&= (\matI - \lr\mu\matI - \lr\matH_i)\x_i^{(t,k)} + \lr\bm{e}_i + \lr\mu\x^{(t,0)} \label{eqn:local_updt1}
\end{align}
where $\x_i^{(t,k)}$ denotes the local model parameters at the $k$-th local iteration after $t$ communication rounds, $\lr$ denotes the local learning rate and $\mu$ is a tunable hyper-parameter in FedProx. When $\mu=0$, the algorithm will reduce to FedAvg. We omit the device index in $\x^{(t,0)}$, since it is synchronized and the same across all devices.

After minor arranging \eqref{eqn:local_updt1}, we obtain
\begin{align}
\x^{(t,k+1)}_{i} - \bm{c}_i^{(t)}
&= \parenth{\matI-\lr\mu\matI - \lr\matH_i}\parenth{\x^{(t,k)}_{i} - \bm{c}_i^{(t)}}.
\end{align}
where $\bm{c}_i^{(t)} = \parenth{\matH_i + \mu\matI}^{-1}\parenth{\bm{e}_i+\mu\x^{(t,0)}}$. Then, after performing $\tau_i$ steps of local updates, the local model becomes
\begin{align}
\x^{(t,\tau_i)}_{i}
&= \parenth{\matI-\lr\mu\matI - \lr\matH_i}^{\tau_i}\parenth{\x^{(t,0)} - \bm{c}_i^{(t)}} + \bm{c}_i^{(t)}, \\
\x^{(t,\tau_i)}_{i} - \x^{(t,0)}
&= \parenth{\matI-\lr\mu\matI - \lr\matH_i}^{\tau_i}\parenth{\x^{(t,0)} - \bm{c}_i^{(t)}} + \bm{c}_i^{(t)} - \x^{(t,0)} \\
&= \brackets{\parenth{\matI-\lr\mu\matI - \lr\matH_i}^{\tau_i} - \matI}\parenth{\x^{(t,0)} - \bm{c}_i^{(t)}} \\
&= \brackets{\matI - \parenth{\matI-\lr\mu\matI - \lr\matH_i}^{\tau_i}}\parenth{\matH_i + \mu\matI}^{-1}\parenth{\bm{e}_i - \matH_i \x^{(t,0)}}.
\end{align}
For the ease of writing, we define $\matK_i(\lr,\mu) = \brackets{\matI - \parenth{\matI - \lr\mu\matI - \lr\matH_i}^{\tau_i}}\parenth{\matH_i+\mu\matI}^{-1}$.

\paragraph{Server Aggregation.}
For simplicity, we only consider the case when all devices participate in the each round. In FedProx, the server averages all local models according to the sample size:
\begin{align}
\x^{(t+1,0)} - \x^{(t,0)}
&= \sum_{i=1}^\nworkers p_i \parenth{\x^{(t,\tau_i)}_i - \x^{(t,0)}} \\
&= \sum_{i=1}^\nworkers p_i \matK_i(\lr,\mu) \parenth{\bm{e}_i - \matH_i \x^{(t,0)}}.
\end{align}
Accordingly, we get the following update rule for the central model:
\begin{align}
\x^{(t+1,0)}
&= \brackets{\matI - \sum_{i=1}^\nworkers p_i\matK_i(\lr,\mu) \matH_i}\x^{(t,0)} + \sum_{i=1}^\nworkers p_i\matK_i(\lr,\mu)\bm{e}_i.
\end{align}
It is equivalent to
\begin{align}
\x^{(t+1,0)} - \widetilde{\x}
&= \brackets{\matI - \sum_{i=1}^\nworkers p_i \matK_i(\lr,\mu) \matH_i}\brackets{\x^{(t,0)} - \widetilde{\x}}.
\end{align}
where 
\begin{align}
\widetilde{\x} 
&= \parenth{\sum_{i=1}^\nworkers p_i\matK_i(\lr,\mu) \matH_i}^{-1}\parenth{\sum_{i=1}^\nworkers p_i \matK_i(\lr,\mu)\bm{e}_i}.
\end{align}
After $T$ communication rounds, one can get
\begin{align}
\x^{(T,0)} 
&= \brackets{\matI - \sum_{i=1}^\nworkers p_i \matK_i(\lr,\mu) \matH_i}^T \brackets{\x^{(t,0)} - \widetilde{\x}} + \widetilde{\x}.
\end{align}
Accordingly, when $\opnorm{\matI - \sum_{i=1}^\nworkers p_i\matK_i(\lr,\mu) \matH_i}<1$, the iterates will converge to
\begin{align}
\lim_{T\rightarrow \infty} \x^{(T,0)}
= \widetilde{\x} 
= \parenth{\sum_{i=1}^\nworkers p_i\matK_i(\lr,\mu) \matH_i}^{-1}\parenth{\sum_{i=1}^\nworkers p_i \matK_i(\lr,\mu)\bm{e}_i}. \label{eqn:T_limit}
\end{align}
Recall that $\matK_i(\lr,\mu) = \brackets{\matI - \parenth{\matI - \lr\mu\matI - \lr\matH_i}^{\tau_i}}\parenth{\matH_i+\mu\matI}^{-1}$.

\paragraph{Concrete Example in \Cref{lem:quadratic}.}
Now let us focus on a concrete example where $p_1 = p_2 = \cdots = p_m = 1/m, \matH_1 = \matH_2 = \cdots = \matH_m = \matI$ and $\mu = 0$. Then, in this case, $\matK_i = 1 - (1-\lr)^{\tau_i}$. As a result, we have
\begin{align}
\lim_{T\rightarrow \infty} \x^{(T,0)} = \frac{\sum_{i=1}^m \brackets{1-(1-\lr)^{\tau_i}}\bm{e}_i}{\sum_{i=1}^m \brackets{1-(1-\lr)^{\tau_i}}}.
\end{align}
Furthermore, when the learning rate is sufficiently small (\eg, can be achieved by gradually decaying the learning rate), according to L'Hospital's rule, we obtain
\begin{align}
\lim_{\lr\rightarrow 0}\lim_{T\rightarrow \infty} \x^{(T,0)} = \frac{\sum_{i=1}^m \tau_i\bm{e}_i}{\sum_{i=1}^m \tau_i}.
\end{align}
Here, we complete the proof of \Cref{lem:quadratic}.

\section{Detailed Derivations for Various Local Solvers}\label{sec:proof_detailed_gradweight}
In this section, we will derive the specific expression of the vector $\gradweight_i$ when using different local solvers. Recall that the local change at client $i$ is $\Delta_i^{(t)} = -\lr\matG_i^{(t)}\gradweight_i$ where $\matG_i^{(t)}$ stacks all stochastic gradients in the current round and $\gradweight$ is a non-negative vector.

\subsection{SGD with Proximal Updates}
In this case, we can write the update rule of local models as follows:
\begin{align}
\x_i^{(t,\cp_i)} = \x_i^{(t,\cp_i-1)} - \lr \brackets{\sg_i(\x_i^{(t,\cp_i-1)}) + \mu\parenth{\x_i^{(t,\cp_i-1)} - \x^{(t,0)}}}.
\end{align}
Subtracting $\x_i^{(t,0)}$ on both sides, we obtain
\begin{align}
\x_i^{(t,\cp_i)} - \x^{(t,0)}
=& \x_i^{(t,\cp_i-1)} - \x^{(t,0)} - \lr \brackets{\sg_i(\x_i^{(t,\cp_i-1)}) + \mu\parenth{\x_i^{(t,\cp_i-1)} - \x^{(t,0)}}} \\
=& (1-\lr\mu)\parenth{\x_i^{(t,\cp_i-1)} - \x^{(t,0)}} - \lr\sg_i(\x_i^{(t,\cp_i-1)}).
\end{align}
Repeating the above procedure, it follows that
\begin{align}
\Delta_i^{(t)} = \x_i^{(t,\cp_i)} - \x^{(t,0)} = -\lr\sum_{k=0}^{\cp_i-1} (1-\lr\mu)^{\cp_i-1-k}\sg_i(\x_i^{(t,k)}).
\end{align}
According to the definition, we have $\gradweight_i = [(1-\alpha)^{\cp_i-1}, (1-\alpha)^{\cp_i-2},\dots,(1-\alpha),1]$ where $\alpha = \lr\mu$.

\subsection{SGD with Local Momentum}
Let us firstly write down the update rule of the local models. Suppose that $\rho$ denotes the local momentum factor and $\bm{u}_i$ is the local momentum buffer at client $i$. Then, the update rule of local momentum SGD is:
\begin{align}
\bm{u}_i^{(t,\cp_i)} 
=& \rho \bm{u}_i^{(t,\cp_i-1)} + \sg_i(\x_i^{(t,\cp_i-1)}), \\
\x_i^{(t,\cp_i)} 
=& \x_i^{(t,\cp_i-1)} - \lr\bm{u}_i^{(t,\cp_i)}. \label{eqn:momen_update}
\end{align}
One can expand the expression of local momentum buffer as follows:
\begin{align}
\bm{u}_i^{(t,\cp_i)} 
=& \rho \bm{u}_i^{(t,\cp_i-1)} + \sg_i(\x_i^{(t,\cp_i-1)}) \\
=& \rho^2 \bm{u}_i^{(t,\cp_i-2)} + \rho \sg_i(\x_i^{(t,\cp_i-2)}) + \sg_i(\x_i^{(t,\cp_i-1)}) \\
=& \sum_{k=0}^{\cp_i-1} \rho^{\cp_i-1-k} \sg_i(\x_i^{(t,k)}) \label{eqn:momen_buffer}
\end{align}
where the last equation comes from the fact $\bm{u}_i^{(t,0)}=0$. Substituting \Cref{eqn:momen_buffer} into \Cref{eqn:momen_update}, we have
\begin{align}
\x_i^{(t,\cp_i)} 
=& \x_i^{(t,\cp_i-1)} - \lr\sum_{k=0}^{\cp_i-1} \rho^{\cp_i-1-k} \sg_i(\x_i^{(t,k)}) \\
=& \x_i^{(t,\cp_i-2)} - \lr\sum_{k=0}^{\cp_i-2} \rho^{\cp_i-2-k} \sg_i(\x_i^{(t,k)}) - \lr\sum_{k=0}^{\cp_i-1} \rho^{\cp_i-1-k} \sg_i(\x_i^{(t,k)}).
\end{align}
Repeating the above procedure, it follows that
\begin{align}
\x_i^{(t,\cp_i)} - \x^{(t,0)}
= -\lr \sum_{s=0}^{\cp_i-1} \sum_{k=0}^{s} \rho^{s-k}\sg_i(\x_i^{(t,k)})
\end{align}
Then, the coefficient of $\sg_i(\x_i^{(t,k)})$ is
\begin{align}
\sum_{s \geq k}^{\cp_i-1} \rho^{s-k} = 1 + \rho + \rho^2 + \cdots + \rho^{\cp_i-1-k} = \frac{1-\rho^{\cp_i-k}}{1-\rho}.
\end{align}
That is, $\gradweight_i = [1-\rho^{\cp_i}, 1-\rho^{\cp_i-1}, \dots, 1-\rho]/(1-\rho)$. In this case, the $\ell_1$ norm of $\gradweight_i$ is
\begin{align}
\vecnorm{\gradweight_i}_1 = \frac{1}{1-\rho}\sum_{k=0}^{\cp_i-1}\parenth{1 - \rho^{\cp_i-k}} 
=& \frac{1}{1-\rho} \parenth{\cp_i - \sum_{k=0}^{\cp_i-1}\rho^{\cp_i-k}} \\
=& \frac{1}{1-\rho} \brackets{\cp_i - \frac{\rho(1-\rho^{\cp_i})}{1-\rho}}.
\end{align}

\newpage
\section{Proof of \Cref{thm:general}: Convergence of Surrogate Objective}\label{sec:proof_thm1}

\subsection{Preliminaries}
For the ease of writing, let us define a surrogate objective function $\surloss(\x) = \sum_{i=1}^\nworkers w_i F_i(\x)$, where $\sum_{i=1}^\nworkers w_i = 1$, and define the following auxiliary variables
\begin{align}
\text{Normalized Stochastic Gradient:} \quad \nsg_i^{(t)} &= \frac{1}{a_i}\sum_{k=0}^{\tau_i-1} a_{i,k} \sg_i(\x_i^{(t,k)}), \\
\text{Normalized Gradient:} \quad \ntg_i^{(t)} &= \frac{1}{a_i}\sum_{k=0}^{\tau_i-1} a_{i,k} \tg_i(\x_i^{(t,k)})
\end{align}
where $a_{i,k} \geq 0$ is an arbitrary scalar, $\bm{a}_i = [a_{i,0},\dots,a_{i,\cp_i-1}]\tp$, and $a_i = \vecnorm{\bm{a}_i}_1$. Besides, one can show that $\Exs[\nsg_i^{(t)} - \ntg_i^{(t)}]=0$. In addition, since workers are independent to each other, we have $\Exs\inprod{\nsg_i^{(t)}-\ntg_i^{(t)}}{\nsg_j^{(t)} - \ntg_j^{(t)}} = 0, \forall i \neq j$. Recall that the update rule of the global model can be written as follows:
\begin{align}
\x^{(t+1,0)} - \x^{(t,0)}
&= -\cpeff \lr\sum_{i=1}^\nworkers w_i \nsg_i^{(t)}.
\end{align}
According to the Lipschitz-smooth assumption, it follows that
\begin{align}
&\Exs\brackets{\surloss(\x^{(t+1,0)})} - \surloss(\x^{(t,0)}) \nonumber \\ 
\leq& -\cpeff \lr \underbrace{\Exs\brackets{\inprod{\nabla \surloss(\x^{(t,0)})}{\sum_{i=1}^{\nworkers}w_i\nsg_i^{(t)}}}}_{T_1} + \frac{\cpeff^2\lr^2 \lip}{2}\underbrace{\Exs\brackets{\vecnorm{\sum_{i=1}^{\nworkers}w_i\nsg_i^{(t)}}^2}}_{T_2} \label{eqn:basis}
\end{align}
where the expectation is taken over mini-batches $\xi_i^{(t,k)}, \forall i\in \{1,2,\dots,\nworkers\}, k \in \{0, 1,\dots,\cp_i-1\}$. Before diving into the detailed bounds for $T_1$ and $T_2$, we would like to firstly introduce several useful lemmas.
\begin{lem}
	Suppose $\{A_k\}_{k=1}^T$ is a sequence of random matrices and $\Exs[A_k|A_{k-1},A_{k-2},\dots,A_1] = \bm{0},\forall k$. Then,
	\begin{align}
	\Exs\brackets{\fronorm{\sum_{k=1}^T A_k}^2} = \sum_{k=1}^T \Exs\brackets{\fronorm{A_k}^2}.
	\end{align}
	\label{lem:sum_norm}
\end{lem}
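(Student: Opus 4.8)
<br>

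The plan is a standard martingale-differences (orthogonality) argument: expand the squared Frobenius norm into its diagonal part plus cross terms, and show that every cross term vanishes in expectation because the hypothesis $\Exs[A_k\mid A_{k-1},\dots,A_1]=\bm{0}$ makes $\{A_k\}$ a martingale difference sequence.

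First I would expand, using bilinearity and symmetry of the Frobenius inner product $\langle M,N\rangle$ (the entrywise sum of products, so that $\fronorm{M}^2=\langle M,M\rangle$),
\begin{align*}
\fronorm{\sum_{k=1}^T A_k}^2 = \sum_{k=1}^T \fronorm{A_k}^2 + 2\sum_{1\le j<k\le T} \langle A_j, A_k\rangle .
\end{align*}
Taking expectations and using linearity, the claim reduces to showing $\Exs\langle A_j,A_k\rangle = 0$ for every pair $j<k$. Fixing such a pair and conditioning on $A_1,\dots,A_{k-1}$, note that $A_j$ is a function of these variables, so by the tower property and bilinearity of the inner product,
\begin{align*}
\Exs\langle A_j,A_k\rangle = \Exs\big[\langle A_j,\, \Exs[A_k\mid A_1,\dots,A_{k-1}]\rangle\big] = \Exs\langle A_j,\bm{0}\rangle = 0,
\end{align*}
where the middle equality invokes the hypothesis. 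Summing over $j<k$ removes all cross terms and leaves $\sum_{k=1}^T\Exs[\fronorm{A_k}^2]$, which is the claim. (Equivalently, one can argue by induction on $T$, peeling off $A_T$ and applying the same conditioning argument to the single cross term $\Exs\langle \sum_{k<T}A_k,\,A_T\rangle$.)

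There is no substantive obstacle here; the only point deserving a remark is integrability — one tacitly assumes each $A_k$ is square-integrable so that the term-by-term expansion of $\Exs[\fronorm{\sum_k A_k}^2]$ and the tower-property manipulation are legitimate. This holds in every place the lemma is later applied, where the $A_k$ are differences of normalized stochastic gradients whose variance is controlled by \Cref{assump:var}.
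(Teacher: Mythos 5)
Your proof is correct and follows essentially the same route as the paper's: expand the squared Frobenius norm into diagonal and cross terms (the paper writes the cross terms as $\trace\{A_i^\top A_j\}$, which is exactly your Frobenius inner product), then kill each cross term by conditioning on the past and invoking the martingale-difference hypothesis via the tower property. Your added remark about square-integrability is a harmless refinement, not a departure.
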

\begin{proof}
	\begin{align}
	\Exs\brackets{\fronorm{\sum_{k=1}^T A_k}^2}
	=& \sum_{k=1}^T \Exs\brackets{\fronorm{A_k}^2} + \sum_{i=1}^T\sum_{j=1,j\neq i}^T\Exs\brackets{\trace\{A_i\tp A_j\}} \\
	=& \sum_{k=1}^T \Exs\brackets{\fronorm{A_k}^2} + \sum_{i=1}^T\sum_{j=1,j\neq i}^T\trace\{\Exs\brackets{A_i\tp A_j}\}
	\end{align}
	Assume $i < j$. Then, using the law of total expectation,
	\begin{align}
	\Exs\brackets{A_i\tp A_j}
	= \Exs\brackets{ A_i \tp\Exs\brackets{A_j | A_{i},\dots,A_1}}
	= \bm{0}.
	\end{align}
\end{proof}

\subsection{Bounding First term in \Cref{eqn:basis}}
For the first term on the right hand side (RHS) in \Cref{eqn:basis}, we have
\begin{align}
T_1
=& \Exs\brackets{\inprod{\nabla \surloss(\x^{(t,0)})}{\sum_{i=1}^{\nworkers}w_i\parenth{\nsg_i^{(t)}-\ntg_i^{(t)}}}} + \Exs\brackets{\inprod{\nabla \surloss(\x^{(t,0)})}{\sum_{i=1}^\nworkers w_i\ntg_i^{(t)}}} \\
=& \Exs\brackets{\inprod{\nabla \surloss(\x^{(t,0)})}{\sum_{i=1}^\nworkers w_i\ntg_i^{(t)}}} \\
=& \frac{1}{2}\vecnorm{\nabla \surloss(\x^{(t)})}^2 + \frac{1}{2}\Exs\brackets{\vecnorm{\sum_{i=1}^\nworkers w_i\ntg_i^{(t)}}^2} - \frac{1}{2}\Exs\brackets{\vecnorm{\nabla \surloss(\x^{(t,0)}) - \sum_{i=1}^\nworkers w_i\ntg_i^{(t)}}^2} \label{eqn:T1}
\end{align}
where the last equation uses the fact: $2\inprod{a}{b} = \vecnorm{a}^2 + \vecnorm{b}^2 - \vecnorm{a-b}^2$.

\subsection{Bounding Second term in \Cref{eqn:basis}}
For the second term on the right hand side (RHS) in \Cref{eqn:basis}, we have
\begin{align}
T_2 
=& \Exs\brackets{\vecnorm{\sum_{i=1}^{\nworkers}w_i\parenth{\nsg_i^{(t)}-\ntg_i^{(t)}} + \sum_{i=1}^\nworkers w_i \ntg_i^{(t)}}^2} \\
\leq& 2\Exs\brackets{\vecnorm{\sum_{i=1}^{\nworkers}w_i\parenth{\nsg_i^{(t)}-\ntg_i^{(t)}}}^2} +2\Exs\brackets{\vecnorm{ \sum_{i=1}^\nworkers w_i\ntg_i^{(t)}}^2} \label{eqn:T2_step1}\\
=& 2\sum_{i=1}^{\nworkers}w_i^2\Exs\brackets{\vecnorm{\nsg_i^{(t)}-\ntg_i^{(t)}}^2}+2\Exs\brackets{\vecnorm{ \sum_{i=1}^\nworkers w_i\ntg_i^{(t)}}^2} \label{eqn:T2_step2}
\end{align}
where \Cref{eqn:T2_step1} follows the fact: $\vecnorm{a+b}^2 \leq 2\vecnorm{a}^2 + 2\vecnorm{b}^2$ and \Cref{eqn:T2_step2} uses the special property of $\nsg_i^{(t)},\ntg_i^{(t)}$, that is, $\Exs\inprod{\nsg_i^{(t)}-\ntg_i^{(t)}}{\nsg_j^{(t)} - \ntg_j^{(t)}} = 0, \forall i \neq j$. Then, let us expand the expression of $\nsg_i^{(t)}$ and $\ntg_i^{(t)}$, we obtain that
\begin{align}
T_2
\leq& \sum_{i=1}^{\nworkers}\frac{2w_i^2}{a_i^2}\sum_{k=0}^{\tau_i-1} [a_{i,k}]^2\Exs\brackets{\vecnorm{\sg_i(\x_i^{(t,k)})-\tg_i(\x_i^{(t,k)})}^2}+2\Exs\brackets{\vecnorm{ \sum_{i=1}^\nworkers w_i\ntg_i^{(t)}}^2} \label{eqn:T2_step3}\\
\leq& 2\vbnd\sum_{i=1}^\nworkers \frac{w_i^2\vecnorm{\gradweight_i}^2}{\vecnorm{\gradweight_i}_1^2} +2\Exs\brackets{\vecnorm{ \sum_{i=1}^\nworkers w_i\ntg_i^{(t)}}^2} \label{eqn:T2_step4}
\end{align}
where \Cref{eqn:T2_step3} is derived using \Cref{lem:sum_norm}, \Cref{eqn:T2_step4} follows \Cref{assump:var}.

\subsection{Intermediate Result}
Plugging \Cref{eqn:T1,eqn:T2_step4} back into \Cref{eqn:basis}, we have
\begin{align}
\Exs\brackets{\surloss(\x^{(t+1,0)})} - \surloss(\x^{(t,0)})
\leq& -\frac{\cpeff \lr}{2} \vecnorm{\nabla \surloss(\x^{(t,0)})}^2 - \frac{\cpeff \lr}{2}\parenth{1 - 2\cpeff \lr\lip}\Exs\brackets{\vecnorm{ \sum_{i=1}^\nworkers w_i\ntg_i^{(t)}}^2} \nonumber \\
+\cpeff^2\lr^2\lip\vbnd&\sum_{i=1}^\nworkers \frac{w_i^2 \vecnorm{\bm{a}_i}_2^2}{\vecnorm{\bm{a}_i}_1^2} + \frac{\cpeff\lr}{2}\Exs\brackets{\vecnorm{\nabla \surloss(\x^{(t,0)}) - \sum_{i=1}^\nworkers w_i \ntg_i^{(t)}}^2}
\end{align}
When $\cpeff \lr\lip \leq 1/2$, it follows that
\begin{align}
\frac{\Exs\brackets{\surloss(\x^{(t+1,0)})} - \surloss(\x^{(t,0)})}{\lr\cpeff}
\leq& -\frac{1}{2} \vecnorm{\nabla \surloss(\x^{(t,0)})}^2 + \cpeff\lr\lip\vbnd\sum_{i=1}^\nworkers \frac{w_i^2 \vecnorm{\bm{a}_i}_2^2}{\vecnorm{\bm{a}_i}_1^2} \nonumber \\
& + \frac{1}{2}\Exs\brackets{\vecnorm{\nabla \surloss(\x^{(t,0)}) - \sum_{i=1}^\nworkers w_i \ntg_i^{(t)}}^2} \\
\leq& -\frac{1}{2} \vecnorm{\nabla \surloss(\x^{(t,0)})}^2 + \cpeff\lr\lip\vbnd\sum_{i=1}^\nworkers \frac{w_i^2 \vecnorm{\bm{a}_i}_2^2}{\vecnorm{\bm{a}_i}_1^2} \nonumber \\
& + \frac{1}{2}\sum_{i=1}^\nworkers w_i  \Exs\brackets{\vecnorm{\tg_i(\x^{(t,0)}) - \ntg_i^{(t)}}^2} \label{eqn:mid_result}
\end{align}
where the last inequality uses the fact $\surloss(\x) = \sum_{i=1}^m w_i F_i(\x)$ and Jensen's Inequality: $\vecnorm{\sum_{i=1}^\nworkers w_i z_i}^2 \leq \sum_{i=1}^\nworkers w_i\vecnorm{z_i}^2$. Next, we will focus on bounding the last term in \Cref{eqn:mid_result}.

\subsection{Bounding the Difference Between Server Gradient and Normalized Gradient}
Recall the definition of $\ntg_i^{(t)}$, one can derive that
\begin{align}
\Exs\brackets{\vecnorm{\tg_i(\x^{(t,0)}) - \ntg_i^{(t)}}^2}
=& \Exs\brackets{\vecnorm{\tg_i(\x^{(t,0)}) - \frac{1}{a_i}\sum_{k=0}^{\cp_i-1} a_{i,k} \tg_i(\x_i^{(t,k)})}^2} \\
=& \Exs\brackets{\vecnorm{\frac{1}{a_i}\sum_{k=0}^{\cp_i-1}a_{i,k}\parenth{\tg_i(\x^{(t,0)}) - \tg_i(\x_i^{(t,k)})}}^2} \\
\leq& \frac{1}{a_i}\sum_{k=0}^{\cp_i-1} \braces{a_{i,k} \Exs\brackets{\vecnorm{\tg_i(\x^{(t,0)}) - \tg_i(\x_i^{(t,k)})}^2}} \label{eqn:T3_step1} \\
\leq& \frac{\lip^2}{a_i}\sum_{k=0}^{\cp_i-1} \braces{a_{i,k}\Exs\brackets{\vecnorm{\x^{(t,0)} - \x_i^{(t,k)}}^2}} \label{eqn:T3_step2}
\end{align}
where \Cref{eqn:T3_step1} uses Jensen's Inequality again: $\vecnorm{\sum_{i=1}^\nworkers w_i z_i}^2 \leq \sum_{i=1}^\nworkers w_i\vecnorm{z_i}^2$, and \Cref{eqn:T3_step2} follows \Cref{assump:smooth}. Now, we turn to bounding the difference between the server model $\x^{(t,0)}$ and the local model $\x_i^{(t,k)}$. Plugging into the local update rule and using the fact $\vecnorm{a+b}^2 \leq 2\vecnorm{a}^2 + 2\vecnorm{b}^2$,
\begin{align}
\Exs\brackets{\vecnorm{\x^{(t,0)} - \x_i^{(t,k)}}^2}
=& \lr^2 \cdot \Exs\brackets{\vecnorm{\sum_{s=0}^{k-1}a_{i,s}\sg_i(\x_i^{(t,s)})}^2} \\
\leq& 2\lr^2\Exs\brackets{\vecnorm{\sum_{s=0}^{k-1}a_{i,s}\parenth{\sg_i(\x_i^{(t,s)})-\tg_i(\x_i^{(t,s)})}}^2}\\
&+ 2\lr^2\Exs\brackets{\vecnorm{\sum_{s=0}^{k-1}a_{i,s}\tg_i(\x_i^{(t,s)})}^2}
\end{align}
Applying \Cref{lem:sum_norm} to the first term,
\begin{align}
\Exs\brackets{\vecnorm{\x^{(t,0)} - \x_i^{(t,k)}}^2}
=& 2\lr^2\sum_{s=0}^{k-1} [a_{i,s}]^2 \Exs\brackets{\vecnorm{\sg_i(\x_i^{(t,s)}) - \tg_i(\x_i^{(t,s)})}^2} \nonumber \\
& + 2\lr^2\Exs\brackets{\vecnorm{\sum_{s=0}^{k-1} a_{i,s}\tg_i(\x_i^{(t,s)})}^2} \\
\leq& 2\lr^2 \vbnd \sum_{s=0}^{k-1} [a_{i,s}]^2 + 2\lr^2\Exs\brackets{\vecnorm{\sum_{s=0}^{k-1} a_{i,s} \tg_i(\x_i^{(t,s)})}^2} \\
\leq& 2\lr^2 \vbnd \sum_{s=0}^{k-1} [a_{i,s}]^2 + 2\lr^2 \brackets{\sum_{s=0}^{k-1} a_{i,s}} \sum_{s=0}^{k-1} a_{i,s}\Exs\brackets{\vecnorm{\tg_i(\x_i^{(t,s)})}^2}  \label{eqn:T3_step2_1} \\
\leq& 2\lr^2 \vbnd \sum_{s=0}^{k-1} [a_{i,s}]^2 + 2\lr^2 \brackets{\sum_{s=0}^{k-1} a_{i,s}} \sum_{s=0}^{\cp_i-1} a_{i,s}\Exs\brackets{\vecnorm{\tg_i(\x_i^{(t,s)})}^2}  \label{eqn:T3_step3}
\end{align}
where \Cref{eqn:T3_step2_1} follows from Jensen's Inequality. Furthermore, note that
\begin{align}
\frac{1}{\vecnorm{\bm{a}_i}_1}\sum_{k=0}^{\cp_i-1} a_{i,k} \brackets{\sum_{s=0}^{k-1}[a_{i,s}]^2}
\leq& \frac{1}{a_i}\sum_{k=0}^{\cp_i-1} a_{i,k} \brackets{\sum_{s=0}^{\cp_i-2}[a_{i,s}]^2} \\
=& \sum_{s=0}^{\cp_i-2}[a_{i,s}]^2 = \vecnorm{\bm{a}_i}_2^2 - [a_{i,-1}]^2,
\end{align}
\begin{align}
\frac{1}{\vecnorm{\bm{a}_i}_1}\sum_{k=0}^{\cp_i-1} a_{i,k} \brackets{\sum_{s=0}^{k-1}[a_{i,s}]}
\leq& \frac{1}{a_i}\sum_{k=0}^{\cp_i-1} a_{i,k} \brackets{\sum_{s=0}^{\cp_i-2}[a_{i,s}]} \\
=& \sum_{s=0}^{\cp_i-2}[a_{i,s}] = \vecnorm{\bm{a}_i}_1 - a_{i,-1}
\end{align}
where $a_{i,-1}$ is the last element in the vector $\gradweight_i$. As a result, we have
\begin{align}
\frac{1}{\vecnorm{\bm{a}_i}_1}\sum_{k=0}^{\cp_i-1} a_{i,k}\Exs\brackets{\vecnorm{\x^{(t,0)} - \x_i^{(t,k)}}^2}
\leq& 2\lr^2\vbnd \parenth{\vecnorm{\bm{a}_i}_2^2 - [a_{i,-1}]^2} \nonumber \\
+ 2\lr^2 \parenth{\vecnorm{\bm{a}_i}_1 - a_{i,-1}} & \sum_{k=0}^{\cp_i-1} a_{i,s}\Exs\brackets{\vecnorm{\tg_i(\x_i^{(t,k)})}^2}
\end{align}
In addition, we can bound the second term using the following inequality:
\begin{align}
\Exs\brackets{\vecnorm{\tg_i(\x_i^{(t,k)})}^2}
\leq& 2\Exs\brackets{\vecnorm{\tg_i(\x_i^{(t,k)}) - \tg_i(\x^{(t,0)})}^2} + 2\Exs\brackets{\vecnorm{\tg_i(\x^{(t,0)})}^2} \\
\leq& 2\lip^2\Exs\brackets{\vecnorm{\x^{(t,0)} - \x_i^{(t,k)}}^2} + 2\Exs\brackets{\vecnorm{\tg_i(\x^{(t,0)})}^2}. \label{eqn:T3_step4}
\end{align}
Substituting \Cref{eqn:T3_step4} into \Cref{eqn:T3_step3}, we get
\begin{align}
&\frac{1}{\vecnorm{\bm{a}_i}_1}\sum_{k=0}^{\cp_i-1} a_{i,k}\Exs\brackets{\vecnorm{\x^{(t,0)} - \x_i^{(t,k)}}^2} \nonumber \\ 
\leq& 2\lr^2\vbnd \parenth{\vecnorm{\bm{a}_i}_2^2 - [a_{i,-1}]^2} 
+ 4\lr^2 \lip^2 \parenth{\vecnorm{\bm{a}_i}_1 - a_{i,-1}} \sum_{k=0}^{\cp_i-1} a_{i,k}\Exs\brackets{\vecnorm{\x^{(t,0)} - \x_i^{(t,k)}}^2} \nonumber \\
& + 4\lr^2 \parenth{\vecnorm{\bm{a}_i}_1 - a_{i,-1}}\sum_{k=0}^{\cp_i-1} a_{i,k}\Exs\brackets{\vecnorm{\tg_i(\x_i^{(t,0)})}^2}
\end{align}
After minor rearranging, it follows that
\begin{align}
\frac{1}{\vecnorm{\bm{a}_i}_1}\sum_{k=0}^{\cp_i-1}a_{i,k}\Exs\brackets{\vecnorm{\x^{(t,0)} - \x_i^{(t,k)}}^2}
\leq& \frac{2\lr^2\vbnd}{1-4\lr^2\lip^2 \vecnorm{\bm{a}_i}_1(\vecnorm{\bm{a}_i}_1-a_{i,-1})} \parenth{\vecnorm{\bm{a}_i}_2^2 - [a_{i,-1}]^2} \nonumber \\
+ &\frac{4\lr^2\vecnorm{\bm{a}_i}_1 (\vecnorm{\bm{a}_i}_1-a_{i,-1})}{1-4\lr^2\lip^2 \vecnorm{\bm{a}_i}_1(\vecnorm{\bm{a}_i}_1-a_{i,-1})}\Exs\brackets{\vecnorm{\tg_i(\x^{(t,0)})}^2} \label{eqn:T3_step5}
\end{align}
Define $D = 4\lr^2\lip^2 \max_i\{\vecnorm{\bm{a}_i}_1(\vecnorm{\bm{a}_i}_1-a_{i,-1}) \} < 1$. We can simplify \Cref{eqn:T3_step5} as follows
\begin{align}
\frac{\lip^2}{a_i}\sum_{k=0}^{\cp_i-1}a_{i,k}\Exs\brackets{\vecnorm{\x^{(t,0)} - \x_i^{(t,k)}}^2}
\leq& \frac{2\lr^2\lip^2\vbnd}{1-D}\parenth{\vecnorm{\bm{a}_i}_2^2 - [a_{i,-1}]^2} + \frac{D}{1-D}\Exs\brackets{\vecnorm{\tg_i(\x^{(t,0)})}^2}. \label{eqn:T3_step6}
\end{align}
Taking the average across all workers and applying \Cref{assump:dissimilarity}, one can obtain
\begin{align}
\frac{1}{2}\sum_{i=1}^\nworkers w_i \Exs\brackets{\vecnorm{\tg_i(\x^{(t,0)}) - \ntg_i^{(t)}}^2}
\leq& \frac{\lr^2\lip^2\vbnd}{1-D}\sum_{i=1}^\nworkers w_i \parenth{\vecnorm{\bm{a}_i}_2^2 - [a_{i,-1}]^2} \nonumber \\
& + \frac{D}{2(1-D)}\sum_{i=1}^\nworkers w_i\Exs\brackets{\vecnorm{\tg_i(\x^{(t,0)})}^2} \\
\leq& \frac{\lr^2\lip^2\vbnd}{1-D}\sum_{i=1}^\nworkers w_i \parenth{\vecnorm{\bm{a}_i}_2^2 - [a_{i,-1}]^2} \nonumber \\
& + \frac{D\bnda}{2(1-D)}\Exs\brackets{\vecnorm{\nabla \surloss(\x^{(t,0)})}^2} + \frac{D\bndb}{2(1-D)}. \label{eqn:T3_step7}
\end{align}
Now, we are ready to derive the final result.

\subsection{Final Results}
Plugging \Cref{eqn:T3_step7} back into \Cref{eqn:mid_result}, we have
\begin{align}
\frac{\Exs\brackets{\surloss(\x^{(t+1,0)})} - \surloss(\x^{(t,0)})}{\lr\cpeff}
\leq& -\frac{1}{2} \vecnorm{\nabla \surloss(\x^{(t,0)})}^2 + \cpeff\lr\lip\vbnd\sum_{i=1}^\nworkers \frac{w_i^2 \vecnorm{\bm{a}_i}_2^2}{\vecnorm{\bm{a}_i}_1^2} \nonumber \\
& + \frac{\lr^2\lip^2\vbnd}{1-D}\sum_{i=1}^\nworkers w_i \parenth{\vecnorm{\bm{a}_i}_2^2 - [a_{i,-1}]^2} \nonumber \\
& + \frac{D\kappa^2}{2(1-D)} + \frac{D\beta^2}{2(1-D)} \Exs\brackets{\vecnorm{\nabla \surloss(\x^{(t,0)})}^2} \\
=& -\frac{1}{2}\parenth{\frac{1-D(1+\beta^2)}{1-D}}\vecnorm{\nabla \surloss(\x^{(t,0)})}^2 + \cpeff\lr\lip\vbnd\sum_{i=1}^\nworkers \frac{w_i^2 \vecnorm{\bm{a}_i}_2^2}{\vecnorm{\bm{a}_i}_1^2} + \nonumber \\
& + \frac{\lr^2\lip^2\vbnd}{1-D} \sum_{i=1}^\nworkers w_i \parenth{\vecnorm{\bm{a}_i}_2^2 - [a_{i,-1}]^2} + \frac{D\kappa^2}{2(1-D)}. \label{eqn:final_step1}
\end{align}
If $D \leq \frac{1}{2\beta^2+1}$, then it follows that $\frac{1}{1-D}\leq 1+\frac{1}{2\beta^2}$ and $\frac{D\beta^2}{1-D} \leq \frac{1}{2}$. These facts can help us further simplify inequality \Cref{eqn:final_step1}.
\begin{align}
\frac{\Exs\brackets{\surloss(\x^{(t+1,0)})} - \surloss(\x^{(t,0)})}{\lr\cpeff}
\leq& -\frac{1}{4} \vecnorm{\nabla \surloss(\x^{(t,0)})}^2 + \cpeff\lr\lip\vbnd\sum_{i=1}^\nworkers  \frac{w_i^2 \vecnorm{\bm{a}_i}_2^2}{\vecnorm{\bm{a}_i}_1^2} \nonumber \\
& + \lr^2\lip^2\vbnd \parenth{1+\frac{1}{2\beta^2}} \sum_{i=1}^\nworkers w_i \parenth{\vecnorm{\bm{a}_i}_2^2 - [a_{i,-1}]^2}\nonumber \\
& +  2\lr^2\lip^2 \max_i\{\vecnorm{\bm{a}_i}_1(\vecnorm{\bm{a}_i}_1-a_{i,-1}) \} \kappa^2 \parenth{1+\frac{1}{2\beta^2}} \\
\leq& -\frac{1}{4} \vecnorm{\nabla \surloss(\x^{(t,0)})}^2 + \cpeff\lr\lip\vbnd\sum_{i=1}^\nworkers \frac{w_i^2 \vecnorm{\bm{a}_i}_2^2}{\vecnorm{\bm{a}_i}_1^2} \nonumber \\
& + \frac{3}{2}\lr^2\lip^2\vbnd \sum_{i=1}^\nworkers w_i \parenth{\vecnorm{\bm{a}_i}_2^2 - [a_{i,-1}]^2} \nonumber \\
& +  3\lr^2\lip^2\bndb \max_i\{\vecnorm{\bm{a}_i}_1(\vecnorm{\bm{a}_i}_1-a_{i,-1}) \} \label{eqn:final_step2}
\end{align}
Taking the average across all rounds, we get
\begin{align}
\frac{1}{T}\sum_{t=0}^{T-1}\Exs\brackets{\vecnorm{\nabla \surloss(\x^{(t,0)})}^2}
\leq& \frac{4\brackets{\surloss(\x^{(0,0)}) - \surloss_{\text{inf}}}}{\lr\cpeff T} + 4\cpeff\lr\lip\vbnd\sum_{i=1}^\nworkers \frac{w_i^2 \vecnorm{\bm{a}_i}_2^2}{\vecnorm{\bm{a}_i}_1^2} \nonumber \\
& + 6\lr^2\lip^2\vbnd\sum_{i=1}^\nworkers w_i \parenth{\vecnorm{\bm{a}_i}_2^2 - [a_{i,-1}]^2} \nonumber \\
& + 12\lr^2\lip^2\bndb \max_i\{\vecnorm{\bm{a}_i}_1(\vecnorm{\bm{a}_i}_1-a_{i,-1}) \}.
\end{align}
For the ease of writing, we define the following auxiliary variables:
\begin{align}
A &= \nworkers \cpeff\sum_{i=1}^\nworkers \frac{w_i^2 \vecnorm{\bm{a}_i}_2^2}{\vecnorm{\bm{a}_i}_1^2}, \\
B &= \sum_{i=1}^\nworkers w_i \parenth{\vecnorm{\bm{a}_i}_2^2 - [a_{i,-1}]^2}, \\
C &= \max_i\{\vecnorm{\bm{a}_i}_1(\vecnorm{\bm{a}_i}_1-a_{i,-1}) \}.
\end{align}
It follows that
\begin{align}
\frac{1}{T}\sum_{t=0}^{T-1}\Exs\brackets{\vecnorm{\nabla \surloss(\x^{(t,0)})}^2}
\leq& \frac{4\brackets{\surloss(\x^{(0,0)}) - \surloss_{\text{inf}}}}{\lr\cpeff T} + \frac{4\lr\lip\vbnd A}{\nworkers} + 6\lr^2\lip^2\vbnd B + 12\lr^2\lip^2\bndb C
\end{align}
Since $\min \Exs\brackets{\vecnorm{\nabla \surloss(\x^{(t,0)})}^2} \leq \frac{1}{T}\sum_{t=0}^{T-1}\Exs\brackets{\vecnorm{\nabla \surloss(\x^{(t,0)})}^2}$, we have
\begin{align}
\min_{t\in[T]} \Exs\brackets{\vecnorm{\nabla \surloss(\x^{(t,0)})}^2}
\leq& \frac{4\brackets{\surloss(\x^{(0,0)}) - \surloss_{\text{inf}}}}{\lr\cpeff T} + \frac{4\lr\lip\vbnd A}{\nworkers} + 6\lr^2\lip^2\vbnd B + 12\lr^2\lip^2\bndb C. \label{eqn:final_fixed_lr}
\end{align}

\subsection{Constraint on Local Learning Rate}
Here, let us summarize the constraints on local learning rate:
\begin{align}
\lr\lip &\leq \frac{1}{2\cpeff}, \\
4\lr^2\lip^2 \max_i\{\vecnorm{\bm{a}_i}_1(\vecnorm{\bm{a}_i}_1-a_{i,-1}) \} &\leq \frac{1}{2\beta^2+1}.
\end{align}
For the second constraint, we can further tighten it as follows:
\begin{align}
4\lr^2\lip^2\max_i\{\vecnorm{\bm{a}_i}_1(\vecnorm{\bm{a}_i}_1-a_{i,-1}) \} \leq 4\lr^2\lip^2 \max_i \vecnorm{\gradweight_i}_1^2
&\leq \frac{1}{2\beta^2+1}
\end{align}
That is,
\begin{align}
\lr\lip \leq \frac{1}{2}\min\braces{\frac{1}{\max_i \vecnorm{\gradweight_i}_1 \sqrt{2\beta^2+1}}, \frac{1}{\cpeff}}.
\end{align}

\subsection{Further Optimizing the Bound}
By setting $\lr = \sqrt{\frac{\nworkers}{\cpavg T}}$ where $\cpavg = \frac{1}{\nworkers}\sum_{i=1}^\nworkers \cp_i$, we have
\begin{align}
\min_{t\in[T]} \Exs\vecnorm{\nabla \surloss(\x^{(t,0)})}^2
\leq& \mathcal{O}\parenth{\frac{\cpavg/\cpeff}{\sqrt{\nworkers \cpavg T}}} + \mathcal{O}\parenth{\frac{A \vbnd}{\sqrt{\nworkers \cpavg T}}} + \mathcal{O}\parenth{\frac{\nworkers B \vbnd}{\cpavg T}} + \mathcal{O}\parenth{\frac{\nworkers C\bndb}{\cpavg T}}.
\end{align}
Here, we complete the proof of \Cref{thm:general}.


\section{Proof of \Cref{thm:bias}: Including Bias in the Error Bound}\label{sec:proof_thm2}
\begin{lem}
	For any model parameter $\x$, the difference between the gradients of $\obj(\x)$ and $\surloss(\x)$ can be bounded as follows:
	\begin{align}
	\vecnorm{\nabla \obj(\x) - \nabla \surloss(\x)}^2 \leq \csqdist\brackets{(\bnda-1)\vecnorm{\nabla \surloss(\x)}^2 + \bndb}
	\end{align}
	where $\csqdist$ denotes the chi-square distance between $\bm{p}$ and $\bm{w}$, \ie, $\csqdist=\sum_{i=1}^\nworkers (p_i-w_i)^2/w_i$.
\end{lem}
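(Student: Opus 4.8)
The plan is to work directly with the identity $\nabla\obj(\x)-\nabla\surloss(\x)=\sum_{i=1}^\nworkers(p_i-w_i)\tg_i(\x)$, extract the chi-square divergence $\csqdist$ via a weighted Cauchy--Schwarz inequality, and then close the argument with the bounded-dissimilarity \Cref{assump:dissimilarity}. The one point that needs care is getting the constant $\bnda-1$ rather than a looser $\bnda$: this follows by re-centering the local gradients around $\nabla\surloss(\x)$, which is legitimate because $\sum_{i=1}^\nworkers(p_i-w_i)=0$, so
\[
\nabla\obj(\x)-\nabla\surloss(\x)=\sum_{i=1}^\nworkers(p_i-w_i)\bigl(\tg_i(\x)-\nabla\surloss(\x)\bigr).
\]

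First I would write each summand as $\tfrac{|p_i-w_i|}{\sqrt{w_i}}\cdot\sqrt{w_i}\,\bigl(\tg_i(\x)-\nabla\surloss(\x)\bigr)$ and apply the triangle inequality followed by Cauchy--Schwarz, yielding
\[
\vecnorm{\nabla\obj(\x)-\nabla\surloss(\x)}^2\le\Bigl(\sum_{i=1}^\nworkers\frac{(p_i-w_i)^2}{w_i}\Bigr)\Bigl(\sum_{i=1}^\nworkers w_i\vecnorm{\tg_i(\x)-\nabla\surloss(\x)}^2\Bigr)=\csqdist\sum_{i=1}^\nworkers w_i\vecnorm{\tg_i(\x)-\nabla\surloss(\x)}^2 .
\]
Next I would simplify the weighted sum by a bias--variance decomposition: since $\sum_i w_i=1$ and $\sum_i w_i\tg_i(\x)=\nabla\surloss(\x)$, expanding the square gives $\sum_i w_i\vecnorm{\tg_i(\x)-\nabla\surloss(\x)}^2=\sum_i w_i\vecnorm{\tg_i(\x)}^2-\vecnorm{\nabla\surloss(\x)}^2$. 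Applying \Cref{assump:dissimilarity} with the weights $\{w_i\}$ bounds $\sum_i w_i\vecnorm{\tg_i(\x)}^2\le\bnda\vecnorm{\nabla\surloss(\x)}^2+\bndb$, so $\sum_i w_i\vecnorm{\tg_i(\x)-\nabla\surloss(\x)}^2\le(\bnda-1)\vecnorm{\nabla\surloss(\x)}^2+\bndb$, and substituting this into the Cauchy--Schwarz bound gives exactly the claimed inequality.

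Every computation here is elementary, so the main (and essentially only) obstacle is recognizing the re-centering step: without it one gets $\csqdist[\bnda\vecnorm{\nabla\surloss(\x)}^2+\bndb]$, which is both weaker and fails to make the error floor vanish in the homogeneous case. Once the lemma is established it feeds into the proof of \Cref{thm:bias} through $\vecnorm{\nabla\obj(\x^{(t,0)})}^2\le 2\vecnorm{\nabla\surloss(\x^{(t,0)})}^2+2\vecnorm{\nabla\obj(\x^{(t,0)})-\nabla\surloss(\x^{(t,0)})}^2$, after which one substitutes the surrogate-objective bound from \Cref{thm:general} and takes the minimum over $t\in[T]$.
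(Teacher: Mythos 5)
Your proposal is correct and follows essentially the same route as the paper: the same re-centering of $\tg_i(\x)$ around $\nabla\surloss(\x)$ (using $\sum_i(p_i-w_i)=0$), the same weighted Cauchy--Schwarz step extracting $\csqdist$, and the same application of \Cref{assump:dissimilarity}; your explicit bias--variance computation merely spells out the step the paper summarizes as ``the last inequality uses \Cref{assump:dissimilarity}.''
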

\begin{proof}
	According to the definition of $\obj(x)$ and $\surloss(\x)$, we have
	\begin{align}
	\nabla \obj(x) - \nabla \surloss(\x)
	=& \sum_{i=1}^\nworkers (p_i - w_i) \nabla \obj_i(\x) \\
	=& \sum_{i=1}^\nworkers (p_i - w_i) \parenth{\nabla \obj_i(\x)-\nabla\surloss(\x)} \\
	=& \sum_{i=1}^\nworkers \frac{p_i-w_i}{\sqrt{w}_i} \cdot \sqrt{w}_i \parenth{\nabla \obj_i(\x)-\nabla\surloss(\x)}.
	\end{align}
	Applying Cauchy–Schwarz inequality, it follows that
	\begin{align}
	\vecnorm{\nabla \obj(x) - \nabla \surloss(\x)}^2
	\leq& \brackets{\sum_{i=1}^\nworkers \frac{(p_i-w_i)^2}{w_i}} \brackets{\sum_{i=1}^\nworkers w_i \vecnorm{\nabla \obj_i(x) - \nabla \surloss(\x)}^2} \\
	\leq& \csqdist \brackets{(\bnda-1)\vecnorm{\nabla \surloss(\x)}^2 + \bndb}.
	\end{align}
	where the last inequality uses \Cref{assump:dissimilarity}.
\end{proof}

Note that
\begin{align}
\vecnorm{\nabla \obj(\x)}^2
\leq& 2\vecnorm{\nabla \obj(\x) - \nabla \surloss(\x)}^2 + 2\vecnorm{\nabla \surloss(\x)}^2 \\
\leq& 2\brackets{\csqdist(\bnda-1)+1}\vecnorm{\nabla \surloss(\x)}^2 + 2\csqdist \bndb.
\end{align}
As a result, we obtain
\begin{align}
\min_{t\in[T]} \vecnorm{\nabla \obj(\x^{(t,0)})}^2
\leq& \frac{1}{T}\sum_{t=0}^{T-1}\vecnorm{\nabla \obj(\x^{(t,0)})}^2 \\
\leq& 2\brackets{\csqdist(\bnda-1)+1}\frac{1}{T}\sum_{t=0}^{T-1}\vecnorm{\nabla \surloss(\x^{(t,0)})}^2 + 2\csqdist \bndb \\
\leq& 2\brackets{\csqdist(\bnda-1)+1}\epsilon_{\text{opt}} + 2\csqdist \bndb
\end{align}
where $\epsilon_{\text{opt}}$ denotes the optimization error.

\subsection{Constructing a Lower Bound}\label{sec:lower_bound}
In this subsection, we are going to construct a lower bound of $\Exs\vecnorm{\tg(\x^{(t,0)})}^2$, showing that \Cref{eqn:error_decomp} is tight and the non-vanishing error term in \Cref{thm:bias} is not an artifact of our analysis.
\begin{lem}
	One can manually construct a strongly convex objective function such that \fedavg with heterogeneous local updates cannot converge to its global optimum. In particular, the gradient norm of the objective function does not vanish as learning rate approaches to zero. We have the following lower bound:
	\begin{align}
	\lim_{T \rightarrow \infty}\Exs\vecnorm{\tg(\x^{(T,0)})}^2 = \Omega(\csqdist\bndb)
	\end{align}
	where $\csqdist$ denotes the chi-square divergence between weight vectors and $\bndb$ quantifies the dissimilarities among local objective functions and is defined in \Cref{assump:dissimilarity}.
\end{lem}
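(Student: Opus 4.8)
The plan is to exhibit a concrete instance of the quadratic family already analyzed in \Cref{lem:quadratic} whose \fedavg limit point has a true-objective gradient that is, up to a universal constant, as large as $\csqdist\bndb$. Fix $\nworkers$ clients with equal data fractions $p_i = 1/\nworkers$ and strongly convex local objectives $\obj_i(\x) = \tfrac12\vecnorm{\x - \bm{e}_i}^2$, and split the clients into two groups: for $i$ in the first group put $\bm{e}_i = \bm{e}_A$ and $\cp_i = \cp_A$, and for $i$ in the second group put $\bm{e}_i = \bm{e}_B$ and $\cp_i = \cp_B$ with $\cp_A \neq \cp_B$. Run \fedavg with deterministic (full-batch) gradients, so that the iterates are deterministic ($\vbnd = 0$ and the expectation is vacuous). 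Then \Cref{lem:quadratic} applies directly: with sufficiently small $\lr$ the iterates converge as $T\to\infty$, and as $\lr\to 0$ the limit is $\widetilde{\x}^* := \sum_i w_i \bm{e}_i$ with $w_i = \cp_i/\sum_j \cp_j$. Since $\nabla\obj$ is continuous, $\lim_{\lr\to 0}\lim_{T\to\infty}\Exs\vecnorm{\nabla\obj(\x^{(T,0)})}^2 = \vecnorm{\nabla\obj(\widetilde{\x}^*)}^2$, and it remains to evaluate this together with $\csqdist$ and $\bndb$ for the instance.

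The three quantities follow from short computations. Because $\nabla\obj(\x) = \x - \sum_i p_i\bm{e}_i$, we have $\nabla\obj(\widetilde{\x}^*) = \sum_i(w_i - p_i)\bm{e}_i$; since only two distinct centers appear, letting $\bar w_A$ denote the total weight of the first group this collapses to $(\bar w_A - \tfrac12)(\bm{e}_A - \bm{e}_B)$, so $\vecnorm{\nabla\obj(\widetilde{\x}^*)}^2 = (\bar w_A - \tfrac12)^2\vecnorm{\bm{e}_A - \bm{e}_B}^2$. The same two-point structure gives $\csqdist = \sum_i (p_i - w_i)^2/w_i = (\bar w_A - \tfrac12)^2 / \big(\bar w_A(1-\bar w_A)\big)$. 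For $\bndb$, use the identity $\sum_i w_i'\vecnorm{\nabla\obj_i(\x)}^2 = \vecnorm{\sum_i w_i'\nabla\obj_i(\x)}^2 + \sum_i w_i'\vecnorm{\bm{e}_i - \sum_j w_j'\bm{e}_j}^2$, valid for every weight vector $\bm{w}'$ and every $\x$; hence \Cref{assump:dissimilarity} holds with $\bnda = 1$ and $\bndb = \sup_{\bm{w}'}\sum_i w_i'\vecnorm{\bm{e}_i - \sum_j w_j'\bm{e}_j}^2$, and for a two-point set this worst-case weighted variance is $\tfrac14\vecnorm{\bm{e}_A - \bm{e}_B}^2$ (extremized at equal total weight on the two groups).

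Combining, $\vecnorm{\nabla\obj(\widetilde{\x}^*)}^2 = 4(\bar w_A - \tfrac12)^2\bndb$ while $\csqdist\bndb = (\bar w_A - \tfrac12)^2\bndb / \big(\bar w_A(1-\bar w_A)\big)$, so $\vecnorm{\nabla\obj(\widetilde{\x}^*)}^2 = 4\,\bar w_A(1-\bar w_A)\,\csqdist\bndb$. Choosing $\cp_A,\cp_B$ with $\bar w_A$ bounded away from $0$ and $1$ (e.g. $\cp_A:\cp_B = 1:3$, so $\bar w_A = \tfrac14$ and $4\bar w_A(1-\bar w_A) = \tfrac34$) and choosing $\vecnorm{\bm{e}_A - \bm{e}_B}$ freely to realize any prescribed positive $\bndb$ (hence any prescribed $\csqdist\bndb$, with $\csqdist$ fixed at $\tfrac13$), we get $\lim_{\lr\to 0}\lim_{T\to\infty}\Exs\vecnorm{\nabla\obj(\x^{(T,0)})}^2 \ge \tfrac34\,\csqdist\bndb = \Omega(\csqdist\bndb)$; this limit is strictly positive, so \fedavg with heterogeneous local steps cannot reach the global optimum. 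I expect the only delicate point to be the identification of $\bndb$ — spotting the exact decomposition that forces $\bnda = 1$ and turns $\bndb$ into a worst-case weighted variance — together with keeping $\bar w_A$, and hence the factor $4\bar w_A(1-\bar w_A)$, bounded below so the ratio to $\csqdist\bndb$ does not degenerate; note $\csqdist$ must stay $O(1)$ in any such hard family, since $\widetilde{\x}^*$ is a convex combination of the $\bm{e}_i$ and so $\vecnorm{\nabla\obj(\widetilde{\x}^*)}$ cannot exceed the diameter of $\{\bm{e}_i\}$, which is $O(\sqrt{\bndb})$.
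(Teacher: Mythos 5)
Your proposal is correct and follows essentially the same route as the paper: a quadratic counterexample with two distinct local minima and two distinct local-step counts, the \fedavg limit point taken from the proof of \Cref{lem:quadratic}, the identification $\bnda=1$ with $\bndb$ a weighted variance of the centers under \Cref{assump:dissimilarity}, and an explicit evaluation of $\csqdist$ at the induced weights. The only differences are cosmetic: the paper uses exactly two clients in one dimension with centers $\pm a$ and takes $\bndb$ to be the weighted variance at the specific \fedavg weights (yielding exact equality with $\csqdist\bndb$ for every $\cp_1,\cp_2$), whereas you take the sup-over-weights (minimal uniform) $\bndb$ and two client groups---note that your $(\bar w_A-\tfrac{1}{2})$ formulas implicitly require the two groups to carry equal total data mass, e.g.\ equal group sizes, which you should state.
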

\begin{proof}
	Suppose that there are only two clients with local objectives $\obj_1(x) = \frac{1}{2}(x-a)^2$ and $\obj_2(x) = \frac{1}{2}(w+a)^2$. The global objective is defined as $\obj(x) = \frac{1}{2}\obj_1(x) + \frac{1}{2}\obj_2(x)$. For any set of weights $w_1,w_2,w_1+w_2=1$, we define the surrogate objective function as $\widetilde{F}(\x)=w_1 F_1(\x) + w_2 F_2(\x)$. As a consequence, we have
	\begin{align}
	&\sum_{i=1}^m w_i \vecnorm{\tg_i(x) - \nabla\surloss(x)}^2  \nonumber \\
	=& w_1 [(x-a)-[x-(w_1-w_2)a]]^2 + w_2 [(x+a)-[x-(w_1-w_2)a]]^2 \\
	=& w_1[2w_2 a]^2 + w_2 [2w_1 a]^2 = 2(w1+w_2)(w_1 w_2 a^2) = 2w_1 w_2 a^2
	\end{align}
	Comparing with \Cref{assump:dissimilarity}, we can define $\bndb=2 w_1 w_2 a^2$ and $\bnda=1$ in this case. Furthermore, according to the derivations in \Cref{sec:proof_quadratic}, the iterate of \fedavg can be written as follows:
	\begin{align}
	\lim_{T\rightarrow \infty} x^{(T,0)} = \frac{\cp_1 a - \cp_2 a}{\cp_1+\cp_2} \quad \text{and} \quad w_1=\cp_1/(\cp_1+\cp_2), w_2 = \cp_2/(\cp_1+\cp_2).
	\end{align}
	As a results, we have
	\begin{align}
	\lim_{T\rightarrow \infty}\vecnorm{\tg(x^{(T,0)})}^2
	=& \lim_{T\rightarrow \infty} \brackets{\frac{1}{2}(x^{(T,0)}-a) + \frac{1}{2}(x^{(T,0)}+a)}^2 \\
	=& \lim_{T\rightarrow \infty} \brackets{x^{(T,0)}}^2 \\
	=& \parenth{\frac{\cp_1-\cp_2}{\cp_1+\cp_2}}^2 a^2 = \frac{(\cp_2-\cp_1)^2}{2\cp_1\cp_2}\bndb = \Omega(\csqdist\kappa^2).
	\end{align}
	where $\csqdist=\sum_{i=1}^\nworkers (p_i-w_i)^2/w_i = (w_1-1/2)^2/w_1+(w_2-1/2)^2/w_2=(\cp_2-\cp_1)^2/(2\cp_1\cp_2)$.
\end{proof}

\section{Special Cases of \Cref{thm:general}}\label{sec:proof_special_cases}
Here, we provide several instantiations of \Cref{thm:general} and check its consistency with previous results.
\subsection{FedAvg}
In \fedavg, $\bm{a}_i = [1, 1, \dots, 1]\tp \in \mathbb{R}^{\cp_i}$, $\vecnorm{\bm{a}_i}_2^2 = \cp_i$, and $\vecnorm{\bm{a}_i}_1 = \cp_i$. In addition, we have $w_i = p_i \cp_i/(\sum_{i=1}^\nworkers p_i \cp_i)$. Accordingly, we get the closed-form expressions of the following quantities:
\begin{align}
\cpeff &= \sum_{i=1}^\nworkers p_i \cp_i = \Exs_{\bm{p}}[\bm{\cp}],\\
A_{\fedavg} &= \nworkers\cpeff\sum_{i=1}^\nworkers \frac{w_i^2 \vecnorm{\bm{a}_i}_2^2}{\vecnorm{\bm{a}_i}_1^2} = \frac{\nworkers\sum_{i=1}^\nworkers p_i^2\cp_i}{\sum_{i=1}^\nworkers p_i \cp_i}, \\
B_{\fedavg} &= \sum_{i=1}^\nworkers w_i \parenth{\vecnorm{\bm{a}_i}_2^2 - [a_{i,-1}]^2} = \frac{\sum_{i=1}^\nworkers p_i \cp_i (\cp_i -1)}{\sum_{i=1}^\nworkers p_i \cp_i} = \Exs_{\bm{p}}[\bm{\cp}] -1 + \frac{\var_{\bm{p}}[\bm{\cp}]}{\Exs_{\bm{p}}[\bm{\cp}]}, \\
C_{\fedavg} &= \max_i\{\vecnorm{\bm{a}_i}_1(\vecnorm{\bm{a}_i}_1-a_{i,-1}) \} = \cpmax(\cpmax - 1).
\end{align}
In the case where all clients have the same local dataset size, \ie, $p_i=1/\nworkers, \forall i$. It follows that
\begin{align}
\cpeff = \cpavg, \
A_{\fedavg} = 1, \ 
B_{\fedavg} = \cpavg - 1 + \frac{\var[\bm{\cp}]}{\cpavg}, \ 
C_{\fedavg} = \cpmax(\cpmax - 1). \label{eqn:fedavg_special_case_abc}
\end{align}
Substituting \Cref{eqn:fedavg_special_case_abc} into \Cref{thm:general}, we get the convergence guarantee for \fedavg. We formally state it in the following corollary.
\begin{corollary}[\textbf{Convergence of FedAvg}]\label{corollary:fedavg_special}
	Under the same conditions as \Cref{thm:general}, if $p_i=1/\nworkers$, then \fedavg algorithm (vanilla SGD with fixed local learning rate as local solver) will converge to the stationary point of a surrogate objective $\widetilde{F}(\x)=\sum_{i=1}^\nworkers \cp_i \obj_i(\x)/\sum_{i=1}^\nworkers \cp_i$. The optimization error will be bounded as follows:
	\begin{align}
	\min_{t\in[T]}\Exs\vecnorm{\nabla \surloss(\x)}^2
	\leq& \mathcal{O}\parenth{\frac{1+\vbnd}{\sqrt{\nworkers \cpavg T}}} + \mathcal{O}\parenth{\frac{\nworkers\vbnd(\cpavg-1+\var[\bm{\cp}]/\cpavg)}{\cpavg T}} + \mathcal{O}\parenth{\frac{\nworkers\bndb\cpmax(\cpmax-1)}{\cpavg T}} \label{eqn:fedavg_special}
	\end{align}
	where $\mathcal{O}$ swallows all constants (including $\lip$), and $\var[\bm{\cp}]=\sum_{i=1}^\nworkers \cp_i^2/\nworkers - \cpavg^2$ denotes the variance of local steps.
\end{corollary}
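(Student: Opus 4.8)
The plan is to obtain \Cref{corollary:fedavg_special} as a direct specialization of \Cref{thm:general}, so the argument is essentially bookkeeping. First I would record the gradient accumulation vector for vanilla SGD: $\gradweight_i = [1,1,\dots,1]^\top \in \mathbb{R}^{\cp_i}$, so that $\vecnorm{\gradweight_i}_1 = \cp_i$, $\vecnorm{\gradweight_i}_2^2 = \cp_i$, and $a_{i,-1} = 1$. Plugging this into the generic expressions in \eqref{eqn:new_form1} gives the aggregation weight $w_i = p_i\cp_i/\sum_{j}p_j\cp_j$ and the surrogate objective $\surloss(\x) = \sum_i w_i \obj_i(\x)$; specializing to $p_i = 1/\nworkers$ yields $w_i = \cp_i/\sum_j\cp_j$, which is exactly the claimed surrogate $\sum_i \cp_i \obj_i(\x)/\sum_j\cp_j$, and $\cpeff = \sum_i p_i\cp_i = \cpavg$. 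The convergence-to-stationary-point claim for $\surloss$ is then inherited verbatim from \Cref{thm:general}.

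The second step is to evaluate the constants $A$, $B$, $C$ in this case. For $A$: using $w_i = \cp_i/(\nworkers\cpavg)$ and $\vecnorm{\gradweight_i}_2^2/\vecnorm{\gradweight_i}_1^2 = 1/\cp_i$, one gets $\sum_i w_i^2\vecnorm{\gradweight_i}_2^2/\vecnorm{\gradweight_i}_1^2 = \sum_i \cp_i/(\nworkers\cpavg)^2 = 1/(\nworkers\cpavg)$, hence $A = \nworkers\cpeff\cdot 1/(\nworkers\cpavg) = 1$. For $B$: the algebra $\sum_i w_i(\cp_i - 1) = \bigl(\sum_i\cp_i^2/\nworkers - \cpavg\bigr)/\cpavg$ together with the identity $\sum_i\cp_i^2/\nworkers = \var[\bm{\cp}] + \cpavg^2$ gives $B = \cpavg - 1 + \var[\bm{\cp}]/\cpavg$. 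For $C$: since every $\gradweight_i$ is all-ones, $\vecnorm{\gradweight_i}_1^2 - \vecnorm{\gradweight_i}_1 a_{i,-1} = \cp_i^2 - \cp_i$, so the maximum over $i$ is $\cpmax(\cpmax - 1)$. These match \eqref{eqn:fedavg_special_case_abc}.

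Finally, I would substitute $\cpeff = \cpavg$, $A = 1$, $B = \cpavg - 1 + \var[\bm{\cp}]/\cpavg$, $C = \cpmax(\cpmax - 1)$ into the bound \eqref{eqn:general_error} of \Cref{thm:general}, which already fixes $\lr = \sqrt{\nworkers/(\cpavg T)}$ and thus imposes no further learning-rate work. The first term becomes $\mathcal{O}(1/\sqrt{\nworkers\cpavg T})$ because $\cpavg/\cpeff = 1$; combined with the second term $\mathcal{O}(A\vbnd/\sqrt{\nworkers\cpavg T}) = \mathcal{O}(\vbnd/\sqrt{\nworkers\cpavg T})$ this gives the $\mathcal{O}((1+\vbnd)/\sqrt{\nworkers\cpavg T})$ term; the third term becomes $\mathcal{O}(\nworkers\vbnd(\cpavg - 1 + \var[\bm{\cp}]/\cpavg)/(\cpavg T))$ and the fourth $\mathcal{O}(\nworkers\bndb\cpmax(\cpmax-1)/(\cpavg T))$, exactly reproducing \eqref{eqn:fedavg_special}. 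There is no genuine obstacle here; the only mildly delicate point is the variance identity used to rewrite $B$ and applying the $p_i = 1/\nworkers$ simplification consistently to both $w_i$ and $\cpeff$.
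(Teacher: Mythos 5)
Your proposal is correct and matches the paper's own derivation: the paper likewise treats the corollary as a direct specialization of \Cref{thm:general}, computing $\gradweight_i=[1,\dots,1]^\top$, $w_i=p_i\cp_i/\sum_j p_j\cp_j$, $\cpeff=\sum_i p_i\cp_i$, and the constants $A=1$, $B=\cpavg-1+\var[\bm{\cp}]/\cpavg$, $C=\cpmax(\cpmax-1)$ under $p_i=1/\nworkers$ before substituting into \eqref{eqn:general_error}. Your intermediate algebra (including the variance identity for $B$) is the same as in \Cref{sec:proof_special_cases}, so there is nothing to add.
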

\textbf{Consistent with Previous Results.} When all clients perform the same local steps, $\ie, \cp_i = \cp$, then $\var[\bm{\cp}]=0$ and the above error bound \Cref{eqn:fedavg_special} recovers previous results \cite{wang2018cooperative,yu2019linear,karimireddy2019scaffold}. When $\cp_i=1$, then \fedavg reduces to fully synchronous SGD and the error bound \Cref{eqn:fedavg_special} becomes $1/\sqrt{\nworkers T}$, which is the same as standard SGD convergence rate \cite{bottou2016optimization}.

\subsection{FedProx}
In \fedprox, we have $\bm{a}_i = [(1-\alpha)^{\cp_i-1}, \dots, (1-\alpha), 1]\tp \in \mathbb{R}^{\cp_i}$. Accordingly, the norms of $\gradweight_i$ can be written as:
\begin{align}
\vecnorm{\bm{a}_i}_2^2 
= \frac{1-(1-\alpha)^{2\cp_i}}{1-(1-\alpha)^2}, \
\vecnorm{\bm{a}_i}_1
= \frac{1-(1-\alpha)^{\cp_i}}{\alpha}, \
w_i = \frac{p_i [1-(1-\alpha)^{\cp_i}]}{\sum_{i=1}^\nworkers p_i [1-(1-\alpha)^{\cp_i}]}.
\end{align}
As a consequence, we can derive the closed-form expression of $\cpeff,A,B,C$ as follows:
\begin{align}
\cpeff 
=& \frac{1}{\alpha}\sum_{i=1}^\nworkers p_i [1-(1-\alpha)^{\cp_i}], \\
A_{\fedprox}
=& \frac{\nworkers\alpha}{\sum_{i=1}^\nworkers p_i (1-(1-\alpha)^{\cp_i})}\sum_{i=1}^\nworkers p_i^2 \frac{1- (1-\alpha)^{2\cp_i}}{1-(1-\alpha)^2}, \\
B_{\fedprox}
=& \sum_{i=1}^\nworkers \frac{p_i[1-(1-\alpha)^{\cp_i}]}{\sum_{i=1}^\nworkers p_i[1-(1-\alpha)^{\cp_i}] }\brackets{\frac{1- (1-\alpha)^{2\cp_i}}{1- (1-\alpha)^2} - 1},\\
C_{\fedprox}
=& \frac{1-(1-\alpha)^{\cpmax}}{\alpha}\parenth{\frac{1-(1-\alpha)^{\cpmax}}{\alpha}-1}.
\end{align}
Substituting $A_\fedprox,B_\fedprox,C_\fedprox$ back into \Cref{thm:general}, one can obtain the convergence guarantee for \fedprox. Again, it will converge to the stationary points of a surrogate objective due to $w_i \neq p_i$.

\textbf{Consistency with FedAvg.} From the update rule of \fedprox, we know that when $\mu=0$ (or $\alpha=0$), \fedprox is equivalent to \fedprox. This can also be validated from the expressions of $A_\fedprox, B_\fedprox, C_\fedprox$. Using L'Hospital law, it is easy to show that
\begin{align}
\lim_{\alpha\rightarrow 0} A_\fedprox = A_\fedavg, \ 
\lim_{\alpha\rightarrow 0} B_\fedprox = B_\fedavg, \ 
\lim_{\alpha\rightarrow 0} C_\fedprox = C_\fedavg.
\end{align}

\textbf{Best value of $\alpha$ in FedProx.} Given the expressions of $\cpeff$ and $A,B,C$, we can further select a best value of $\alpha$ that optimizes the error bound of \fedprox, as stated in the following corollary.
\begin{corollary}\label{corrollary:fedprox_special}
	Under the same conditions as \Cref{thm:general} and suppose $p_i = 1/\nworkers$ and $\cp_i \gg 1$, then $\alpha = \mathcal{O}(\nicefrac{\nworkers^{\frac{1}{2}}}{\cpavg^{\frac{1}{2}}T^{\frac{1}{6}}}))$ minimizes the optimization error bound of \fedprox in terms of converging to the stationary points of the surrogate objective. In particular, we have
	\begin{align}
	\min_{t\in[T]}\Exs\vecnorm{\nabla \surloss(\x)}^2
	\leq& \mathcal{O}\parenth{\frac{1}{\sqrt{\nworkers \cpavg T}}} + \mathcal{O}\parenth{\frac{1}{T^{\frac{2}{3}}}}
	\end{align}
	where $\mathcal{O}$ swallows all other constants. Furthermore, if we define $K=\cpavg T$ the average gradient evaluations at clients and let $\cpavg \leq \mathcal{O}(K^{\frac{1}{4}}\nworkers^{-\frac{3}{4}})$ (which is equivalent to $T\geq \mathcal{O}(K^{\frac{3}{4}}\nworkers^{\frac{3}{4}})$), then it follows that $\min_{t\in[T]}\Exs\vecnorm{\nabla \surloss(\x)}^2 \leq \mathcal{O}(1/\sqrt{\nworkers K})$.
\end{corollary}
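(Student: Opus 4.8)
The plan is to substitute the closed-form \fedprox quantities $\cpeff$, $A_\fedprox$, $B_\fedprox$, $C_\fedprox$ derived immediately above into the master bound \Cref{eqn:general_error} of \Cref{thm:general}, collapse them to simple powers of $\alpha$ using $p_i=1/\nworkers$ and $\cp_i\gg1$, and then minimize a one-variable function of $\alpha$. For the reduction, note that since $\cp_i\gg1$, in the relevant range of $\alpha$ the terms $(1-\alpha)^{\cp_i}$ and $(1-\alpha)^{2\cp_i}$ are negligible and $1-(1-\alpha)^2=\alpha(2-\alpha)=\Theta(\alpha)$; combined with $p_i=1/\nworkers$ this yields $\cpeff=\Theta(1/\alpha)$ (more precisely $\cpavg/\cpeff=\mathcal{O}(1+\alpha\cpavg)$), $A_\fedprox=\mathcal{O}(1)$, $B_\fedprox=\mathcal{O}(1/\alpha)$ and $C_\fedprox=\mathcal{O}(1/\alpha^2)$. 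Plugging these into \Cref{eqn:general_error} and absorbing $\vbnd$, $\bndb$ and all $\lip$-dependent factors into $\mathcal{O}(\cdot)$, the error is bounded by
\begin{align*}
g(\alpha):=\mathcal{O}\!\left(\frac{1+\alpha\cpavg}{\sqrt{\nworkers\cpavg T}}\right)+\mathcal{O}\!\left(\frac{\nworkers}{\alpha\cpavg T}\right)+\mathcal{O}\!\left(\frac{\nworkers}{\alpha^2\cpavg T}\right).
\end{align*}

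\emph{Optimizing over $\alpha$.} For small $\alpha$ the last ($\propto\alpha^{-2}$) term dominates the middle ($\propto\alpha^{-1}$) one, so up to constants it suffices to balance the increasing piece $\alpha\cpavg/\sqrt{\nworkers\cpavg T}$ against $\nworkers/(\alpha^2\cpavg T)$. Equating them gives a cubic whose root is $\alpha^\star=\Theta(\nworkers^{1/2}\cpavg^{-1/2}T^{-1/6})$, the claimed value. Substituting $\alpha^\star$ back: the two balanced terms both equal $\mathcal{O}(T^{-2/3})$; the middle term equals $\mathcal{O}(\nworkers^{1/2}\cpavg^{-1/2}T^{-5/6})=\mathcal{O}(\alpha^\star\,T^{-2/3})$, hence is lower order (using $\alpha^\star\ll1$); and the $\alpha$-free part retains $\mathcal{O}((\nworkers\cpavg T)^{-1/2})$. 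This gives $\min_{t\in[T]}\Exs\vecnorm{\nabla\surloss(\x)}^2=\mathcal{O}((\nworkers\cpavg T)^{-1/2})+\mathcal{O}(T^{-2/3})$.

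\emph{The $K=\cpavg T$ reformulation.} With $K=\cpavg T$, the inequality $T^{-2/3}\le(\nworkers K)^{-1/2}$ is equivalent to $T\ge(\nworkers K)^{3/4}=\Theta(\nworkers^{3/4}K^{3/4})$, and — since $\cpavg=K/T$ — to $\cpavg\le\Theta(K^{1/4}\nworkers^{-3/4})$; under this condition the $\mathcal{O}(T^{-2/3})$ term is dominated and the bound becomes $\mathcal{O}((\nworkers K)^{-1/2})$.

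\emph{Expected main obstacle.} The algebra above is routine; the delicate part is the bookkeeping behind the reduction step. One must verify that $\alpha^\star=\lr\mu$ is consistent with the hypotheses of \Cref{thm:general} — in particular $\alpha^\star\in(0,1)$ and the learning-rate constraint $\lr\lip\le\tfrac12\min\{(\max_i\vecnorm{\gradweight_i}_1\sqrt{2\bnda+1})^{-1},\cpeff^{-1}\}$ (equivalently $D\le(2\bnda+1)^{-1}$) holds at this scaling — and that $\cp_i\gg1$ together with $\alpha=\alpha^\star$ makes $\alpha\cp_i$ large enough that dropping $(1-\alpha)^{\cp_i}$ from $\cpeff,A,B,C$ only changes constants. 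These are precisely the places where the implicit requirement that $T$ be large relative to $\nworkers$ and $\cpavg$ is being used.
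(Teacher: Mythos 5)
Your proposal is correct and follows essentially the same route as the paper's own proof: bound $\cpeff\simeq 1/\alpha$, $A=\mathcal{O}(1)$, $B,C=\mathcal{O}(1/\alpha^2)$ under $p_i=1/\nworkers$ and $\cp_i\gg 1$, reduce \Cref{eqn:general_error} to a one-variable bound in $\alpha$, balance the $\alpha\cpavg/\sqrt{\nworkers\cpavg T}$ and $\nworkers/(\alpha^2\cpavg T)$ terms to get $\alpha=\mathcal{O}(\nworkers^{1/2}\cpavg^{-1/2}T^{-1/6})$, and then translate the condition $T^{-2/3}\leq(\nworkers K)^{-1/2}$ into $\cpavg\leq\mathcal{O}(K^{1/4}\nworkers^{-3/4})$. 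Your retention of the sharper $B=\mathcal{O}(1/\alpha)$ middle term (the paper simply relaxes $B\leq 1/\alpha^2$) and your explicit check that it is lower order at $\alpha^\star$ are harmless refinements of the same argument.
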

\textbf{Discussion:} \Cref{corrollary:fedprox_special} shows that there exists a non-zero value of $\alpha$ that optimizes the error upper bound of \fedprox. That is to say, \fedprox ($\alpha>0$) is better than \fedavg ($\alpha=0$) by a constant in terms of error upper bound. However, on the other hand, it is worth noting that the minimal communication rounds of \fedprox to achieve $1/\sqrt{\nworkers K}$ rate, given by \Cref{corrollary:fedprox_special}, is exactly the same as \fedavg \cite{yu2019linear}. In this sense, \fedprox has the same convergence rate as \fedavg and cannot further reduce the communication overhead.
\begin{proof}
	First of all, let us relax the error terms of \fedprox. Under the assumption of $\cp_i\gg 1$, the quantities $A,B,C$ can be bounded or approximated as follows:
	\begin{align}
	\cpeff 
	\simeq& \frac{1}{\alpha}, \\
	A_\fedprox
	\simeq& \nworkers \alpha \sum_{i=1}^\nworkers \frac{p_i^2}{(2-\alpha)\alpha} = \frac{\nworkers \sum_{i=1}^\nworkers p_i^2}{2-\alpha}\leq m\sum_{i=1}^\nworkers p_i^2 = 1, \\
	B_\fedprox
	\leq& \frac{1- (1-\alpha)^{2\cp_i}}{1- (1-\alpha)^2} - 1 \leq \frac{1}{\alpha(2-\alpha)} \leq \frac{1}{\alpha} \leq \frac{1}{\alpha^2}, \\
	C_\fedprox
	\leq& \frac{1}{\alpha^2}.
	\end{align}
	Accordingly, the error upper bound of \fedprox can be rewritten as follows:
	\begin{align}
	\min_{t\in[T]}\Exs\vecnorm{\nabla \surloss(\x)}^2
	\leq& \mathcal{O}\parenth{\frac{\alpha\cpavg}{\sqrt{\nworkers \cpavg T}}} + \mathcal{O}\parenth{\frac{1}{\sqrt{\nworkers \cpavg T}}} + \mathcal{O}\parenth{\frac{\nworkers}{\alpha^2 \cpavg T}}. \label{eqn:final_fedprox}
	\end{align}
	In order to optimize the above bound, we can simply take the derivative with respect to $\alpha$. When the derivative equals to zero, we get
	\begin{align}
	\frac{\cpavg}{\sqrt{\nworkers \cpavg T}} = \frac{\nworkers}{\alpha^3 \cpavg T} \ \Longrightarrow \
	\alpha = \mathcal{O}\parenth{\frac{m^\frac{1}{2}}{\cpavg^{\frac{1}{2}}T^\frac{1}{6}}}.
	\end{align}
	Plugging the expression of best $\alpha$ into \Cref{eqn:final_fedprox}, we have
	\begin{align}
	\min_{t\in[T]}\Exs\vecnorm{\nabla \surloss(\x)}^2
	\leq \mathcal{O}\parenth{\frac{1}{\sqrt{\nworkers \cpavg T}}} + \mathcal{O}\parenth{\frac{1}{T^{\frac{2}{3}}}}
	= \mathcal{O}\parenth{\frac{1}{\sqrt{\nworkers K}}} + \mathcal{O}\parenth{\frac{\cp^{\frac{2}{3}}}{K^{\frac{2}{3}}}}
	\end{align}
	where $K = \cpavg T$ denotes the average total gradient steps at clients. In order to let the first term dominates the convergence rate, it requires that
	\begin{align}
	\frac{1}{\sqrt{\nworkers K}} \geq \frac{\cpavg^{\frac{2}{3}}}{K^{\frac{2}{3}}} \ \Longrightarrow \
	\cpavg \leq \mathcal{O}\parenth{K^{\frac{1}{4}}\nworkers^{-\frac{3}{4}}}.
	\end{align}
	As a results, the total communication rounds $T=K/\cpavg$ should be greater than $\mathcal{O}(K^{\frac{3}{4}}\nworkers^{\frac{3}{4}})$.
\end{proof}

\newpage
\section{Proof of \Cref{thm:nova}}\label{sec:proof_sec3}
In the case of \fednova, the aggregated weights $w_i$ equals to $p_i$. Therefore, the surrogate objective $\surloss(\x)=\sum_{i=1}^\nworkers w_i \obj_i(\x)$ is the same as the original objective function $\obj(\x)=\sum_{i=1}^\nworkers p_i \obj_i(\x)$. We can directly reuse the intermediate results in the proof of \Cref{thm:general}. According to \Cref{eqn:final_step2}, we have
\begin{align}
\frac{\Exs[\obj(\x^{(t+1,0)})] - \obj(\x^{(t,0)})}{\lr\cpeff}
\leq& -\frac{1}{4}\vecnorm{\tg(\x^{(t,0)})}^2 + \frac{\lr\lip\vbnd A^{(t)}}{\nworkers}  + \frac{3}{2}\lr^2\lip^2\vbnd B^{(t)} + 3\lr^2\lip^2\bndb C^{(t)} \label{eqn:nova_final_step1}
\end{align}
where quantities $A^{(t)}, B^{(t)}, C^{(t)}$ are defined as follows:
\begin{align}
A^{(t)} &= \nworkers \cpeff\sum_{i=1}^\nworkers \frac{w_i^2 \vecnorm{\bm{a}_i^{(t)}}_2^2}{\vecnorm{\bm{a}_i^{(t)}}_1^2}, \\
B^{(t)} &= \sum_{i=1}^\nworkers p_i \parenth{\vecnorm{\bm{a}_i^{(t)}}_2^2 - [a_{i,-1}^{(t)}]^2}, \\
C^{(t)} &= \max_i\braces{\vecnorm{\bm{a}_i^{(t)}}_1\parenth{\vecnorm{\bm{a}_i^{(t)}}_1-a_{i,-1}^{(t)}}}.
\end{align}
Taking the total expectation and averaging over all rounds, it follows that
\begin{align}
\frac{\Exs[\obj(\x^{(T,0)})] - \obj(\x^{(0,0)})}{\lr\cpeff T}
\leq& -\frac{1}{4T}\sum_{t=0}^{T-1}\Exs\vecnorm{\tg(\x^{(t,0)})}^2 + \frac{\lr\lip \vbnd \widetilde{A}}{\nworkers} \nonumber \\
&+ \frac{3}{2}\lr^2\lip^2 \vbnd \widetilde{B} + 3\lr^2\lip^2 \bndb \widetilde{C}
\end{align}
where $\widetilde{A} = \sum_{t=0}^{T-1}A^{(t)}/T, \widetilde{B} = \sum_{t=0}^{T-1}B^{(t)}/T$, and $\widetilde{C} = \sum_{t=0}^{T-1}C^{(t)}/T$. After minor rearranging, we have
\begin{align}
\frac{1}{T}\sum_{t=0}^{T-1}\Exs\brackets{\vecnorm{\nabla \obj(\x^{(t,0)})}^2}
\leq& \frac{4\brackets{\obj(\x^{(0,0)}) - \obj_{\text{inf}}}}{\lr\cpeff T} + \frac{4\lr\lip\vbnd \widetilde{A}}{\nworkers} + 6\lr^2\lip^2\vbnd \widetilde{B} + 12\lr^2\lip^2\bndb \widetilde{C}.
\end{align}
Bt setting $\lr = \sqrt{\frac{\nworkers}{\widetilde{\cp} T}}$ where $\widetilde{\cp} = \sum_{t=0}^{T-1}\cpavg^{(t)}/T$, the above upper bound can be further optimized as follows:
\begin{align}
\min_{t\in[T]}\Exs\brackets{\vecnorm{\nabla \obj(\x^{(t,0)})}^2}
\leq& \frac{1}{T}\sum_{t=0}^{T-1}\Exs\brackets{\vecnorm{\nabla \obj(\x^{(t,0)})}^2} \\
\leq& \frac{4\widetilde{\cp}/\cpeff \cdot \brackets{\obj(\x^{(0,0)}) - \obj_{\text{inf}}}}{\sqrt{\nworkers \widetilde{\cp} T}} + \frac{4\lip\vbnd \widetilde{A}}{\sqrt{\nworkers\widetilde{\cp} T}} + \frac{6\nworkers \lip^2\vbnd \widetilde{B}}{\widetilde{\cp} T} + \frac{12\nworkers\lip^2\bndb\widetilde{C}}{\widetilde{\cp}T} \\
=& \mathcal{O}\parenth{\frac{\widetilde{\cp}/\cpeff}{\sqrt{\nworkers \widetilde{\cp} T}}} + \mathcal{O}\parenth{\frac{\widetilde{A}\vbnd}{\sqrt{\nworkers \widetilde{\cp} T}}}  +\mathcal{O}\parenth{\frac{\nworkers \widetilde{B}\vbnd}{\widetilde{\cp} T}} + \mathcal{O}\parenth{\frac{\nworkers \widetilde{C} \bndb}{\widetilde{\cp} T}}.
\end{align}
Here, we complete the proof of \Cref{thm:nova}.

Moreover, it is worth mentioning the constraints on the local learning rate. Recall that, at the $t$-th round, we have the following constraint:
\begin{align}
\lr\lip \leq \frac{1}{2}\min\braces{\frac{1}{\max_i \vecnorm{\gradweight_i^{(t)}}_1 \sqrt{2\beta^2+1}}, \frac{1}{\cpeff}}.
\end{align}
In order to guarantee the convergence, the above inequality should hold in every round. That is to say,
\begin{align}
\lr\lip \leq \frac{1}{2}\min\braces{\frac{1}{\max_{i\in[m],t\in[T]} \vecnorm{\gradweight_i^{(t)}}_1 \sqrt{2\beta^2+1}}, \frac{1}{\cpeff}}.
\end{align}


\section{Extension: Incorporating Client Sampling}\label{sec:proof_sampling}
In this section, we extend the convergence guarantee of \fednova to the case of client sampling. Following previous works \cite{li2018federated,Li2020On,karimireddy2019scaffold,haddadpour2019local}, we assume the sampling scheme guarantees that the update rule \Cref{eqn:nova_updates} hold in expectation. This can be achieved by sampling with replacement from $\{1,2,\dots,\nworkers\}$ with probabilities $\{p_i\}$, and averaging local updates from selected clients with equal weights. Specifically, we have
\begin{align}
\x^{(t+1,0)} - \x^{(t,0)}
= -\cpeff \sum_{j=1}^\samplenum \frac{1}{\samplenum} \cdot \lr\nsg_{l_j}^{(t)} \quad \text{where} \ \nsg_{l_j}^{(t)} = \matG_{l_j}^{(t)}\gradweight_{l_j}/\|\gradweight_{l_j}\|_1
\end{align}
where $q$ is the number of selected clients per round, and $l_j$ is a random index sampled from $\{1,2,\cdots,\nworkers\}$ satisfying $\mathbb{P}(l_j = i) = p_i$. Recall that $p_i=n_i/n$ is the relative sample size at client $i$. For the ease of presentation, let $\gradweight_i$ to be fixed across rounds. One can directly validate that
\begin{align}
\Exs_S\brackets{\frac{1}{\samplenum}\sum_{j=1}^q \nsg_{l_j}^{(t)}} = \frac{1}{\samplenum}\sum_{j=1}^q\Exs_S\brackets{\nsg_{l_j}^{(t)}}=\Exs_S\brackets{\nsg_{l_j}^{(t)}}=\sum_{i=1}^\nworkers p_i \nsg_i^{(t)}
\end{align}
where $\Exs_S$ represents the expectation over random indices at current round.

\begin{corollary}
	Under the same condition as \Cref{thm:general}, suppose at each round, the server randomly selects $\samplenum (\leq \nworkers)$ clients with replacement to perform local computation. The probability of choosing the $i$-th client is $p_i=n_i/n$. In this case, \fednova will converge to the stationary points of the global objective $\obj(\x)$. If we set $\lr=\sqrt{\samplenum/\widetilde{\cp}T}$ where $\widetilde{\cp}$ is the average local updates across all rounds, then the expected gradient norm is bounded as follows:
	\begin{align}
	\min_{t\in[T]}\Exs\vecnorm{\tg(\x^{(t,0)})}^2
	\leq& \mathcal{O}\parenth{\frac{\widetilde{\cp}/\cpeff}{\sqrt{\samplenum \widetilde{\cp} T}}} + \mathcal{O}\parenth{\frac{\cpeff/\widetilde{\cp}}{\sqrt{\samplenum \widetilde{\cp} T}}} + \mathcal{O}\parenth{\frac{\samplenum (B+C)}{\widetilde{\cp} T}}
	\end{align}
	where $\mathcal{O}$ swallows all other constants (including $\lip, \vbnd, \bndb$).
\end{corollary}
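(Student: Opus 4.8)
The plan is to rerun the descent argument behind \Cref{thm:nova} almost unchanged, treating the client sampling as an extra independent, zero‑mean perturbation layered on top of the mini‑batch noise. Since the $\samplenum$ indices $l_1,\dots,l_\samplenum$ are drawn i.i.d.\ with $\mathbb{P}(l_j=i)=p_i$ and averaged with equal weights, $\Exs_S[\tfrac1\samplenum\sum_{j}\nsg_{l_j}^{(t)}]=\sum_{i}p_i\nsg_i^{(t)}$, so the mean of the \fednova update coincides with the full‑participation one; moreover $w_i=p_i$ means the surrogate objective equals the true objective $\obj$. Conditioning on round $t$ and taking expectation jointly over the sampled indices and the mini‑batches — cross terms vanishing by the tower rule, since the indices are drawn first and the stochastic gradients are then conditionally unbiased — the first‑order term $T_1$ of the smoothness inequality is exactly \Cref{eqn:T1} and requires no new work.

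\textbf{The one new ingredient} is the second‑order term $T_2=\Exs\vecnorm{\tfrac1\samplenum\sum_{j}\nsg_{l_j}^{(t)}}^2$. I would split it over the sampling randomness via $\Exs\vecnorm{Z}^2=\vecnorm{\Exs Z}^2+\Exs\vecnorm{Z-\Exs Z}^2$: the first piece reproduces the full‑participation $T_2$ already controlled in \Cref{sec:proof_thm1}, while the second, because the $l_j$ are i.i.d., equals $\tfrac1\samplenum\bigl(\sum_i p_i\Exs\vecnorm{\nsg_i^{(t)}}^2-\Exs\vecnorm{\sum_i p_i\nsg_i^{(t)}}^2\bigr)\le\tfrac1\samplenum\sum_i p_i\Exs\vecnorm{\nsg_i^{(t)}}^2$. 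To handle $\sum_i p_i\Exs\vecnorm{\nsg_i^{(t)}}^2$ without any bounded‑gradient assumption I would (i) peel off the stochastic part with \Cref{lem:sum_norm}, getting a $\vbnd\sum_i p_i\vecnorm{\gradweight_i}_2^2/\vecnorm{\gradweight_i}_1^2$ contribution; (ii) bound $\vecnorm{\ntg_i^{(t)}}^2\le2\vecnorm{\tg_i(\x^{(t,0)})}^2+2\lip^2\cdot(\text{client drift})$ and reuse the drift estimate \Cref{eqn:T3_step6}; and (iii) invoke \Cref{assump:dissimilarity} to turn $\sum_i p_i\vecnorm{\tg_i(\x^{(t,0)})}^2$ into $\bnda\vecnorm{\tg(\x^{(t,0)})}^2+\bndb$. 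The net effect is an extra term $\frac{\cpeff^2\lr^2\lip}{2\samplenum}\bigl(c_1\vecnorm{\tg(\x^{(t,0)})}^2+c_2\vbnd\sum_i p_i\tfrac{\vecnorm{\gradweight_i}_2^2}{\vecnorm{\gradweight_i}_1^2}+c_3\bndb\bigr)$ in the per‑round descent inequality, with $c_1,c_2,c_3$ depending only on $\bnda$, $\lip$ and the $\gradweight$‑norms.

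\textbf{Finishing.} The rest is bookkeeping: under the same learning‑rate smallness conditions as \Cref{thm:general} (with the constants now allowed to depend on $\samplenum$ and $\bnda$, which is what ``loosens'' the condition rather than tightening it), the new $\vecnorm{\tg(\x^{(t,0)})}^2$ coefficient is absorbed by the $-\tfrac14\vecnorm{\tg(\x^{(t,0)})}^2$ in \Cref{eqn:final_step2}. I would then telescope over $t=0,\dots,T-1$, divide by $\lr\cpeff T$, replace the round‑dependent quantities $\cpavg^{(t)},A^{(t)},B^{(t)},C^{(t)}$ by their round averages exactly as in \Cref{thm:nova}, and substitute $\lr=\sqrt{\samplenum/(\widetilde{\cp}T)}$. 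The $\tfrac1{\lr\cpeff T}$ contribution produces $\mathcal{O}(\nicefrac{\widetilde{\cp}/\cpeff}{\sqrt{\samplenum\widetilde{\cp}T}})$; the $\vbnd$‑dependent terms (the full‑participation one plus the new client‑sampling variance, both governed by the normalized‑gradient second moments $\sum_i p_i\vecnorm{\gradweight_i}_2^2/\vecnorm{\gradweight_i}_1^2$) combine into $\mathcal{O}(\nicefrac{\cpeff/\widetilde{\cp}}{\sqrt{\samplenum\widetilde{\cp}T}})$; and the remaining $\lr^2$‑order pieces collect into $\mathcal{O}(\nicefrac{\samplenum(B+C)}{\widetilde{\cp}T})$, with the $\samplenum$ in the numerator arising from $\lr^2=\samplenum/(\widetilde{\cp}T)$.

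\textbf{Main obstacle.} The delicate point is steps (ii)--(iii): bounding $\sum_i p_i\Exs\vecnorm{\nsg_i^{(t)}}^2$ purely in terms of $\vecnorm{\tg(\x^{(t,0)})}^2$ (plus $\vbnd$ and $\bndb$), which forces us to chain the self‑bounding drift estimate \Cref{eqn:T3_step6} with \Cref{assump:dissimilarity} and then verify that, after moving the extra $\vecnorm{\tg(\x^{(t,0)})}^2$ term to the left‑hand side, the learning‑rate constraints remain satisfiable by $\lr=\sqrt{\samplenum/(\widetilde{\cp}T)}$ for $T$ large enough. Every other step is mechanically identical to the proofs of \Cref{thm:general} and \Cref{thm:nova}.
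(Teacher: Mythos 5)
Your proposal is correct and follows essentially the same route as the paper's proof in \Cref{sec:proof_sampling}: unbiasedness of the with-replacement sampling recovers the full-participation first-order term, the extra $1/\samplenum$-scaled second-moment term is controlled via the mini-batch variance, the client-drift bound \Cref{eqn:T3_step6}--\Cref{eqn:T3_step7}, and \Cref{assump:dissimilarity}, and the resulting $\|\nabla \obj(\x^{(t,0)})\|^2$ contribution is absorbed under a mildly adjusted learning-rate condition before telescoping with $\lr=\sqrt{\samplenum/(\widetilde{\cp}T)}$. The only cosmetic difference is organizational: you bound the sampled second moment through a bias--variance identity dominated by $\frac{1}{\samplenum}\sum_{i=1}^{\nworkers} p_i\Exs\|\nsg_i^{(t)}\|^2$, whereas the paper first splits off the mini-batch noise and packages the remaining step as \Cref{lem:sampling}; both decompositions use the same ingredients and yield the same three error terms.
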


\begin{proof}
	According to the Lipschitz-smooth assumption, it follows that
	\begin{align}
	\Exs\brackets{\obj(\x^{(t+1,0)})} - \obj(\x^{(t,0)})
	\leq& -\cpeff \lr \underbrace{\Exs\brackets{\inprod{\tg(\x^{(t,0)})}{\sum_{j=1}^{\samplenum}\frac{\nsg_{l_j}^{(t)}}{\samplenum}}}}_{T_3} + \frac{\cpeff^2\lr^2 \lip}{2}\underbrace{\Exs\brackets{\vecnorm{\sum_{j=1}^{\samplenum}\frac{\nsg_{l_j}^{(t)}}{\samplenum}}^2}}_{T_4} \label{eqn:sampling_basis}
	\end{align}
	where the expectation is taken over randomly selected indices $\{l_j\}$ as well as mini-batches $\xi_i^{(t,k)}, \forall i\in \{1,2,\dots,\nworkers\}, k \in \{0, 1,\dots,\cp_i-1\}$.
	
	For the first term in \Cref{eqn:sampling_basis}, we can first take the expectation over indices and obtain
	\begin{align}
	T_3
	=& \Exs\brackets{\inprod{\tg(\x^{(t,0)})}{\Exs_S\brackets{\sum_{j=1}^{\samplenum}\frac{\nsg_{l_j}^{(t)}}{\samplenum}}}} \\
	=& \Exs\brackets{\inprod{\tg(\x^{(t,0)})}{\sum_{i=1}^{\nworkers}p_i\nsg_i^{(t)}}}.
	\end{align}
	This term is exactly the same as the first term in \Cref{eqn:basis}. We can directly reuse previous results in the proof of \Cref{thm:general}. Comparing with \Cref{eqn:T1}, we have
	\begin{align}
	T_3
	=& \frac{1}{2}\vecnorm{\tg(\x^{(t)})}^2 + \frac{1}{2}\Exs\brackets{\vecnorm{\sum_{i=1}^\nworkers p_i\ntg_i^{(t)}}^2} - \frac{1}{2}\Exs\brackets{\vecnorm{\tg(\x^{(t,0)}) - \sum_{i=1}^\nworkers p_i\ntg_i^{(t)}}^2} \\
	\geq& \frac{1}{2}\vecnorm{\tg(\x^{(t)})}^2 + \frac{1}{2}\Exs\brackets{\vecnorm{\sum_{i=1}^\nworkers p_i\ntg_i^{(t)}}^2} - \frac{1}{2}\sum_{i=1}^\nworkers p_i\Exs\brackets{\vecnorm{\tg_i(\x^{(t,0)}) -\ntg_i^{(t)}}^2}. \label{eqn:sampling_T3}
	\end{align}
	For the second term in \Cref{eqn:sampling_basis},
	\begin{align}
	T_4
	\leq& 2\Exs\brackets{\vecnorm{\frac{1}{\samplenum}\sum_{j=1}^q (\nsg_{l_j}^{(t)} - \ntg_{l_J}^{(t)})}^2} + 2\Exs\brackets{\vecnorm{\frac{1}{\samplenum}\sum_{j=1}^q \ntg_{l_j}^{(t)}}^2} \\
	=& \frac{1}{\samplenum}\sum_{i=1}^\nworkers p_i \Exs\brackets{\vecnorm{\nsg_i^{(t)} - \ntg_i^{(t)}}^2} + 2\Exs\brackets{\vecnorm{\frac{1}{\samplenum}\sum_{j=1}^q \ntg_{l_j}^{(t)}}^2} \\
	\leq& \frac{2\vbnd}{\samplenum}\sum_{i=1}^\nworkers p_i \frac{\vecnorm{\gradweight_i}_2^2}{\vecnorm{\gradweight_i}_1^2} + 2\Exs\brackets{\vecnorm{\frac{1}{\samplenum}\sum_{j=1}^q \ntg_{l_j}^{(t)}}^2} \\
	\leq& \frac{2\vbnd}{\samplenum}\sum_{i=1}^\nworkers p_i \frac{\vecnorm{\gradweight_i}_2^2}{\vecnorm{\gradweight_i}_1^2} + 6\sum_{i=1}^\nworkers p_i\Exs\brackets{\vecnorm{\tg_i(\x^{(t,0)}) -\ntg_i^{(t)}}^2} + \frac{6}{\samplenum}\left( \beta^2 \|\nabla F(\x^{(t,0)}) \|^2 + \kappa^2\right) \nonumber \\
	& + 6\vecnorm{\tg(\x^{(t,0)})}^2 \label{eqn:sampling_T4}
	\end{align}
	where the last inequality comes from \Cref{lem:sampling}, stated below.
	\begin{lem}\label{lem:sampling}
		Suppose we are given $\z_1,\z_2,\ldots,\z_m,\x\in \mathbb{R}^d$
		and let $l_1,l_2,\ldots,l_\samplenum$ be i.i.d. sampled from a multinomial distribution $\mathcal{D}$ supported on $\{1,2,\ldots,m\}$ satisfying
		$\Prob(l=i) = p_i$ and $\sum_{i=1}^m p_i  =1 $. We have
		\begin{align}
		\Exs[ \frac{1}{\samplenum}\sum_{j=1}^\samplenum \z_{l_j}]
		=& \sum_{i=1}^\nworkers p_i \z_i, \\
		\Exs [ \| \frac{1}{\samplenum} \sum_{j=1}^\samplenum \z_{l_j} \|^2   ] 
		\le& 
		3 \sum_{i=1}^{m} p_i  \| \z_i -  \nabla F_{i}(\x)\|^2 + 3\vecnorm{\tg(\x)}^2
		+ \frac{3}{\samplenum}  \left( \beta^2 \|\nabla F(\x) \|^2 + \kappa^2\right).
		\end{align}
	\end{lem}
	\begin{proof}
		First, we have
		\begin{align}\label{eq:0}
		&\Exs [ \| \frac{1}{\samplenum} \sum_{j=1}^\samplenum \z_{l_j} \|^2   ] \nonumber\\
		=& \Exs \brackets{ \vecnorm{\parenth{\frac{1}{\samplenum} \sum_{j=1}^\samplenum \z_{l_j} - \frac{1}{\samplenum} \sum_{j=1}^\samplenum \tg_{l_j}(\x)} + \parenth{\frac{1}{\samplenum} \sum_{j=1}^\samplenum \tg_{l_j}(\x) -\tg(\x)} + \tg(\x)}^2 } \\
		\le & 3 \Exs [ \| \frac{1}{\samplenum} \sum_{j=1}^\samplenum \z_{l_j} - \frac{1}{\samplenum} \sum_{j=1}^\samplenum\nabla F_{l_j}(\x) \|^2   ] 
		+ 3\Exs [ \| \frac{1}{\samplenum} \sum_{j=1}^\samplenum\nabla F_{l_j}(\x) - \nabla F(\x) \|^2   ] + 3\vecnorm{\tg(\x)}^2. 
		\end{align}
		For the first term, by Cauchy-Schwarz inequality, we have 
		\begin{align}\label{eq:1}
		\Exs [ \| \frac{1}{\samplenum} \sum_{j=1}^\samplenum \z_{l_j} - \frac{1}{\samplenum} \sum_{j=1}^\samplenum \nabla F_{l_j}(\x) \|^2   ] 
		\le \frac{1}{\samplenum} \sum_{j=1}^\samplenum \Exs_{l_j\sim \mathcal{D}} [ \| \z_{l_j} - \nabla F_{l_j}(\x)\|^2]
		= \sum_{i=1}^{m} p_i  \| \z_i -  \nabla F_{i}(\x)\|^2.
		\end{align}
		The second term can be bounded as following
		\begin{align}\label{eq:2}
		\Exs [ \| \frac{1}{\samplenum} \sum_{j=1}^\samplenum \nabla F_{l_j}(\x) - \nabla F(\x) \|^2   ]
		&=  \frac{1}{\samplenum}  \Exs_{l_j\sim \mathcal{D}} [ \|  \nabla F_{l_j}(\x) - \nabla F(\x) \|^2   ] \\
		&= \frac{1}{\samplenum}\sum_{i=1}^\nworkers p_i \|\tg_i(\x) - \tg(\x)\|^2 \\
		&\leq \frac{1}{\samplenum}\brackets{(\bnda-1)\|\tg(\x)\|^2 + \bndb}.
		\end{align}
		where the first identity follows from $\Exs_{i\sim \mathcal{D}}[F_i(\x)] = \nabla F(\x)$ and the independence between $l_1,\ldots,l_\samplenum$, and the last inequality is a direct application of Assumption \ref{assump:dissimilarity}.
		
		Substituting \eqref{eq:1} and \eqref{eq:2} into \eqref{eq:0} completes the proof.
	\end{proof}
	
	Substituting \Cref{eqn:sampling_T3,eqn:sampling_T4} into \Cref{eqn:sampling_basis}, we have
	\begin{align}
	\frac{\Exs\brackets{\obj(\x^{(t+1,0)})} - \obj(\x^{(t,0)})}{\lr \cpeff}
	\leq& -\frac{1}{2}\parenth{1-6\cpeff\lr\lip}\vecnorm{\tg(\x^{(t,0)})}^2 \nonumber \\
	& + \parenth{\frac{1}{2} + 3\cpeff\lr\lip}\sum_{i=1}^\nworkers p_i \Exs\brackets{\vecnorm{\tg_i(\x^{(t,0)}) - \ntg_i^{(t)}}^2} \nonumber \\
	& + \frac{\cpeff\lr\lip\vbnd}{\samplenum}\sum_{i=1}^\nworkers p_i \frac{\vecnorm{\gradweight_i}_2^2}{\vecnorm{\gradweight_i}_1^2} + \frac{3\cpeff\lr\lip}{\samplenum}\left( \beta^2 \vecnorm{\nabla F(\x^{(t,0)})}^2 + \kappa^2\right) \\
	=& -\frac{1}{2}\parenth{1 - 6\cpeff\lr\lip - \frac{6\cpeff\lr\lip\bnda}{\samplenum}}\vecnorm{\tg(\x^{(t,0)})}^2 \nonumber \\
	& + \frac{\cpeff\lr\lip\vbnd}{\samplenum}\sum_{i=1}^\nworkers p_i \frac{\vecnorm{\gradweight_i}_2^2}{\vecnorm{\gradweight_i}_1^2} \nonumber \\
	& +  \parenth{\frac{1}{2} + 2\cpeff\lr\lip}\sum_{i=1}^\nworkers p_i \Exs\brackets{\vecnorm{\tg_i(\x^{(t,0)}) - \ntg_i^{(t)}}^2} + \frac{3\cpeff\lr\lip\bndb}{\samplenum}. 
	\end{align}
	When $\lr\lip \leq 1/(2\cpeff)$ and $6\cpeff\lr\lip + 6\cpeff\lr\lip\bnda/\samplenum \leq \frac{1}{2}$, it follows that
	\begin{align}
	\frac{\Exs\brackets{\obj(\x^{(t+1,0)})} - \obj(\x^{(t,0)})}{\lr \cpeff}
	\leq& -\frac{1}{4}\vecnorm{\tg(\x^{(t,0)})}^2 + \frac{\cpeff\lr\lip\vbnd}{\samplenum}\sum_{i=1}^\nworkers p_i \frac{\vecnorm{\gradweight_i}_2^2}{\vecnorm{\gradweight_i}_1^2} \nonumber \\
	& + \frac{3}{2}\sum_{i=1}^\nworkers p_i \Exs\brackets{\vecnorm{\tg_i(\x^{(t,0)}) - \ntg_i^{(t)}}^2} + \frac{3\cpeff\lr\lip\bndb}{\samplenum}. \label{eqn:sampling_step1}
	\end{align}
	Recall that the third term in \Cref{eqn:sampling_step1} can be bounded as follows (see \Cref{eqn:T3_step7}):
	\begin{align}
	\frac{1}{2}\sum_{i=1}^\nworkers p_i \Exs\brackets{\vecnorm{\tg_i(\x^{(t,0)}) - \ntg_i^{(t)}}^2}
	\leq& \frac{\lr^2\lip^2\vbnd}{1-D}\sum_{i=1}^\nworkers p_i \parenth{\vecnorm{\bm{a}_i}_2^2 - [a_{i,-1}]^2} \nonumber \\
	& + \frac{D\bnda}{2(1-D)}\vecnorm{\tg(\x^{(t,0)})}^2 + \frac{D\bndb}{2(1-D)} \label{eqn:sampling_final1}
	\end{align}
	where $D = 4\lr^2\lip^2 \max_i\{\vecnorm{\bm{a}_i}_1(\vecnorm{\bm{a}_i}_1-a_{i,-1}) \} < 1$. If $D \leq \frac{1}{12\beta^2+1}$, then it follows that $\frac{1}{1-D}\leq 1+\frac{1}{12\beta^2}\leq 2$ and $\frac{3D\beta^2}{1-D} \leq \frac{1}{4}$. These facts can help us further simplify inequality \Cref{eqn:sampling_final1}. One can obtain
	\begin{align}
	\frac{3}{2}\sum_{i=1}^\nworkers p_i \Exs\brackets{\vecnorm{\tg_i(\x^{(t,0)}) - \ntg_i^{(t)}}^2}
	\leq& 6\lr^2\lip^2\vbnd\sum_{i=1}^\nworkers p_i \parenth{\vecnorm{\bm{a}_i}_2^2 - [a_{i,-1}]^2} + \frac{1}{8}\vecnorm{\tg(\x^{(t,0)})}^2 \nonumber \\
	&+ 12\lr^2\lip^2\bndb \max_i\{\vecnorm{\bm{a}_i}_1(\vecnorm{\bm{a}_i}_1-a_{i,-1}) \\
	=& 6\lr^2\lip^2\vbnd B + \frac{1}{8}\vecnorm{\tg(\x^{(t,0)})}^2 + 12\lr^2\lip^2\bndb C \label{eqn:sampling_step2}
	\end{align}
	Substituting \Cref{eqn:sampling_step2} into \Cref{eqn:sampling_step1}, we have
	\begin{align}
	\frac{\Exs\brackets{\obj(\x^{(t+1,0)})} - \obj(\x^{(t,0)})}{\lr \cpeff}
	\leq& -\frac{1}{8}\vecnorm{\tg(\x^{(t,0)})}^2 + \frac{\cpeff\lr\lip\vbnd}{\samplenum}\sum_{i=1}^\nworkers p_i \frac{\vecnorm{\gradweight_i}_2^2}{\vecnorm{\gradweight_i}_1^2} + \frac{3\cpeff\lr\lip\bndb}{\samplenum} \nonumber \\
	& + 6\lr^2\lip^2\vbnd B + 12\lr^2\lip^2\bndb C \\
	\leq& -\frac{1}{8}\vecnorm{\tg(\x^{(t,0)})}^2 + \frac{\cpeff\lr\lip\vbnd}{\samplenum} + \frac{3\cpeff\lr\lip\bndb}{\samplenum} \nonumber \\
	& + 6\lr^2\lip^2\vbnd B + 12\lr^2\lip^2\bndb C
	\end{align}
	where the last inequality uses the fact that $\vecnorm{\bm{a}}_2\leq\vecnorm{\bm{a}}_1$, for any vector $\bm{a}$. Taking the total expectation and averaging all rounds, one can obtain
	\begin{align}
	\frac{\Exs\brackets{\obj(\x^{(T,0)})} - \obj(\x^{(0,0)})}{\lr \cpeff T}
	\leq& -\frac{1}{8T}\sum_{t=0}^{T-1}\Exs\brackets{\vecnorm{\tg(\x^{(t,0)})}^2} + \frac{\cpeff\lr\lip(\vbnd+3\bndb)}{\samplenum}  \nonumber \\
	& + 6\lr^2\lip^2\vbnd B + 12\lr^2\lip^2\bndb C.
	\end{align}
	After minor rearranging, the above inequality is equivalent to
	\begin{align}
	&\frac{1}{T}\sum_{t=0}^{T-1}\Exs\brackets{\vecnorm{\tg(\x^{(t,0)})}^2}\nonumber\\
	\leq& \frac{8\brackets{\obj(\x^{(0,0)}) - \obj_{\text{inf}}}}{\lr\cpeff T} + \frac{8\cpeff\lr\lip(\vbnd+3\bndb)}{\samplenum} + 48\lr^2\lip^2\vbnd B + 96\lr^2\lip^2\bndb C.
	\end{align}
	If we set the learning rate to be small enough, \ie, $\lr = \sqrt{\frac{\samplenum}{\widetilde{\cp} T}}$ where $\widetilde{\cp} = \sum_{t=0}^{T-1}\cpavg/T$, then we get
	\begin{align}
	\frac{1}{T}\sum_{t=0}^{T-1}\Exs\brackets{\vecnorm{\tg(\x^{(t,0)})}^2}
	\leq& \mathcal{O}\parenth{\frac{\widetilde{\cp}/\cpeff}{\sqrt{\samplenum \widetilde{\cp} T}}} + \mathcal{O}\parenth{\frac{\cpeff/\widetilde{\cp}}{\sqrt{\samplenum \widetilde{\cp} T}}} + \mathcal{O}\parenth{\frac{\samplenum (B+C)}{\widetilde{\cp} T}}
	\end{align}
	where $\mathcal{O}$ swallows all other constants.
\end{proof}

\newpage
\section{Pseudo-code of FedNova}\label{sec:pseudocode}
Here we provide a pseudo-code of \fednova (see \Cref{algo:nova}) as a general algorithmic framework. Then, as an example, we show the pseudo-code of a special case of \fednova, where the local solver is specified as momentum SGD with cross-client variance reduction \cite{liang2019variance,karimireddy2019scaffold} (see \Cref{algo:novavrl}). Note that when the server updates the global model, we set $\cpeff$ to be the same as \fedavg, \ie, $\cpeff = \sum_{i \in \mathcal{S}_t} p_i \|\gradweight_i^{(t)}\|_1$ where $\mathcal{S}_t$ denotes the randomly selected subset of clients. Alternatively, the server can also choose other values of $\cpeff$.

\begin{algorithm}
	\DontPrintSemicolon
	\SetKwInput{Input}{Input}
	\SetAlgoLined
	\LinesNumbered
	\Input{Client learning rate $\lr$; Client momentum factor $\rho$. }
	\For{$t \in \{0,1,\dots,T-1\}$}{
		Randomly sample a subset of clients $\mathcal{S}_t$\;
		\textbf{Communication:} Broadcast global model $\x^{(t,0)}$ to selected clients\;
		Clients perform local updates\;
		\textbf{Communication:} Receive $\|\gradweight_i^{(t)}\|_1$ and $\nsg_i^{(t)}$ from clients\;
		Update global model: $\x^{(t+1,0)} = \x^{(t,0)} - \frac{\sum_{i\in\mathcal{S}_t}p_i\|\gradweight_i^{(t)}\|_1}{\sum_{i\in\mathcal{S}_t}p_i} \sum_{i\in\mathcal{S}_t} \frac{\lr p_i \nsg_i^{(t)}}{\sum_{i\in\mathcal{S}_t}p_i}$
	}
	\caption{FedNova Framework}
	\label{algo:nova}
\end{algorithm}

\begin{algorithm}[!h]
	\DontPrintSemicolon
	\SetKwInput{Input}{Input}
	\SetAlgoLined
	\LinesNumbered
	\Input{Client learning rate $\lr$; Client momentum factor $\rho$. }
	\For{$t \in \{0,1,\dots,T-1\}$ {\it \bf at cleint} $i$ {\it \bf in parallel}}{
		Zero client optimizer buffers $\bm{u}_i^{(t,0)}=0$\;
		\textbf{Communication:} Receive $\x^{(t,0)} = \x^{(t-1,0)}  - (\sum_{i=1}^\nworkers p_i a_i) \lr \sum_{i=1}^{\nworkers}p_i \nsg_i^{(t-1)}$ from server\;
		\textbf{Communication:} Receive $\sum_{i=1}^{\nworkers} p_i\nsg_i^{(t-1)}$ from server \;
		Update gradient correction term: $\bm{c}_i^{(t)} = -\nsg_i^{(t-1)} + \sum_{i=1}^{\nworkers} p_i\nsg_i^{(t-1)}$\;
		\For{$k \in \{0,1,\dots,\cp_i-1\}$}{
			Compute: $\tilde{g}_i(\x^{(t,k)}) = \sg_i(\x^{(t,k)}) + \bm{c}_i^{(t)}$\;
			Update momentum buffer: $\bm{u}_i^{(t,k)} = \rho \bm{u}_i^{(t,k-1)} + \tilde{g}_i(\x^{(t,k)})$\;
			Update local model: $\x_i^{(t,k)} = \x_i^{(t,k-1)} - \lr \bm{u}_i^{(t,k)}$\;
		}
		Compute: $a_i = [\cp_i - \rho(1-\rho^{\cp_i})/(1-\rho)]/(1-\rho)$\;
		Compute normalized gradient: $\nsg_i^{(t)} = (\x^{(t,0)} - \x^{(t,\cp_i)})/(\lr a_i)$\;
		\textbf{Communication:} Send $p_i a_i$ and $p_i \nsg_i^{(t)}$ to the server\;
	}
	\caption{FedNova with Client-side Momentum SGD + Cross-client Variance Reduction}
	\label{algo:novavrl}
\end{algorithm}

\section{More Experiments Details}\label{sec:more_exps}
\textbf{Platform.} All experiments in this paper are conducted on a cluster of $16$ machines, each of which is equipped with one NVIDIA TitanX GPU. The machines communicate (\ie, transfer model parameters) with each other via Ethernet. We treat each machine as one client in the federated learning setting. The algorithms are implemented by \texttt{PyTorch}. We run each experiments for $3$ times with different random seeds.

\textbf{Hyper-parameter Choices.} On non-IID CIFAR10 dataset, we fix the mini-batch size per client as $32$. When clients use momentum SGD as the local solver, the momentum factor is $0.9$; when clients use proximal SGD, the proximal parameter $\mu$ is selected from $\{0.0005, 0.001, 0.005, 0.01\}$. It turns out that when $E_i=2$, $\mu=0.005$ is the best and when $E_i(t)\sim \mathcal{U}(2,5)$, $\mu=0.001$ is the best. The client learning rate $\lr$ is tuned from $\{0.005,0.01,0.02,0.05,0.08\}$ for \fedavg with each local solver separately. When using the same local solver, \fednova uses the same client learning rate as \fedavg. Specifically, if the local solver is momentum SGD, then we set $\lr=0.02$. In other cases, $\lr=0.05$ consistently performs the best. On the synthetic dataset, the mini-batch size per client is $20$ and the client learning rate is $0.02$.

\textbf{Training Curves on Non-IID CIFAR10.}
The training curves of \fedavg and \fednova are presented in \Cref{fig:cifar_curves}. Observe that \fednova (red curve) outperforms \fedavg (blue curve) by a large margin. \fednova only requires about half of the total rounds to achieve the same test accuracy as \fedavg. Besides, note that in \cite{wang2020federated}, the test accuracy of \fedavg is higher than ours. This is because the authors of \cite{wang2020federated} let clients to perform $20$ local epochs per round, which is $10$ times more than our setting. In \cite{wang2020federated}, after $100$ communication rounds, \fedavg equivalently runs $100\times 20 = 2000$ epochs.
\begin{figure}[!ht]
	\centering
	\begin{subfigure}{.33\textwidth}
		\centering
		\includegraphics[width=\textwidth]{fix_SGD.pdf}
	\end{subfigure}%
	\hspace{2em}
	\begin{subfigure}{.33\textwidth}
		\centering
		\includegraphics[width=\textwidth]{fix_SGDM.pdf}
	\end{subfigure}%
	
	\begin{subfigure}{.33\textwidth}
		\centering
		\includegraphics[width=\textwidth]{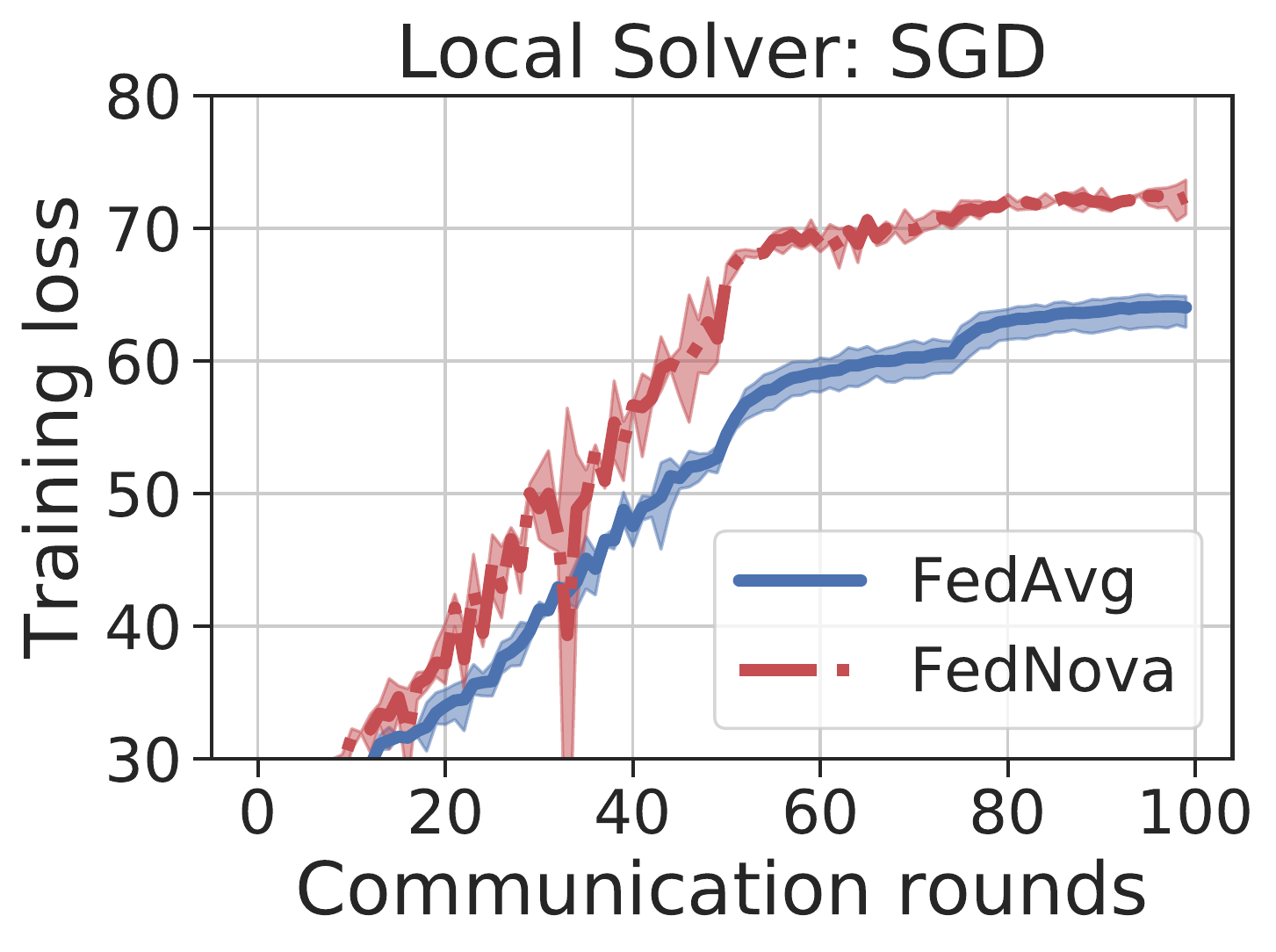}
	\end{subfigure}%
	\hspace{2em}
	\begin{subfigure}{.33\textwidth}
		\centering
		\includegraphics[width=\textwidth]{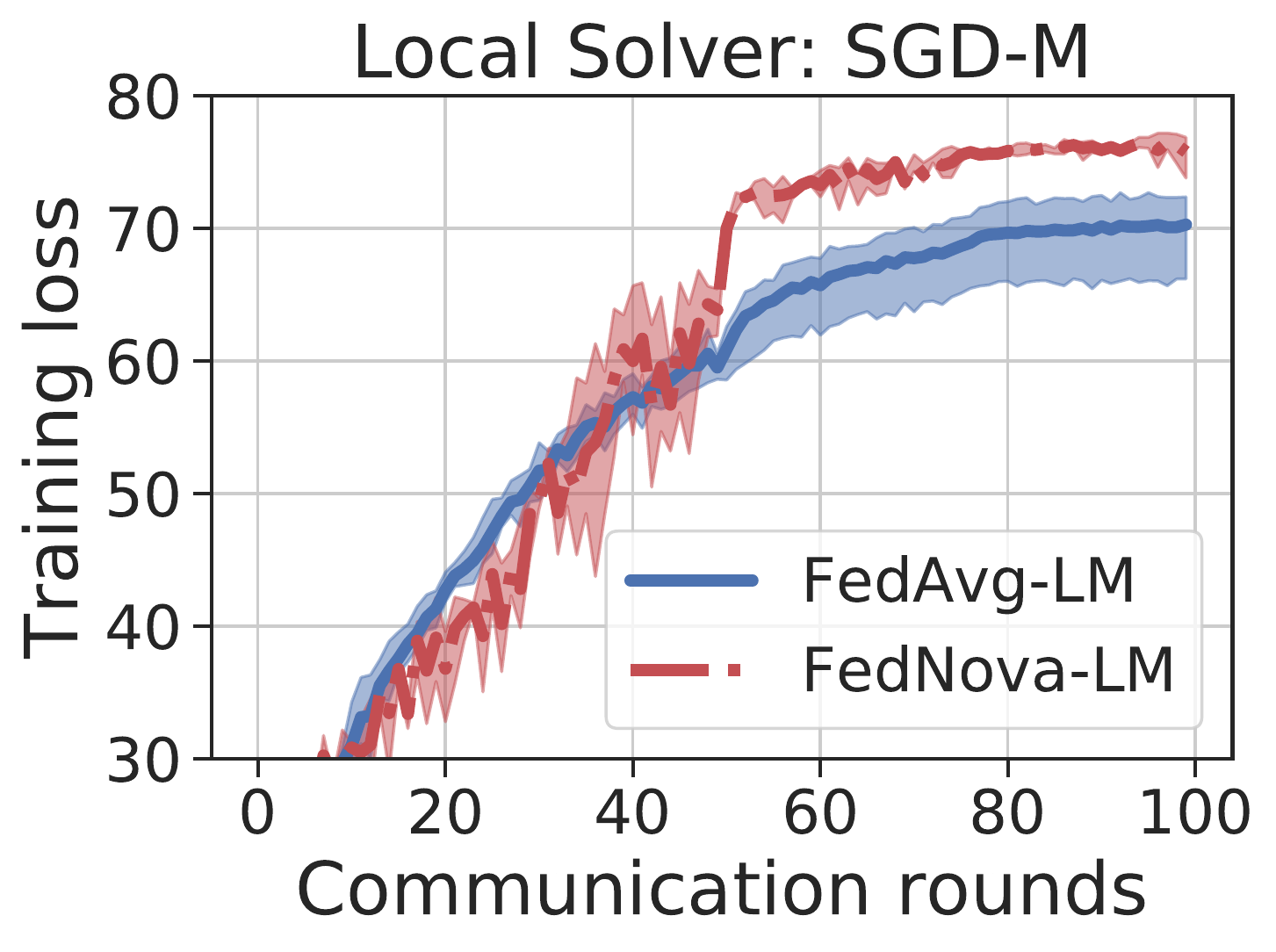}
	\end{subfigure}
	\caption{Training curves on non-IID partitioned CIFAR10 dataset. In these curves, the only difference between \fedavg and \fednova is the weights when aggregating normalized gradients. `LM' represents for local momentum. \textbf{\emph{First row}}: All clients perform $E_i=2$ local epochs; \textbf{\emph{Second row}}: All clients perform random and time-varying local epochs $E_i(t) \sim \mathcal{U}(2,5)$.}
	\label{fig:cifar_curves}
	\vspace{-1em}
\end{figure}

\begin{figure}[!ht]
	\centering
	\begin{subfigure}{.33\textwidth}
		\centering
		\includegraphics[width=\textwidth]{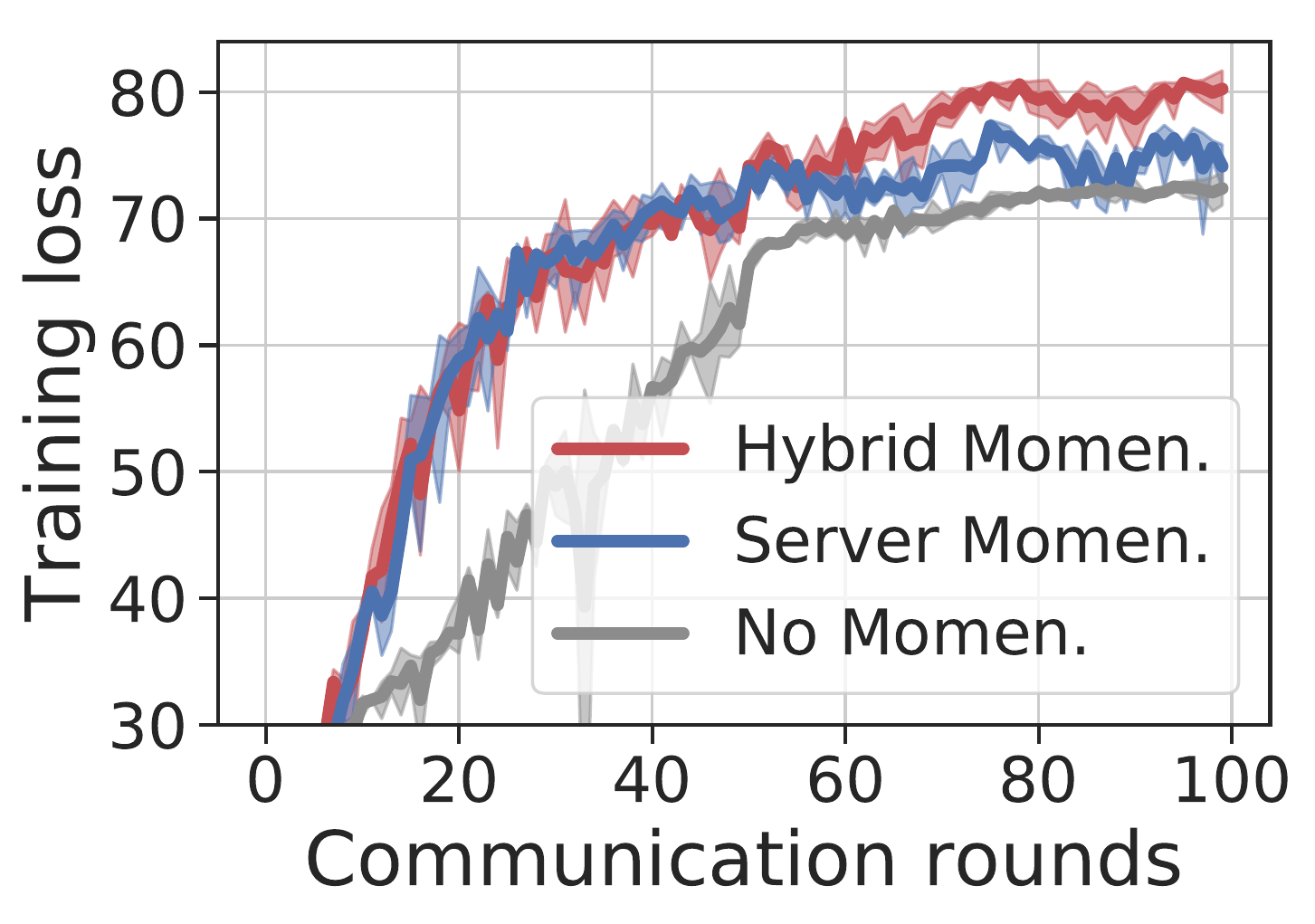}
	\end{subfigure}%
	\hspace{2em}
	\begin{subfigure}{.33\textwidth}
		\centering
		\includegraphics[width=\textwidth]{fedprox_ablation_v2.pdf}
	\end{subfigure}%
	\caption{\textbf{\emph{Left}}: Comparison of different momentum schemes in \fednova. `Hybrid momentum' corresponds to the combination of server momentum and client momentum. \textbf{\emph{Right}}: How \fednova-prox outperform vanilla \fedprox (blue curve). By setting $\cpeff=\sum_{i=1}^\nworkers p_i \cp_i$ instead of its default value, the accuracy of \fedprox can be improved by $5\%$ (see the green curve). By further correcting the aggregated weights, \fednova-prox (red curves) achieves around $10\%$ higher accuracy than \fedprox.}
	\label{fig:my_label}
\end{figure}

\end{document}